\newtheorem{proposition}{Proposition}
\newtheorem{definition}{Definition}
\newtheorem{lemma}{Lemma}
\newtheorem{assumption}{Assumption}
\begin{document}

\title{A Game-Theoretic Approach to Design Secure and Resilient Distributed Support Vector Machines}

\author{Rui~Zhang,~\IEEEmembership{Student Member}
        and~Quanyan~Zhu,~\IEEEmembership{Member}

\thanks{R. Zhang and Q. Zhu are with Department of Electrical and Computer Engineering, New York University, Brooklyn, NY, 11201
E-mail:\{rz885,qz494\}@nyu.edu. 

This research is partially supported by a DHS grant through Critical Infrastructure Resilience Institute (CIRI), grants CNS-1544782 and SES-1541164 from National Science of Foundation (NSF), and grant DE-NE0008571 from the Department of Energy (DOE).

A preliminary version of this work has been presented at the 18th International Conference on Information fusion, Washington, D.C., 2015 \cite{zhang2015secure}.}}

\maketitle

\begin{abstract}
Distributed Support Vector Machines (DSVM) have been developed to solve large-scale classification problems in networked systems with a large number of sensors and control units. However, the systems become more vulnerable as detection and defense are increasingly difficult and expensive. This work aims to develop secure and resilient DSVM algorithms under adversarial environments in which an attacker can manipulate the training data to achieve his objective. We establish a game-theoretic framework to capture the conflicting interests between an adversary and a set of
distributed data processing units. The Nash equilibrium of the game allows predicting the outcome of learning algorithms in adversarial environments, and enhancing the resilience of the machine learning through dynamic distributed learning algorithms.
We prove that the convergence of the distributed algorithm is guaranteed without assumptions on the training data or network topologies. Numerical experiments are conducted to corroborate
the results. We show that network topology plays an important role in the
security of DSVM. Networks with fewer nodes and higher average degrees are more secure. Moreover, a balanced network is found to be less vulnerable to attacks.
\end{abstract}

\begin{IEEEkeywords}
Distributed Support Vector Machines, Security, Resilience, Game Theory, Adversarial Machine Learning, Networked Systems.
\end{IEEEkeywords}

\section{Introduction}
{
Support Vector Machines (SVMs)\cite{suykens1999least} have been widely used for classification and prediction tasks, such as spam detection\cite{sculley2007relaxed}, face recognition\cite{osuna1997training} and temperature prediction\cite{radhika2009atmospheric}. They are supervised learning algorithms that can be used for prediction or detection by training samples with known labels. However, just like many other machine learning algorithms, SVMs are also vulnerable to adversaries who can exploit the systems\cite{barreno2010security}. For example, an SVM-based spam filter will misclassify spam emails after training wrong data created intentionally by attacker\cite{nelson2009misleading,bruckner2009nash,bruckner2011stackelberg}. Moreover, an SVM-based face recognition systems may give wrong authentications to fake images created by attacker \cite{erdogmus2013spoofing}. }

Traditional SVMs are learning algorithms that require a centralized data collection, communication, and storage from multiple sensors \cite{flouri2006training}. The centralized nature of SVMs requires a significant amount of computation for large-scale problems, and makes SVMs unsuitable for online information fusion and processing.
{Despite the fact that various solutions have been introduced to address this challenge, e.g., see \cite{tsang2005core} and \cite{papadonikolakis2012novel}, they have not changed the nature of the SVM algorithm and its architecture. }

Distributed Support Vector Machines (DSVM) algorithms are decentralized SVMs in which multiple nodes or agents process data independently, and communicate training information over a network, see, for example, \cite{do2006classifying,meligy2009grid}. This architecture is attractive for solving large-scale machine learning problems since each node learns from its own data in parallel, and transfers the learning results from one node to the others to achieve the global performance as in centralized algorithms. In addition, DSVM algorithms do not require a fusion center to store all the data. Each node performs its local computation without sharing the content of the data with other nodes, which effectively reduces the cost of memory and the overhead of data communications. 

In spite of the productivity and efficiency of DSVM, the decentralized training system is more vulnerable than its centralized counterpart \cite{zhang2017cdc, zhang2017game}. The DSVM has an increased attack surface since each node in the network can be vulnerable to attacks. An attacker can not only select a few nodes to compromise their individual learning process \cite{kavitha2010security}, but also send misinformation to other nodes to affect the performance of the entire DSVM network \cite{wang2007toward}. In addition, in the case of large-scale problems, it is not always possible to protect a large number of nodes at the same time \cite{anderson2001information}. Hence there will always exist vulnerabilities so that an attacker can find the weakest links or nodes to compromise. 

As a result, it is important to study the security of DSVM under adversarial environments. In this work, we focus on a class of consensus-based DSVM algorithms \cite{forero2010consensus}, in which each node in the network updates its training result based on its own training data and the results from its neighboring nodes. Nodes  achieve the global training results once they reach consensus. One compromised node will play a significant role in affecting not only its own training result but spreading the misinformation to the entire network.

Machine learning algorithms are inherently vulnerable as they are often open-source tools or methods, and security is not the primary concern of designers. An attacker can easily acquire the information regarding the DSVM algorithms and the associated network topologies. {With this knowledge, an attacker can launch a variety of attacks, for example, manipulating the labels of the training samples \cite{frenay2014classification}, and changing the testing data \cite{moreno2012unifying}.} In this work, we consider a class of attacks in which the attacker has the ability to modify the training data. An example of this has been described in \cite{mei2015using}, where an adversary modifies training data so that the learner is misled to produce a prediction model profitable to the adversary. This type of attack represents a challenge for the learner since it is hard to detect data modifications during a training process \cite{rndic2014practical}. We further identify the attacker by his goal, knowledge, and capability. 

\begin{itemize}
\item {\it The Goal of the Attacker}: The attacker aims to destroy the training process of the DSVM learner and increase his classification errors. 

\item {\it The Knowledge of the Attacker}: 
To fully capture the damages caused by the attacker, we assume that the attacker has a complete knowledge of the learner, i.e., the attacker knows the learner's data and algorithm and the network topology. This assumption is under a worst-case scenario by Kerckhoffs's principle: the enemy knows the system \cite{shannon1949communication}.
 
\item {\it The Capability of the Attacker}: The attacker can modify the training data by deleting crafted values to damage the training process of the DSVM learner. 
\end{itemize}

One major goal of this work is to develop a quantitative framework to address this critical issue. In the adversarial environments, the goal of a learner is to minimize global classification errors in a network, while an attacker breaks the training process with the aim of maximizing that errors of classification by modifying the training data. The conflict of interests enables us to establish a nonzero-sum game framework to capture the competitions between the learner and the attacker. The Nash equilibrium of the game enables the prediction of the outcome and yields
optimal response strategies to the adversary behaviors. The game framework also provides a theoretic basis for developing dynamic learning algorithms that will enhance the security and
the resilience of DSVM.  
The major contribution of this work can be summarized as follows:
\begin{enumerate}
\item We capture the attacker's objective and constrained capabilities in a game-theoretic framework and develop a nonzero-sum game to model the strategic interactions between an attacker and a learner with a distributed set of nodes.
\item We fully characterize the Nash equilibrium by showing the strategic equivalence between the original nonzero-sum game and a zero-sum game.
\item We develop secure and resilient distributed algorithms based on alternating direction method of multipliers (ADMoM)\cite{eckstein2012augmented}. Each node communicates with its neighboring nodes and updates its decision strategically in response to adversarial environments.
\item We prove the convergence of the DSVM algorithm. The convergence is guaranteed without any assumptions on the network topology or the form of data. 
\item We demonstrate that network topology plays an important role in resilience to adversary behaviors. Networks with fewer nodes and higher average degrees are shown to be more secure. We also show that a balanced network (i.e., each node has the same number of neighbors) is less vulnerable.
\item We show that nodes with more training samples and fewer neighbors turn out to be more secure for a specified network. One way to defend against attacker's action is to add more training samples, which may increase the training time and require more memory for storage.
\end{enumerate}
\subsection{Related Works}{
A general tool to study machine learning under adversarial environment is game theory\cite{dalvi2004adversarial,kantarciouglu2011classifier,rota2016randomized}. 
In \cite{dalvi2004adversarial}, Dalvi et al. have formulated a game between a cost-sensitive Bayes classifier and cost-sensitive adversary. In \cite{kantarciouglu2011classifier}, Kantarc{\i}o{\u{g}}lu et al. have introduced Stackelberg games to model the interactions between the adversary and the learner, which shows that the game between them is possible to reach a steady state where actions of both players are stabilized. In \cite{rota2016randomized}, Rota et al. have presented a game-theoretic formulation where a learner and an attacker make randomized strategy selections. The major focus of their work is on developing centralized machine learning tools. In our work, we extend the security framework of machine learning algorithms to a distributed framework for networks. Hence, it can be seen that the performance of the distributed machine learning algorithms is also related the security of networks. }

Game theory has also been widely used in network security \cite{lye2005game,michiardi2002game,zhu2010heterogeneous,zhu2011distributed,zhu2013game,zhu2015game,manshaei2013game,zhang2017bi}. In \cite{lye2005game}, Lye et al. have analyzed the interactions of an attacker and an administrator as a two-player stochastic game at a network. In \cite{michiardi2002game}, Michiardi et al. have presented a game-theoretic model in ad hoc networks to capture the interactions between normal nodes and misbehaving nodes.  However, when solving distributed machine learning problems, the features and properties of data processing in each node can cause unanticipated consequences in a network. 

In our previous work \cite{zhang2015secure}
, we have established a preliminary  framework to model the interactions between a consensus-based DSVM learner and an attacker. In this paper, we develop fully distributed algorithms and investigate their convergence, security and resilience properties. Moreover, new sets of experiments are performed to show the influence of network topologies and the number of samples at each node on the resilience of the network.

\subsection{Organization of the Paper}
The rest of this paper is organized as follows. Section \ref{sec:Pre} outlines the design of distributed support vector machines. In Section \ref{sec:DSVMA}, we establish game-theoretic models for the learner and the attacker. Section \ref{sec:ADMoMDSVM} deals with the distributed and dynamic algorithms for the learner and the attacker. Section \ref{sec:Convergence} presents the convergence proof of the algorithm. Section \ref{sec:Num} and Section \ref{Sec:Con} present numerical results and concluding remarks, respectively. Appendices A, B, and C provide the proof of the Propositions 1, 2 and Lemma 1, respectively. 
\subsection{Summary of Notations}
Notations in this paper are summarized as follows. Boldface letters are used for matrices (column vectors); $(\cdot)^T$ denotes matrix and vector transposition;
$(\cdot)^{(t)}$ denotes values at step $t$; $[\cdot]_{vu}$ denotes the $vu$-th entry of a matrix; $diag(\mathbf{X})$ is the diagonal matrix with $\mathbf{X}$ on its main diagonal; $\parallel \cdot \parallel$ is the norm of the matrix or vector; $\mathcal{V}$ denotes the set of nodes in a network; $\mathcal{B}_v$ denotes the set of neighboring nodes of node $v$; $\mathcal{U}$ denotes the action set which is used by the attacker.
\section{PRELIMINARIES}
\label{sec:Pre}
In this section, we present a two-player machine learning game in a distributed network involving a learner and an attacker to capture the strategic interactions between them. The network is modeled by an undirected graph $\mathcal{G(V,E)}$ with $\mathcal{V}:=\lbrace 1,...,V \rbrace$ representing the set of nodes, and $\mathcal{E}$ representing the set of links between nodes. Node $v\in \mathcal{V}$ communicates only with his neighboring nodes $\mathcal{B}_v\subseteq\mathcal{V}$. Note that without loss of generality, graph $\mathcal{G}$ is assumed to be connected; in other words, any two nodes in graph $\mathcal{G}$ are connected by a path. However, nodes in $\mathcal{G}$ do not have to be fully connected, which means that nodes are not required to directly connect to all the other nodes in the network. The network can contain cycles. At every node $v\in \mathcal{V}$, a labelled training set $\mathcal{D}_v:= \lbrace(\mathbf{x}_{vn}, y_{vn}):n=1,...,N_v\rbrace$ of size $N_v$ is available, where $\mathbf{x}_{vn} \in \mathbb{R}^p$ represents a
$p$-dimensional {data}, and they are divided into two groups with labels $y_{vn} \in \{+1,-1\}$. Examples of a network of distributed nodes are illustrated in Fig. \ref{fig:networkexample}(a). 
\begin{figure}[]
\centering
\subfigure[Network example.]{
\includegraphics[width=0.3\textwidth]{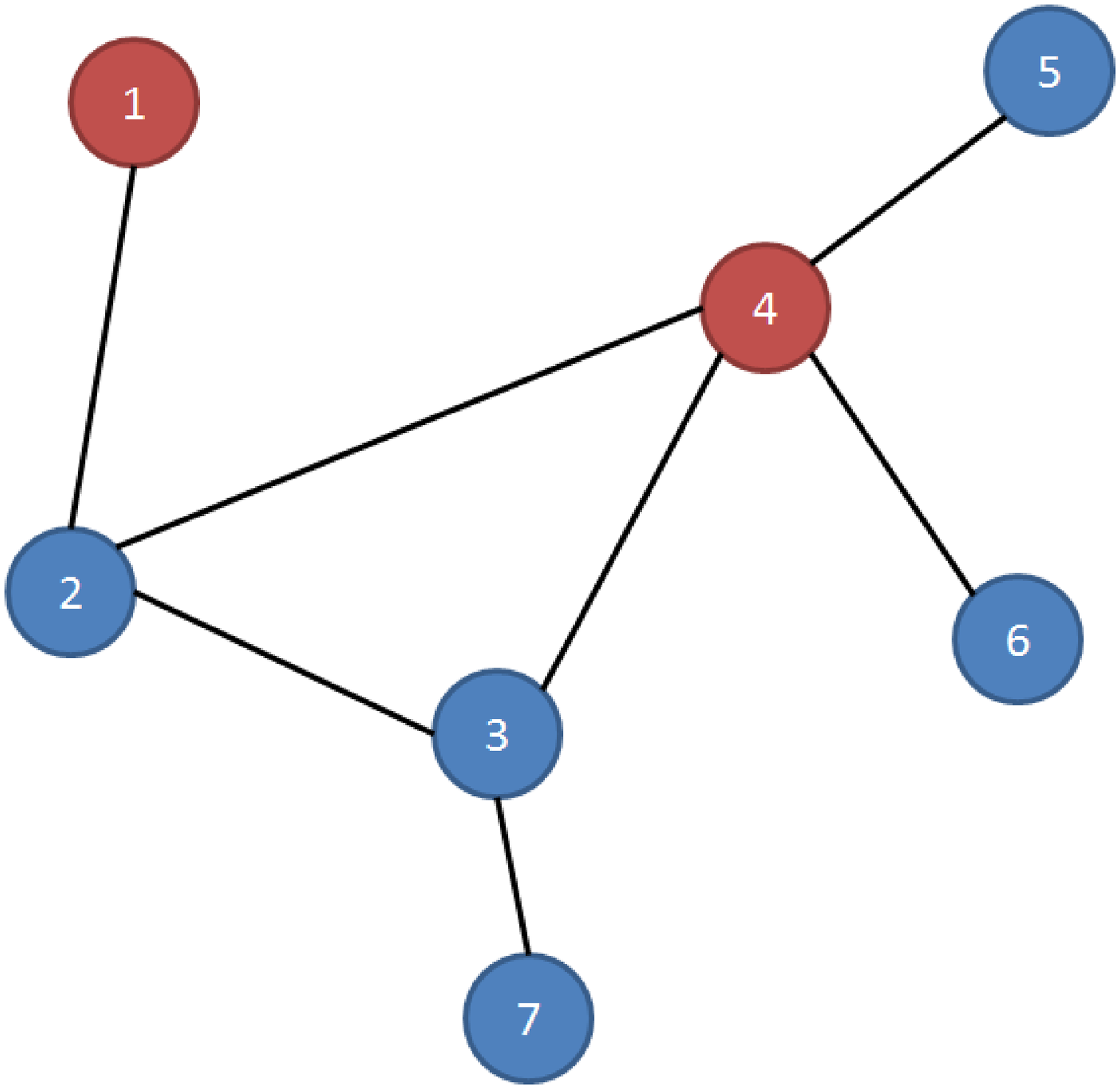}}
\subfigure[SVM at compromised node $1$.]{
\includegraphics[width=0.3\textwidth]{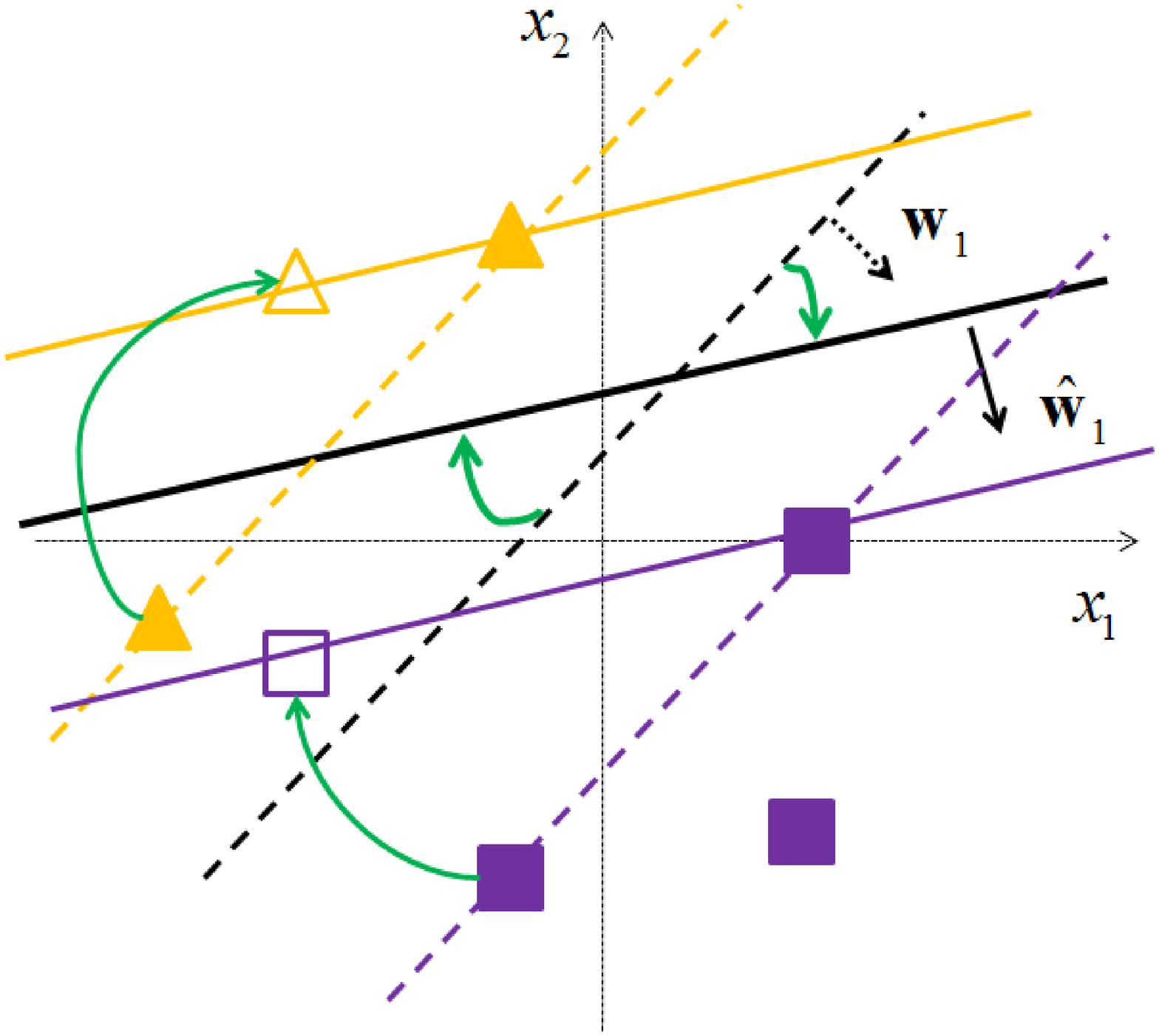}}
\vspace{-2.5mm}
\caption{Network example: There are $7$ nodes in this network as shown in Fig. (a). Each node contains a labelled training set $\mathcal{D}_v:= \lbrace(\mathbf{x}_{vn}, y_{vn}):n=1,...,N_v\rbrace$. Node $4$ can communicate with its 4 neighbors: node $2$, $3$, $5$ and $6$. An attacker can take over node $1$ and $4$. The compromised nodes are marked in red. {In each node, the learner aims to find the best linear discriminant line, for example, the black dotted line shown in (b). In compromised nodes, an attacker modifies the training data which leads to a wrong discriminant line of the learner, for example, the black solid line shown in (b)}.}
\label{fig:networkexample}
\end{figure}

The goal of the learner is to design DSVM algorithms for each node in the network based on its local training data $\mathcal{D}_v$, so that each node has the ability to give new input $\mathbf{x}$ a label of $+1$ or $-1$ without communicating $\mathcal{D}_v$ to other nodes $v^\prime \neq v$. To achieve this, the learner aims to find local maximum-margin linear discriminant functions $ g_v(\mathbf{x}) = \mathbf{x}^T \mathbf{w}_v^{*} + b_v^{*}$ at every node $v\in\mathcal{V}$ with the consensus constraints $\{{{\bf{w}}_v^*} = {{\bf{w}}_u^*},{b_v^*} = {b_u^*}\}_{v\in\mathcal{V},u\in\mathcal{B}_v}$ forcing all the local variables $\{ \mathbf{w}_v^*, b_v^*\}$ to agree across neighboring nodes. Variables $\mathbf{w}_v^*$ and $b_v^*$ of the local discriminant functions $g_v(\mathbf{x})$ can be obtained by solving the following convex optimization problem \cite{forero2010consensus}:
\begin{equation}
\label{eq:DSVM}
\begin{array}{l}
\begin{array}{*{20}{l}}
{\mathop {\min }\limits_{\left\{ {{{\bf{w}}_v},{b_v}} \right\}} \frac{1}{2}\sum\limits_{v \in \mathcal{V}} {{{\left\| {{{\bf{w}}_v}} \right\|^2_2}}} }\\
{\begin{array}{*{20}{c}}
{}
\end{array} + V{C_l}\sum\limits_{v \in \mathcal{V}} {\sum\limits_{n = 1}^{{N_v}} {{{\left[ {1 - {{\rm{y}}_{vn}}({\bf{w}}_v^T{{\bf{x}}_{vn}} + {b_v})} \right]}_ + }} } }
\end{array}\\
\begin{array}{*{20}{c}}
{{\rm{s}}.{\rm{t}}.}&{{{\bf{w}}_v} = {{\bf{w}}_u},}&{{b_v} = {b_u},}&{\forall v \in {\cal V},u \in {B_v}.}
\end{array}
\end{array} 
\end{equation}
{In the above problem, the term ${{{\left[ {1 - {{\rm{y}}_{vn}}({\bf{w}}_v^T{{\bf{x}}_{vn}} + {b_v})} \right]}_ + }}:={\max \{1 - {{\rm{y}}_{vn}}({\bf{w}}_v^T{{\bf{x}}_{vn}} + {b_v}),0\}}$ is the hinge loss function.} It can also be written as slack variable $\xi_{vn}$ with the constraints ${{{\rm{y}}_{vn}}({\bf{w}}_v^T{{\bf{x}}_{vn}} + {b_v}) \ge 1 - {\xi _{vn}}}$ and ${{\xi _{vn}} \ge 0}$, where $\xi_{vn}$ account for non-linearly separable training sets. $C_l$ is a tunable positive scalar for the learner.
\section{Distributed Support Vector Machines with Adversary} 
\label{sec:DSVMA}
Optimization Problem (\ref{eq:DSVM}) is formed by the DSVM learner who seeks to find the maximum-margin linear discriminant function. 
We assume that an attacker has a complete knowledge of the learner's Problem (\ref{eq:DSVM}), and he can modify the value $\mathbf{x}_{vn}$ of the node $v$ into ${\widehat {\bf{x}}_{vn}} = \mathbf{x}_{vn}-\delta_{vn}$, where $\delta_{vn} \in \mathcal{U}_v$, and $\mathcal{U}_v$ is the attacker's action set at node $v$. We use $\mathcal{V}_a=\{1,...,V_a\} $ and $\mathcal{V}_l = \{1,...,V_l\}$ to represent nodes with and without the attacker, respectively. Note that, $V = V_a + V_l$ and $\mathcal{V}=\mathcal{V}_l\cup \mathcal{V}_a$. A node in the network is either under attack or not under attack. The behavior of the learner can be captured by the following optimization problem:
\begin{equation}
\label{eq:aDSVM1}
\begin{array}{l}
\begin{array}{*{20}{l}}
{\mathop {\min }\limits_{\left\{ {{{\bf{w}}_v},{b_v}} \right\}} \frac{1}{2}\sum\limits_{v \in \mathcal{V}} {{{\left\| {{{\bf{w}}_v}} \right\|_2^2}}} }\\
{\begin{array}{*{20}{l}}
{\begin{array}{*{20}{c}}
{}&{}
\end{array} + {V_l}{C_l}\sum\limits_{v \in \mathcal{V}_l} {\sum\limits_{n = 1}^{{N_v}} {{{\left[ {1 - {{\rm{y}}_{vn}}({\bf{w}}_v^T{{\bf{x}}_{vn}} + {b_v})} \right]}_ + }} } }\\
{\begin{array}{*{20}{c}}
{}&{}
\end{array} + {V_a}{C_l}\sum\limits_{v \in \mathcal{V}_a} {\sum\limits_{n = 1}^{{N_v}} {{{\left[ {1 - {{\rm{y}}_{vn}}({\bf{w}}_v^T({{\bf{x}}_{vn}} - {\delta _{vn}}) + {b_v})} \right]}_ + }} } }
\end{array}}
\end{array}\\
\begin{array}{*{20}{c}}
{{\rm{s}}.{\rm{t}}.}&{{{\bf{w}}_v} = {{\bf{w}}_u},}&{{b_v} = {b_u},}&{\forall v \in {\cal V},u \in {{\cal B}_v}.}
\end{array}
\end{array}
\end{equation}
For the learner, the learning process is to find the discriminant function which separates the training data into two classes with less error, and then use the discriminant function to classify testing data. Since the attacker has the ability to change the value of the original data $\mathbf{x}_{vn}\in \mathcal{X}$ into ${\widehat {\bf{x}}_{vn}} \in \widehat{\mathcal{X}}$, the learner will find the discriminant function that separates the data in $\widehat{\mathcal{X}}$ more accurate, rather than the data in $\mathcal{X}$. As a result, when using the discriminant function to classify the testing data $\mathbf{x}\in\mathcal{X}$, it will be prone to be misclassified. 

By minimizing the objective function in Problem (\ref{eq:aDSVM1}), the learner can obtain the optimal variables $\{\mathbf{w}_v^*,b_v^*\}$, which can be used to build up the discriminant function to classify the testing data. The attacker, on the other hand, aims to find an optimal way to modify the data using variables $\{\delta_{vn}\}$ to maximize the classification error of the learner. The behavior of the attacker can thus be captured as follows:
\begin{equation}
\label{eq:aDSVM}
\begin{array}{l}
\begin{array}{*{20}{l}}
{\mathop {\max }\limits_{\left\{ {{\delta _{vn}}} \right\}} \frac{1}{2}\sum\limits_{v \in \mathcal{V}} {{{\left\| {{{\bf{w}}_v}} \right\|_2^2}}} }\\
{\begin{array}{*{20}{c}}
{}&{}
\end{array} + {V_l}{C_l}\sum\limits_{v \in \mathcal{V}_l} {\sum\limits_{n = 1}^{{N_v}} {{{\left[ {1 - {{\rm{y}}_{vn}}({\bf{w}}_v^T{{\bf{x}}_{vn}} + {b_v})} \right]}_ + }} } }\\
{\begin{array}{*{20}{c}}
{}&{}
\end{array} + {V_a}{C_l}\sum\limits_{v \in \mathcal{V}_a} {\sum\limits_{n = 1}^{{N_v}} {{{\left[ {1 - {{\rm{y}}_{vn}}({\bf{w}}_v^T({{\bf{x}}_{vn}} - {\delta _{vn}}) + {b_v})} \right]}_ + }} } }\\
{\begin{array}{*{20}{c}}
{}&{}
\end{array} - {C_a}\sum\limits_{v \in \mathcal{V}_a} {\sum\limits_{n = 1}^{{N_v}} {{{\left\| {{\delta _{vn}}} \right\|}_0}} } }
\end{array}\\
{\rm{s}}{\rm{.t}}{\rm{.   }}\begin{array}{*{20}{c}}
{({\delta _{v1}},...,{\delta _{v{N_v}}}) \in {{\cal U}_v}},&{\forall v \in {{\cal V}_a}}.
\end{array}
\end{array}
\end{equation}
In above problem, the term $ {C_a}\sum_{v \in \mathcal{V}_a} {\sum_{n = 1}^{{N_v}} {{{\left\| {{\delta _{vn}}} \right\|}_0}} } $ represents the cost function for the attacker. {$l_0$ norm is defined as ${\left\| \mathbf{x} \right\|_0} := \# \{i : {x_i} \neq 0\}$, i.e., the total number of nonzero elements in a vector}. Here, we use the $l_0$ norm to denote the number of elements which are changed by the attacker. The objective function with $l_0$-norm captures the fact that the attacker aims to make the largest impact on the learner by changing the least number of elements. $\mathcal{U}_v$ denotes the action set for the attacker. We use the following form of $\mathcal{U}_v$: 
\[{\cal U}_v = \left\{ {\left( {{\delta _{v1}},...,{\delta _{vN_v}}} \right)\left| {\sum\limits_{n = 1}^{N_v} {\left\| {{\delta _{vn}}} \right\|_2^2}  \le {C_{v,\delta} }} \right.} \right\},\] 
which is related to the atomic action set \[\mathcal{U}_{v0} = \left\{ {\delta_v \left| {\left\| \delta_v  \right\|_2^2 \le {C_{v,\delta} }} \right.} \right\}.\]
$C_{v,\delta}$ indicates the bound of the sum of the norm of all the changes at node $v$. A higher $C_{v,\delta}$ indicates that the attacker has a large degree of freedom in changing the value $\mathbf{x}_{vn}$. Thus training these data will lead to a higher risk for the learner. Notice that $C_{v,\delta}$ can vary at different nodes, and we use $C_{\delta}$ to represent the situation when $C_{v,\delta}$ are equal at every node. $\delta_v\in\mathbb{R}^p$ from the atomic action set has the same form with $\delta_{vn}$, but $\delta_v$ and $( \delta _{v1},...,\delta _{vN_v} )$ are bounded by same $C_{v,\delta}$. Furthermore, the atomic action set $\mathcal{U}_{v0}$ has the following properties.

$\begin{array}{l}
\begin{array}{*{20}{c}}
{({\rm P1  })}&{\bf{0}}
\end{array} \in {\mathcal{U}_{v0}};\\
\begin{array}{*{20}{c}}
{({\rm P2  })}&{{\rm{For \ any \ }}{{\bf{w}}_0} \in {\mathbb{R}^p}:}
\end{array}\\
\begin{array}{*{20}{c}}
{}&{}
\end{array}\mathop {\max }\limits_{\delta_v  \in {\mathcal{U}_{v0}}} \left[ {{\bf{w}}_0^T\delta_v } \right] = \mathop {\max }\limits_{\delta '_v \in {\mathcal{U}_{v0}}} \left[ { - {\bf{w}}_0^T\delta'_v} \right] <  + \infty .
\end{array}$

The first property (P1) states that the attacker can choose not to change the value of $\mathbf{x}_{vn}$. Property (P2) states that the atomic action set is bounded and symmetric. Here, ``bounded'' means that the attacker has the limit on the capability of changing $\mathbf{x}_{vn}$. It is reasonable since changing the value significantly will result in the evident detection of the learner. 

Problem (\ref{eq:aDSVM1}) and Problem (\ref{eq:aDSVM}) can constitute a two-person nonzero-sum game between an attacker and a learner. The solution to the game problem is often described by Nash equilibrium, which yields the equilibrium strategies for both players, and predicts the outcome of machine learning in the adversarial environment. By comparing Problem (\ref{eq:aDSVM1}) with Problem (\ref{eq:aDSVM}), we notice that the first three terms of the objective function in Problem (\ref{eq:aDSVM}) are the same as the objective function in Problem (\ref{eq:aDSVM1}). The last term of the objective function in Problem (\ref{eq:aDSVM}) is not related to the decision of the learner when he solves Problem (\ref{eq:aDSVM1}), and thus it can be treated as a constant for the learner. Moreover, both the constraints in Problem (\ref{eq:aDSVM1}) and (\ref{eq:aDSVM}) are uncoupled. As a result, the nonzero-sum game can be reformulated into a strategically equivalent zero-sum game, which takes the minimax or max-min form as follows: 
\begin{equation}
\label{eq:alminmax}
\begin{array}{l}
\begin{array}{*{20}{l}}
{\mathop {\min }\limits_{\left\{ {{{\bf{w}}_v},{b_v}} \right\}} \mathop {\max }\limits_{\left\{ {{\delta _{vn}}} \right\}} K\left( {\left\{ {{{\bf{w}}_v},{b_v}} \right\},\left\{ {{\delta _{vn}}} \right\}} \right) \buildrel \Delta \over = \frac{1}{2}\sum\limits_{v \in \mathcal{V}} {{{\left\| {{{\bf{w}}_v}} \right\|_2^2}}} }\\
{\begin{array}{*{20}{c}}
{}&{}
\end{array} + {V_l}{C_l}\sum\limits_{v \in \mathcal{V}_l} {\sum\limits_{n = 1}^{{N_v}} {{{\left[ {1 - {{\rm{y}}_{vn}}({\bf{w}}_v^T{{\bf{x}}_{vn}} + {b_v})} \right]}_ + }} } }\\
{\begin{array}{*{20}{c}}
{}&{}
\end{array} + {V_a}{C_l}\sum\limits_{v \in \mathcal{V}_a} {\sum\limits_{n = 1}^{{N_v}} {{{\left[ {1 - {{\rm{y}}_{vn}}({\bf{w}}_v^T({{\bf{x}}_{vn}} - {\delta _{vn}}) + {b_v})} \right]}_ + }} } }\\
{\begin{array}{*{20}{c}}
{}&{}
\end{array} - {C_a}\sum\limits_{v \in \mathcal{V}_a} {\sum\limits_{n = 1}^{{N_v}} {{{\left\| {{\delta _{vn}}} \right\|}_0}} } }
\end{array}\\
{\rm{s}}{\rm{.t}}{\rm{.   }}\begin{array}{*{20}{c}}
{\begin{array}{*{20}{l}}
{{{\bf{w}}_v} = {{\bf{w}}_u},{b_v} = {b_u},}\\
{({\delta _{v1}},...,{\delta _{v{N_v}}}) \in {{\cal U}_v},}
\end{array}}&{\begin{array}{*{20}{l}}
{\forall v \in {\cal V},u \in {\mathcal{B}_v};}\\
{\forall v \in {{\cal V}_a}.}
\end{array}}&{\begin{array}{*{20}{l}}
{(\ref{eq:alminmax}a)}\\
{(\ref{eq:alminmax}b)}
\end{array}}
\end{array}
\end{array}
\end{equation}

Note that there are two sets of constraints: (\ref{eq:alminmax}a) only contributes to the minimization part of the problem, while (\ref{eq:alminmax}b) only affects the maximization part. The first term of $K\left( {\left\{ {{{\bf{w}}_v},{b_v}} \right\},\left\{ {{\delta _{vn}}} \right\}} \right)$ is the inverse of the distance of margin. The second term is the error penalty of nodes without attacker.  The third term is the error penalty of nodes with attacker, and the last term is the cost function for the attacker. {On the one hand}, minimizing the objective function captures the trade-off between a larger margin and a small error penalty of the learner, while on the other hand, maximizing the objective function captures the trade-off between a large error penalty and a small cost of the attacker. As a result, solving Problem (\ref{eq:alminmax}) can be understood as finding the saddle point of the zero-sum game between the attacker and the learner.

\begin{definition}
\label{definitino1}
Let $\mathcal{S}_L $ and $ \mathcal{S}_A$ be the action sets for the DSVM learner and the attacker respectively. Notice that here $\mathcal{S}_A = \{ \mathcal{U}_v \}_{v\in \mathcal{V}_a}$. Then, the strategy pair $\left( {\left\{ {{{\bf{w}}_v^*},{b_v^*}} \right\},\left\{ {{\delta _{vn}^*}} \right\}} \right)$ is a saddle-point solution of the zero-sum game defined by the triple ${G_z} := \left\langle {\left\{ {L,A} \right\},\left\{ {{\mathcal{S}_L},{\mathcal{S}_A}} \right\},K} \right\rangle $, if

\[\begin{array}{l}
K\left( {\left\{ {{{\bf{w}}_v^*},{b_v^*}} \right\},\left\{ {{\delta _{vn}}} \right\}} \right) \\ \ \ \ \ \  \leq   K\left( {\left\{ {{{\bf{w}}_v^*},{b_v^*}} \right\},\left\{ {{\delta _{vn}^*}} \right\}} \right) \leq K\left( {\left\{ {{{\bf{w}}_v},{b_v}} \right\},\left\{ {{\delta _{vn}^*}} \right\}} \right),\forall v \in \mathcal{V},
\end{array} \]
where $K$ is the objective function from Problem (\ref{eq:alminmax}).
\end{definition}
 
Based on the property of the action set and atomic action set, Problem (\ref{eq:alminmax}) can be further simplified as stated in the following proposition.

\begin{proposition}
\label{proposition1}
Assume that $\mathcal{U}_v$ is an action set with corresponding atomic action set $\mathcal{U}_{v0}$. Then, Problem (\ref{eq:alminmax}) is equivalent to the following optimization problem:
\begin{equation}
\label{eq:MinMax}
\begin{array}{l}
\begin{array}{*{20}{l}}
{\mathop {\min }\limits_{\left\{ {{{\bf{w}}_v},{b_v},\{\xi_{vn}\}} \right\}} \mathop {\max }\limits_{\{ {\delta _v}\} } \frac{1}{2}\sum\limits_{v \in \mathcal{V}} {{{\left\| {{{\bf{w}}_v}} \right\|_2^2}}}  + V{C_l}\sum\limits_{v \in \mathcal{V}} {\sum\limits_{n = 1}^{{N_v}} {{\xi _{vn}}} } }\\
{\begin{array}{*{20}{c}}
{}&{}
\end{array} \ \ \ \ \ \ \ \ \ \ \ + \sum\limits_{v \in \mathcal{V}_a} {\left( {{V_a}{C_l}{\bf{w}}_v^T{\delta _v} - {C_a}{{\left\| {{\delta _v}} \right\|}_0}} \right)} }
\end{array}\\
{\rm{s}}.{\rm{t}}.\\
\begin{array}{*{20}{c}}
{\begin{array}{*{20}{l}}
{{{\rm{y}}_{vn}}({\bf{w}}_v^T{{\bf{x}}_{vn}} + {b_v}) \ge 1 - {\xi _{vn}},}\\
{{\xi _{vn}} \ge 0,}\\
{{{\bf{w}}_v} = {{\bf{w}}_u},{b_v} = {b_u},}\\
{{\delta _v} \in {\mathcal{U}_{v0}},}
\end{array}}&{\begin{array}{*{20}{l}}
{\forall v \in \mathcal{V},n = 1,...,{N_v};}\\
{\forall v \in \mathcal{V},n = 1,...,{N_v};}\\
{\forall v \in \mathcal{V},u \in {\mathcal{B}_v};}\\
{\forall v \in {\mathcal{V}_a}.}
\end{array}}
\end{array}
\end{array}
\end{equation}
\end{proposition}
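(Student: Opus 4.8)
The plan is to eliminate the inner maximization over the node‑coupled perturbations $\{\delta_{vn}\}$ in Problem~(\ref{eq:alminmax}), reducing it node by node to a maximization over a single atomic perturbation $\delta_v\in\mathcal{U}_{v0}$, and then to recast the remaining hinge losses with slack variables. I would first observe that constraint~(\ref{eq:alminmax}a) restricts only the learner's variables $\{\mathbf{w}_v,b_v\}$ while (\ref{eq:alminmax}b) restricts only the attacker's variables $\{\delta_{vn}\}$, so (\ref{eq:alminmax}) is literally a minimization over learner‑feasible $\{\mathbf{w}_v,b_v\}$ of a maximization over attacker‑feasible $\{\delta_{vn}\}$. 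Fixing a learner‑feasible $\{\mathbf{w}_v,b_v\}$, only the last two sums of $K$ depend on $\delta$; since $(\delta_{v1},\dots,\delta_{vN_v})\in\mathcal{U}_v$ couples the perturbations within a node but not across nodes, and $K$ is additively separable over nodes, the inner maximum splits into independent per‑node problems over $v\in\mathcal{V}_a$.

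For a fixed attacked node $v$ I would rewrite the perturbed hinge loss as
\[
\bigl[1-\mathrm{y}_{vn}(\mathbf{w}_v^T(\mathbf{x}_{vn}-\delta_{vn})+b_v)\bigr]_+=\bigl[\bigl(1-\mathrm{y}_{vn}(\mathbf{w}_v^T\mathbf{x}_{vn}+b_v)\bigr)+\mathrm{y}_{vn}\mathbf{w}_v^T\delta_{vn}\bigr]_+,
\]
so that the node's contribution depends on $\delta_{vn}$ only through the scalar $\mathrm{y}_{vn}\mathbf{w}_v^T\delta_{vn}$ and through $\|\delta_{vn}\|_0$. The key claim to establish is that the worst case over $(\delta_{v1},\dots,\delta_{vN_v})\in\mathcal{U}_v$ is attained by concentrating the whole budget $C_{v,\delta}$ on a single sample, so that the node's inner value equals the \emph{nominal} hinge losses $V_aC_l\sum_{n}[1-\mathrm{y}_{vn}(\mathbf{w}_v^T\mathbf{x}_{vn}+b_v)]_+$ plus the atomic quantity $\max_{\delta_v\in\mathcal{U}_{v0}}\bigl(V_aC_l\,\mathbf{w}_v^T\delta_v-C_a\|\delta_v\|_0\bigr)$. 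For the ``$\ge$'' inequality I exhibit such a perturbation: load the whole budget onto one sample, take $\delta_{vn}$ equal to the atomic maximizer, and use property~(P2) to flip its sign so that $\mathrm{y}_{vn}\mathbf{w}_v^T\delta_{vn}=\mathbf{w}_v^T\delta_v\ge0$ (nonnegativity of the atomic quantity follows from $\mathbf{0}\in\mathcal{U}_{v0}$, property~(P1)); because the hinge is piecewise‑affine, after concentrating on an appropriately chosen sample the perturbed term is affine in $\delta_v$ over the budget set and the value is matched. For the ``$\le$'' inequality I use subadditivity of $[\cdot]_+$ to detach the nominal losses, then bound the residual $V_aC_l\sum_n[\mathrm{y}_{vn}\mathbf{w}_v^T\delta_{vn}]_+-C_a\sum_n\|\delta_{vn}\|_0$ by the atomic optimum, using that every individual $\delta_{vn}$ already lies in $\mathcal{U}_{v0}$ since $\|\delta_{vn}\|_2^2\le\sum_m\|\delta_{vm}\|_2^2\le C_{v,\delta}$, together with the fact that the quadratic coupling $\sum_n\|\delta_{vn}\|_2^2\le C_{v,\delta}$ and the per‑vector cost $C_a\|\delta_{vn}\|_0$ make spreading the budget over several samples weakly dominated.

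Substituting the per‑node value back into $K$, every hinge term now appears in its nominal form, which I replace by a slack variable $\xi_{vn}$ with the constraints $\mathrm{y}_{vn}(\mathbf{w}_v^T\mathbf{x}_{vn}+b_v)\ge1-\xi_{vn}$ and $\xi_{vn}\ge0$; this is exact because minimizing the (positively weighted) slack sum over these constraints forces each $\xi_{vn}$ down to $[1-\mathrm{y}_{vn}(\mathbf{w}_v^T\mathbf{x}_{vn}+b_v)]_+$. The atomic terms, whose maximization over $\{\delta_v\}$ with $\delta_v\in\mathcal{U}_{v0}$ involves no learner variable, are gathered inside a common $\max_{\{\delta_v\}}$, and the consensus constraints are carried along unchanged; the result is exactly Problem~(\ref{eq:MinMax}).

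The step I expect to be the main obstacle is the concentration argument in the second paragraph --- proving that the attacker gains nothing by distributing its $l_2$‑budget across several training samples rather than loading it all onto one. This is where the tension between the convex hinge losses (which, being \emph{maximized}, favor extreme allocations), the nonconvex $l_0$ penalty (a fixed cost per perturbed sample, which discourages spreading), and the quadratic budget coupling must be reconciled, and where properties~(P1)--(P2) of the atomic action set carry the weight; the remaining steps are essentially bookkeeping.
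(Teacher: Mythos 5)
Your overall strategy---decompose node by node, then sandwich the inner maximum between a ``concentrated'' perturbation (lower bound) and a bound that collapses any spread allocation back onto a single atomic perturbation (upper bound)---is the same as the paper's, which instantiates the sublinear-aggregated-action-set argument of Xu et al. with $\mathcal{U}_v^-\subseteq\mathcal{U}_v\subseteq\mathcal{U}_v^+$. But both of your pivotal inequalities are left as assertions exactly where the work lies. For the ``$\ge$'' direction, after concentrating the budget on one sample you need $[h+c]_+\ge[h]_+ +c$ with $h=1-\mathrm{y}_{vn}(\mathbf{w}_v^T\mathbf{x}_{vn}+b_v)$ and $c=\mathrm{y}_{vn}\mathbf{w}_v^T\delta_{vn}\ge0$; this holds with equality when $h\ge0$ but fails strictly when $h<0$, so your ``appropriately chosen sample'' must be one whose hinge loss is already positive. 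The existence of such a sample is not automatic---the paper has to invoke non-separability of the training data at node $v$ to produce a $t_v$ with $\mathrm{y}_{t_v}(\mathbf{w}_v^T\mathbf{x}_{t_v}+b_v)<0$---and you neither state the selection criterion nor justify that it can be met.

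The ``$\le$'' direction has the more serious gap. After subadditivity you must bound $V_aC_l\sum_n[\mathrm{y}_{vn}\mathbf{w}_v^T\delta_{vn}]_+-C_a\sum_n\|\delta_{vn}\|_0$ by a \emph{single} copy of the atomic optimum, but the observation that each individual $\delta_{vn}$ lies in $\mathcal{U}_{v0}$ only yields $N_v$ copies of it, and the claim that ``spreading is weakly dominated'' is false as stated: under the budget $\sum_n\|\delta_{vn}\|_2^2\le C_{v,\delta}$, splitting the atomic maximizer $\delta$ into two equal copies $\delta/\sqrt2$ on two samples is feasible and gives total linear gain $\sqrt2\,\mathbf{w}_v^T\delta$, strictly larger than the single-atom gain, so the linear term genuinely rewards spreading and only the $l_0$ cost pushes the other way. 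The paper never compares raw allocations; it embeds $\mathcal{U}_v$ into $\mathcal{U}_v^+$, whose elements are of the form $(\alpha_{v1}\widehat\delta_{v1},\dots,\alpha_{vN_v}\widehat\delta_{vN_v})$ with $\sum_n\alpha_{vn}=1$, $\alpha_{vn}\ge0$, $\widehat\delta_{vn}\in\mathcal{U}_{v0}$, so that the spread objective becomes a convex combination of atomic values and is therefore bounded by their maximum. That convex-combination decomposition is the missing ingredient in your argument (and checking that the $\ell_2^2$-budgeted $\mathcal{U}_v$ actually admits it is itself a nontrivial step); without it or some substitute, your upper bound does not close.
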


\begin{proof} See Appendix A. \end{proof}

In Problem (\ref{eq:alminmax}), the third term of function $K\left( {\left\{ {{{\bf{w}}_v},{b_v}} \right\},\left\{ {{\delta _{vn}}} \right\}} \right)$ is the sum of hinge loss functions of the nodes under attack. This term is affected by the decision variables of both players. However, Problem (\ref{eq:MinMax}) transforms that into hinge loss functions without attacker's action $\delta_{vn}$ and a coupled multiplication of $\mathbf{w}_v$ and $\delta_v$. Notice that $\delta_v$ here can be seen as the combination of all the $\delta_{vn}$ in node $v$. In this way, the only coupled term is $V_a C_l \mathbf{w}_v^T \delta_v$, which is linear in the decision variables of the attacker and the learner respectively. 
\section{ADMoM-DSVM and Distributed Algorithm}
\label{sec:ADMoMDSVM}
In the previous section, we have combined Problem (\ref{eq:aDSVM1}) for the learner with Problem (\ref{eq:aDSVM}) for the attacker into one minimax Problem (\ref{eq:alminmax}), and showed its equivalence to Problem (\ref{eq:MinMax}). In this section, we develop iterative algorithms to find equilibrium solutions to Problem (\ref{eq:MinMax}). 

Firstly, we define $\mathbf{r}_v:=[{\bf{w}}_v^T,b_v]^T$, the augmented matrix $\mathbf{X}_v:=[(\mathbf{x}_{v1},...,\mathbf{x}_{vN_v})^T,\mathbf{1}_v]$, the diagonal label matrix $\mathbf{Y}_v:=diag([y_{v1},...,y_{vN_v}])$, and the vector of slack variables $\xi_v:=[\xi_{v1},....,\xi_{vN_v}]^T$. {With these definitions, it follows readily that ${\bf{w}}_v=\widehat{\mathbf{I}}_{p \times(p+1)}\mathbf{r}_v$, where $\widehat{\mathbf{I}}_{p \times(p+1)}=[{\mathbf{I}}_{p\times p},\mathbf{0}_{p\times  1}]$ is a $p \times (p+1)$ matrix with its first $p$ columns being an identity matrix, and its $(p+1)$ column being a zero vector. We also relax the $l_0$ norm to $l_1$ norm to represent the cost function of the attacker.} Thus, Problem (\ref{eq:MinMax}) can be rewritten as
\begin{equation}
\label{eq:MinMaxMatrix}
\begin{array}{*{20}{l}}
{\begin{array}{*{20}{l}}
{\mathop {\min }\limits_{\left\{ {{{\bf{r}}_v},{\xi _v},{\omega _{vu}}} \right\}} \mathop {\max }\limits_{\{ {\delta _v}\} } \frac{1}{2}\sum\limits_{v \in {\cal V}} {{\bf{r}}_v^T{\Pi _{p + 1}}{{\bf{r}}_v}}  + V{C_l}\sum\limits_{v \in {\cal V}} {{\bf{1}}_v^T{\xi _v}} }\\
{\begin{array}{*{20}{c}}
{}&{\begin{array}{*{20}{c}}
{}&{}
\end{array}}
\end{array} + \sum\limits_{v \in {{\cal V}_a}} {\left( {{V_a}{C_l}{\bf{r}}_v^T{\widehat{\bf{I}}_{p\times(p + 1)}^T} {\delta _v} - {C_a}{{\left\| {{\delta _v}} \right\|}_{{1}}}} \right)} }
\end{array}}\\
{{\rm{s}}.{\rm{t}}.\begin{array}{*{20}{l}}
{\begin{array}{*{20}{l}}
{{{\bf{Y}}_v}{{\bf{X}}_v}{{\bf{r}}_v} \ge {{\bf{1}}_v} - {{\bf{\xi }}_v},}\\
{{{\bf{\xi }}_v} \ge {{\bf{0}}_v},}\\
{{{\bf{r}}_v} = {\omega _{vu}},{\omega _{vu}} = {{\bf{r}}_u},}\\
{{\delta _v} \in {{\cal U}_{v0}},}
\end{array}}&{\begin{array}{*{20}{l}}
{\forall v \in \mathcal{V};}\\
{\forall v \in \mathcal{V};}\\
{\forall v \in \mathcal{V},\forall u \in {\mathcal{B}_v};}\\
{\forall v \in {\mathcal{V}_a}.}
\end{array}\begin{array}{*{20}{c}}
{}&{\begin{array}{*{20}{c}}
{(\ref{eq:MinMaxMatrix}a)}\\
{(\ref{eq:MinMaxMatrix}b)}\\
{(\ref{eq:MinMaxMatrix}c)}\\
{(\ref{eq:MinMaxMatrix}d)}
\end{array}}
\end{array}}
\end{array}}
\end{array}
\end{equation}
{Note that $\Pi_{p+1} = \widehat{\mathbf{I}}_{p \times(p+1)}^T\widehat{\mathbf{I}}_{p \times(p+1)}$ is a $(p+1)\times(p+1)$ identity matrix with its $(p+1,p+1)$-st entry being $0$}. $\omega_{vu}$ is used to decompose the decision variable $\mathbf{r}_v$ to its neighbors $\mathbf{r}_u$, where $u \in \mathcal{B}_v$. Problem (\ref{eq:MinMaxMatrix}) is a minimax problem with matrix form coming from Problem (\ref{eq:alminmax}). To solve Problem (\ref{eq:MinMaxMatrix}), we first prove that the minimax problem is equivalent to the max-min problem, then we use the best response dynamics for the min-problem and max-problem separately.

\begin{proposition} 
\label{proposition2}
Let $K'(\{\mathbf{r}_v,\xi_{v}\},\{\delta_v\})$ represent the objective function in Problem (\ref{eq:MinMaxMatrix}), the minimax problem 
\[\begin{array}{l}
\mathop {\min }\limits_{\left\{ {{{\bf{r}}_v},{\xi _v}} \right\}} \mathop {\max }\limits_{\left\{ {{\delta _v}} \right\}} K'(\{ {{\bf{r}}_v},{\xi _v}\} ,\{ {\delta _v}\} )\\
\begin{array}{*{20}{c}}
{{\rm{s}}{\rm{.t}}{\rm{.}}}&{{\rm{(\ref{eq:MinMaxMatrix}a),(\ref{eq:MinMaxMatrix}b),(\ref{eq:MinMaxMatrix}c),(\ref{eq:MinMaxMatrix}d).}}}
\end{array}
\end{array}\]
yields the same saddle-point equilibrium as the max-min problem
\[\begin{array}{l}
\mathop {\max }\limits_{\left\{ {{\delta _v}} \right\}} \mathop {\min }\limits_{\left\{ {{{\bf{r}}_v},{\xi _v}} \right\}} K'(\{ {{\bf{r}}_v},{\xi _v}\} ,\{ {\delta _v}\} )\\
\begin{array}{*{20}{c}}
{{\rm{s}}{\rm{.t}}{\rm{.}}}&{{\rm{ (\ref{eq:MinMaxMatrix}a),(\ref{eq:MinMaxMatrix}b),(\ref{eq:MinMaxMatrix}c),(\ref{eq:MinMaxMatrix}d). }}}
\end{array}
\end{array}\]
{
Moreover, there exists an equilibrium of the minimax or max-min Problem (\ref{eq:MinMaxMatrix}), but the equilibrium is not necessarily unique}.\end{proposition}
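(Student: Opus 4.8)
The plan is to read Problem (\ref{eq:MinMaxMatrix}) as a convex--concave saddle-point problem: first equate the minimax and max--min values by a minimax theorem, then show separately that both outer extrema are attained so that a saddle point actually exists, and finally observe that strict convexity (resp.\ concavity) is absent, so uniqueness cannot be expected. First I would record the structure. Writing $K'$ as in the statement, $K'$ is convex and lower semicontinuous in the learner's block $(\{\mathbf{r}_v\},\{\xi_v\},\{\omega_{vu}\})$: $\tfrac12\sum_{v\in\mathcal{V}}\mathbf{r}_v^T\Pi_{p+1}\mathbf{r}_v$ is a positive semidefinite quadratic, $VC_l\sum_v\mathbf{1}_v^T\xi_v$ is linear, the coupling $\sum_{v\in\mathcal{V}_a}V_aC_l\,\mathbf{r}_v^T\widehat{\mathbf{I}}_{p\times(p+1)}^T\delta_v$ is linear in $\mathbf{r}_v$, and the constraints (\ref{eq:MinMaxMatrix}a)--(\ref{eq:MinMaxMatrix}c) are affine and cut out a nonempty convex polyhedron (e.g.\ $\mathbf{r}_v=\mathbf{0}$, $\omega_{vu}=\mathbf{0}$, $\xi_v=\mathbf{1}_v$). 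Dually, $K'$ is concave and upper semicontinuous in $\{\delta_v\}$, since each summand $V_aC_l\,\mathbf{r}_v^T\widehat{\mathbf{I}}_{p\times(p+1)}^T\delta_v-C_a\|\delta_v\|_1$ is affine minus a norm, and by property (P2) each $\mathcal{U}_{v0}$ is nonempty, convex and compact. Because the attacker's feasible set $\prod_{v\in\mathcal{V}_a}\mathcal{U}_{v0}$ is convex and \emph{compact} while the learner's is convex, Sion's minimax theorem (equivalently, a standard convex--concave minimax result) equates the two values, which together with the attainment established next yields
\[\min_{\{\mathbf{r}_v,\xi_v\}}\ \max_{\{\delta_v\}}\ K'(\{\mathbf{r}_v,\xi_v\},\{\delta_v\})\;=\;\max_{\{\delta_v\}}\ \min_{\{\mathbf{r}_v,\xi_v\}}\ K'(\{\mathbf{r}_v,\xi_v\},\{\delta_v\})=:v^\star.\]

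Next I would show both outer extrema are attained and glue them into a saddle point. For the max--min form, $\psi(\{\delta_v\}):=\inf_{\{\mathbf{r}_v,\xi_v\}}K'$ is finite --- bounded above by any feasible evaluation, and bounded below because for each $v\in\mathcal{V}_a$ completing the square gives $\tfrac12\|\mathbf{w}_v\|^2+V_aC_l\,\mathbf{w}_v^T\delta_v\ge-\tfrac12 V_a^2C_l^2 C_{v,\delta}$ while the $\xi_v$-term is nonnegative --- and it is upper semicontinuous as an infimum of maps continuous in $\{\delta_v\}$; hence $\psi$ attains its maximum on the compact set $\prod_{v\in\mathcal{V}_a}\mathcal{U}_{v0}$ at some $\{\delta_v^\star\}$. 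For the minimax form, $\phi(\{\mathbf{r}_v,\xi_v\}):=\max_{\{\delta_v\}}K'$ is finite, convex and lower semicontinuous; and since $\mathbf{0}\in\mathcal{U}_{v0}$ by (P1), taking $\delta_v=\mathbf{0}$ gives $\phi(\{\mathbf{r}_v,\xi_v\})\ge\tfrac12\sum_v\mathbf{r}_v^T\Pi_{p+1}\mathbf{r}_v+VC_l\sum_v\mathbf{1}_v^T\xi_v$, which bounds $\{\mathbf{w}_v\}$ and $\{\xi_v\}$ along any minimizing sequence, after which the hinge constraints (\ref{eq:MinMaxMatrix}a)--(\ref{eq:MinMaxMatrix}b) and the consensus (\ref{eq:MinMaxMatrix}c) bound the biases $\{b_v\}$ as well (using the mild nondegeneracy condition that both labels occur in the data); thus $\phi$ has nonempty bounded sublevel sets and attains its minimum at some $(\{\mathbf{r}_v^\star\},\{\xi_v^\star\})$. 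Finally I would invoke the elementary fact that when $\min_x\max_yK'=\max_y\min_xK'$ and the outer $\min$ and outer $\max$ are attained at $x^\star$ and $y^\star$, the pair $(x^\star,y^\star)$ is a saddle point: for all $x,y$, $K'(x^\star,y)\le\max_{y'}K'(x^\star,y')=\min_{x'}\max_{y'}K'(x',y')=\max_{y'}\inf_{x'}K'(x',y')=\inf_{x'}K'(x',y^\star)\le K'(x,y^\star)$, and specializing $x\mapsto x^\star$ and $y\mapsto y^\star$ recovers the two inequalities of Definition \ref{definitino1}. Since ``being a saddle point'' does not depend on the order of optimization, the minimax and max--min problems share the same (nonempty) equilibrium set.

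For non-uniqueness it is enough to produce one instance. I would stress that strict convexity fails on the learner's side --- $\Pi_{p+1}$ is only positive semidefinite and annihilates the bias coordinate $b_v$, while the $\xi_v$-term and the hinge constraints are merely piecewise linear --- and strict concavity fails on the attacker's side because $\|\delta_v\|_1$ is piecewise linear; hence the (convex) set of saddle points is generically not a singleton. A concrete witness is a tiny network (one or two nodes) whose training data admit a whole interval of optimal biases $b_v$ (equivalently, a non-singleton optimal face in $\xi_v$), or for which the best response $\delta_v^\star$ lies on a flat face of the ball $\mathcal{U}_{v0}$; exhibiting such an example completes the ``not necessarily unique'' assertion.

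The convex--concave bookkeeping and the appeal to Sion's theorem are routine. The step I expect to be the real obstacle is the attainment of the outer minimum, and within it the coercivity of $\phi$ in the \emph{bias} directions $b_v$: since the regularizer does not penalize $b_v$, the bound on $b_v$ must be extracted from the hinge-loss feasibility constraints together with the consensus coupling, and it is exactly here that the ``both labels present'' hypothesis is needed and the argument is least mechanical.
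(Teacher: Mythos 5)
Your proposal is correct and follows the same basic route as the paper's Appendix~B: verify that $K'$ is convex (hence quasi-convex) in the learner's block and concave (hence quasi-concave) in $\{\delta_v\}$ over convex feasible sets, invoke a convex--concave minimax theorem (you cite Sion, the paper cites Neumann -- interchangeable here since $\prod_{v\in\mathcal{V}_a}\mathcal{U}_{v0}$ is compact and convex), and derive non-uniqueness from the failure of strict concavity of $-C_a\|\delta_v\|_1$. Where you genuinely go beyond the paper is in the existence part: the paper simply asserts existence of an equilibrium by citing a standard reference, whereas you separately establish attainment of the outer maximum (compactness plus upper semicontinuity of $\psi$) and of the outer minimum (coercivity of $\phi$), and then glue the two maximizer/minimizer into a saddle point via the standard interchange lemma. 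This extra care is not cosmetic: because $\Pi_{p+1}$ annihilates the bias coordinate, $\phi$ is not obviously coercive in $b_v$, and your observation that boundedness of minimizing sequences in the bias direction must be extracted from the hinge constraints -- and requires a mild nondegeneracy condition such as both labels being present at some node -- identifies a hypothesis that the paper's own proof implicitly assumes but never states. The trade-off is that your argument is longer and introduces this extra (if mild) assumption, while the paper's version is shorter but leaves the attainment of the infimum unjustified.
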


\begin{proof}
See Appendix B.
\end{proof} 

Proposition 2 illustrates that the minimax problem is equivalent to the max-min problem, and thus we can construct the best response dynamics for the min-problem and max-problem separately when solving Problem (\ref{eq:MinMaxMatrix}). The min-problem and max-problem are archived by fixing $\{\mathbf{r}_v,\xi_v\}$ and $\{\delta_{v}\}$, respectively. We will also show that both the min-problem and the max-problem can be solved in a distributed way.
\subsection{Max-problem for fixed $\{\mathbf{r}_v^*,\xi_v^*\}$}
For fixed $\{\mathbf{r}_v^*,\xi_v^*\}$, the first two terms of the objective function and the first three constraints in Problem (\ref{eq:MinMaxMatrix}) can be ignored as they are not related to the max-problem. We have
\begin{equation}
\label{eq:Max}
\begin{array}{l}
\mathop {\max }\limits_{\{\delta _v\}} \sum\limits_{v \in \mathcal{V}_a} {\left( {{V_a}{C_l}{\bf{r}_v^*}^T{\widehat{\bf{I}}_{p\times(p + 1)}^T}{\delta _v}} - {C_a}\left\| {{\delta _v}} \right\|_{{1}}\right)} \\
\begin{array}{*{20}{c}}
{{\rm{s}}{\rm{.t}}{\rm{.}}}&{{\delta _v} \in {{\cal U}_{v0}}},&{\forall v \in {\cal V}_a}.
\end{array}
\end{array}
\end{equation}
Note that $\delta_v$ is independent in the Problem (\ref{eq:Max}), and thus we can separate Problem (\ref{eq:Max}) into $V_a$ sub-max-problems solving which is equivalent to solving the global max-problem. {We have relaxed the $l_0$ norm to $l_1$ norm to represent the cost function of the attacker.} By writing the equivalent form of the $l_1$-norm optimization, we arrive at the following problem
\begin{equation}
\label{eq:MaxSol}
\begin{array}{l}
\mathop {\max }\limits_{\left\{ {{\delta _v},{s_v}} \right\}} {V_a}{C_l}{\bf{r}_v^*}^T{\widehat{\bf{I}}_{p\times(p + 1)}^T}{\delta _v} - {{\bf{1}}^T}{s_v}\\
{\rm{s}}.{\rm{t}}.{\rm{   }}\begin{array}{*{20}{c}}
{\begin{array}{*{20}{c}}
{{C_a}{\delta _v} \le {s_v},}\\
{{C_a}{\delta _v} \ge  - {s_v},}\\
{{\delta _v} \in {{\cal U}_{v0}},}
\end{array}}&{\begin{array}{*{20}{c}}
{\forall v \in {\cal V}_a;}\\
{\forall v \in {\cal V}_a;}\\
{\forall v \in {\cal V}_a.}
\end{array}}
\end{array}
\end{array}
\end{equation}
Problem (\ref{eq:MaxSol}) is a convex optimization problem, the objective function and the first two constraints are linear while the third constraint is convex. Note that each node can achieve their own $\delta_v$ without transmitting information to other nodes. The global Max-Problem (\ref{eq:Max}) now is solved in a distributed fashion using $V_a$ Sub-Max-Problems (\ref{eq:MaxSol}).
\subsection{Min-problem for fixed $\{\delta_v^*\}$}
For fixed $\{\delta_v^*\}$, we have
\begin{equation}
\label{eq:Min}
\begin{array}{*{20}{c}}
{\begin{array}{*{20}{l}}
{\mathop {\min }\limits_{\left\{ {{{\bf{r}}_v},{\omega _{vu}},{{\bf{\xi }}_v}} \right\}} \frac{1}{2}\sum\limits_{v \in \mathcal{V}} {{\bf{r}}_v^T{\Pi _{p + 1}}{{\bf{r}}_v}} }\\
{\begin{array}{*{20}{c}}
{}&{}
\end{array} + {V_a}{C_l}\sum\limits_{v \in \mathcal{V}_a} {{\bf{r}}_v^T{\widehat{\bf{I}}_{p\times(p + 1)}^T}\delta _v^*}  + V{C_l}\sum\limits_{v \in \mathcal{V}} {{{\bf{1}}_v^T{{\bf{\xi }}_v}} } }
\end{array}}\\
{{\rm{s}}.{\rm{t}}.\begin{array}{*{20}{c}}
{\begin{array}{*{20}{c}}
{{{\bf{Y}}_v}{{\bf{X}}_v}{{\bf{r}}_v} \ge {{\bf{1}}_v} - {{\bf{\xi }}_v}},&{\forall v \in \mathcal{V};} &{(\ref{eq:Min}a)}\\
{{{\bf{\xi }}_v} \ge {{\bf{0}}_v}},&{\forall v \in \mathcal{V};} &{(\ref{eq:Min}b)}\\
{{{\bf{r}}_v} = {\omega _{vu}},{\omega _{vu}} = {{\bf{r}}_u}},&{\forall v \in \mathcal{V},\forall u \in {\mathcal{B}_u} .} &{(\ref{eq:Min}c)}
\end{array}}
\end{array}}
\end{array}
\end{equation}
Note that term ${ - {C_a }\left\| {\delta _v^*} \right\|_{{1}}}$ is ignored since it does not play a role in the minimization problem. Furthermore, we use the alternating direction method of multipliers to solve Problem (\ref{eq:Min}). 

The surrogate augmented Lagrangian function for Problem (\ref{eq:Min}) is 
\begin{equation}
\label{eq:MinLag}
\begin{array}{l}
{L_\eta }(\{ {{\bf{r}}_v} , {\xi _v}\} ,\{ {\omega _{vu}}\} ,\{ {\alpha _{vu,k}}\} )\\
 = \frac{1}{2}\sum\limits_{v \in \mathcal{V}} {{\bf{r}}_v^T{\Pi _{p + 1}}{{\bf{r}}_v}}  + V{C_l}\sum\limits_{v \in \mathcal{V}} {{\bf{1}}_v^T{{\bf{\xi }}_v}} \\
 + {V_a}{C_l}\sum\limits_{v \in \mathcal{V}_a} {{\bf{r}}_v^T{\widehat{\bf{I}}_{p\times(p + 1)}^T}\delta _v^*} \\
 + \sum\limits_{v \in \mathcal{V}} {\sum\limits_{u \in {{\cal B}_v}} {\alpha _{vu,1}^T({{\bf{r}}_v} - {\omega _{vu}})} }  + \sum\limits_{v \in \mathcal{V}} {\sum\limits_{u \in {{\cal B}_v}} {\alpha _{vu,2}^T({\omega _{vu}} - {{\bf{r}}_u})} } \\
 + \frac{\eta }{2}\sum\limits_{v \in \mathcal{V}} {\sum\limits_{u \in {{\cal B}_v}} {{{\left\| {{{\bf{r}}_v} - {\omega _{vu}}} \right\|_2^2}}} }  + \frac{\eta }{2}\sum\limits_{v \in \mathcal{V}} {\sum\limits_{u \in {{\cal B}_v}} {{{\left\| {{\omega _{vu}} - {{\bf{r}}_u}} \right\|_2^2}}} }.
\end{array}
\end{equation}
Notice that $\alpha _{vu,1}$ and $\alpha _{vu,2}$ donate the Lagrange multipliers with respect to $\bf{r}_v = \omega _{vu}$ and $\omega _{vu} = \bf{r}_u$. ``Surrogate'' here means that $L_\eta$ does not include the constraints {(\ref{eq:Min}a)} and {(\ref{eq:Min}b)}. ``Augmented'' indicates that $L_\eta$
contains two quadratic terms which are scaled by constant $\eta > 0$, and these two terms are used to further regularize the equality constraints in (\ref{eq:Min}). ADMoM solves Problem (\ref{eq:Min}) by following update rules\cite{boyd2011distributed}:
\begin{equation}
\label{eq:MinADMMi1}
\begin{array}{l}
\left\{ {{{\bf{r}}_v^{(t+1)}},{\xi _v^{(t+1)}}} \right\}
 \in \arg \mathop {\min }\limits_{\{ {{\bf{r}}_v},{\xi _v}\} } {L_\eta }(\{ {{\bf{r}}_v} ,{\xi _v}\} ,\{ {\omega _{vu}^{(t)}}\} ,\{ {\alpha _{vu,k}^{(t)}}\} );
\end{array}
\end{equation}
\begin{equation}
\label{eq:MinADMMi2}
\begin{array}{l}
\left\{ {{\omega _{vu}^{(t+1)}}} \right\}
 \in \arg \mathop {\min }\limits_{\{\omega _{vu}\}} {L_\eta }(\{ {{\bf{r}}_v^{(t+1)}}, {\xi _v^{(t+1)}}\} ,\{ {\omega _{vu}}\} ,\{ {\alpha _{vu,k}^{(t)}}\} );
\end{array}
\end{equation}
\begin{equation}
\label{eq:MinADMMi3}
\begin{array}{l}
{\alpha _{vu,1}^{(t + 1)}} = {\alpha _{vu,1}^{(t)}} + \eta ({{\bf{r}}_v^{(t+1)}} - {\omega _{vu}^{(t+1)}}),
\forall v \in \mathcal{V},\forall u \in {\mathcal{B}_v};
\end{array}
\end{equation}
\begin{equation}
\label{eq:MinADMMi4}
\begin{array}{l}
{\alpha _{vu,2}^{(t+1)}} = {\alpha _{vu,2}^{(t)}} + \eta ( {\omega _{vu}^{(t+1)}}-{{\bf{r}}_u^{(t+1)}} ),
\forall v \in \mathcal{V},\forall u \in {\mathcal{B}_v}.
\end{array}
\end{equation}
{Note that (\ref{eq:MinADMMi1})-(\ref{eq:MinADMMi4}) contains two quadratic programming problems and two linear computations. Furthermore, (\ref{eq:MinADMMi1})-(\ref{eq:MinADMMi4}) can be simplified into the following proposition.}

\begin{proposition} 
\label{proposition3}
Each node iterates with randomly initialization $\lambda_v^{(0)},\mathbf{r}_v^{(0)}$ and $\alpha_v^{(0)} = \mathbf{0}_{(p+1)\times 1}$, 
\begin{equation}
\label{eq:MinADMMRi1}
\begin{array}{l}
{\lambda _v^{(t+1)}} \in \arg \mathop {\max }\limits_{{\bf{0}} \le {{\bf{\lambda }}_v} \le V{C_l}{{\bf{1}}_v}}  - \frac{1}{2}\lambda _v^T{{\bf{Y}}_v}{{\bf{X}}_v}{\bf{U}}_v^{ - 1}{\bf{X}}_v^T{{\bf{Y}}_v}{\lambda _v}\\
\begin{array}{*{20}{c}}
{\begin{array}{*{20}{c}}
{}&{}&{}
\end{array}}&{}&{}&{}
\end{array} + {({{\bf{1}}_v} + {{\bf{Y}}_v}{{\bf{X}}_v}{\bf{U}}_v^{ - 1}{{\bf{f}}_v^{(t)}})^{T}}{\lambda _v},
\end{array}
\end{equation}
\begin{equation}
\label{eq:MinADMMRi2}
{{\bf{r}}_v^{(t+1)}} = {\bf{U}}_v^{ - 1}\left( {{\bf{X}}_v^T{{\bf{Y}}_v}{\lambda _v^{(t+1)}} - {{\bf{f}}_v^{(t)}}} \right),
\end{equation}
\begin{equation}
\label{eq:MinADMMRi3}
{\alpha _v^{(t+1)}} = {\alpha _v^{(t)}} + \frac{\eta }{2}\sum\limits_{u \in {\mathcal{B}_v}} {\left[ {{{\bf{r}}_v}^{(t + 1)} - {{\bf{r}}_u^{(t + 1)}}} \right]}, 
\end{equation}
where $\mathbf{U}_v={\Pi_{p+1}}+2\eta\vert \mathcal{B}_v\vert\mathbf{I}_{p+1},\mathbf{f}_v^{(t)}=V_a C_l{\widehat{\mathbf{I}}_{p\times (p+1)}^T}\delta_v^*+2\alpha_v^{(t)}-\eta\sum_{u\in \mathcal{B}_v}(\mathbf{r}_v^{(t)}+\mathbf{r}_u^{(t)}),\eta>0$ .
\end{proposition}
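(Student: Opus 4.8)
The plan is to carry out the two inner ADMoM minimizations in (\ref{eq:MinADMMi1})--(\ref{eq:MinADMMi2}) in closed form and then consolidate the Lagrange multipliers, following the consensus-ADMM reduction of \cite{forero2010consensus,boyd2011distributed}. First I would perform the $\omega_{vu}$-update (\ref{eq:MinADMMi2}): since $L_\eta$ is quadratic and separable in the $\omega_{vu}$'s, setting $\partial L_\eta/\partial\omega_{vu}=\mathbf{0}$ gives $\omega_{vu}^{(t+1)}=\tfrac12(\mathbf{r}_v^{(t+1)}+\mathbf{r}_u^{(t+1)})+\tfrac{1}{2\eta}(\alpha_{vu,1}^{(t)}-\alpha_{vu,2}^{(t)})$. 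I would then prove by induction on $t$, starting from the zero initialization $\alpha_{vu,k}^{(0)}=\mathbf{0}$, that the invariant $\alpha_{vu,1}^{(t)}=\alpha_{vu,2}^{(t)}$ is preserved by (\ref{eq:MinADMMi3})--(\ref{eq:MinADMMi4}): when it holds at step $t$ the $\omega$-update collapses to $\omega_{vu}^{(t+1)}=\tfrac12(\mathbf{r}_v^{(t+1)}+\mathbf{r}_u^{(t+1)})$, so $\mathbf{r}_v^{(t+1)}+\mathbf{r}_u^{(t+1)}-2\omega_{vu}^{(t+1)}=\mathbf{0}$ and the difference of the two multiplier updates vanishes, giving the invariant at step $t+1$. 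The same bookkeeping gives the edge-antisymmetry $\alpha_{vu,1}^{(t)}=-\alpha_{uv,1}^{(t)}$. Defining the single per-node multiplier $\alpha_v^{(t)}:=\sum_{u\in\mathcal{B}_v}\alpha_{vu,1}^{(t)}$, summing (\ref{eq:MinADMMi3}) over $u\in\mathcal{B}_v$ and using $\omega_{vu}^{(t+1)}=\tfrac12(\mathbf{r}_v^{(t+1)}+\mathbf{r}_u^{(t+1)})$ yields exactly (\ref{eq:MinADMMRi3}).

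Next I would substitute $\omega_{vu}^{(t)}=\tfrac12(\mathbf{r}_v^{(t)}+\mathbf{r}_u^{(t)})$ and the multiplier identities into (\ref{eq:MinADMMi1}). The objective then decouples across nodes, and collecting all terms linear in $\mathbf{r}_v$ — the cross term $V_aC_l\mathbf{r}_v^T\widehat{\mathbf{I}}_{p\times(p+1)}^T\delta_v^*$ (with $\delta_v^*:=\mathbf{0}$ when $v\notin\mathcal{V}_a$), the consolidated multiplier contribution $\sum_{u\in\mathcal{B}_v}\alpha_{vu,1}^{(t)}-\sum_{u\in\mathcal{B}_v}\alpha_{uv,1}^{(t)}=2\alpha_v^{(t)}$, and the linear parts of the two quadratic penalties — produces $\mathbf{f}_v^{(t)}=V_aC_l\widehat{\mathbf{I}}_{p\times(p+1)}^T\delta_v^*+2\alpha_v^{(t)}-\eta\sum_{u\in\mathcal{B}_v}(\mathbf{r}_v^{(t)}+\mathbf{r}_u^{(t)})$, while the quadratic part is $\tfrac12\mathbf{r}_v^T(\Pi_{p+1}+2\eta|\mathcal{B}_v|\mathbf{I}_{p+1})\mathbf{r}_v=\tfrac12\mathbf{r}_v^T\mathbf{U}_v\mathbf{r}_v$. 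Thus node $v$ solves $\min_{\mathbf{r}_v,\xi_v}\ \tfrac12\mathbf{r}_v^T\mathbf{U}_v\mathbf{r}_v+\mathbf{f}_v^{(t)T}\mathbf{r}_v+VC_l\mathbf{1}_v^T\xi_v$ subject to (\ref{eq:Min}a)--(\ref{eq:Min}b). I would form its Lagrangian with multipliers $\lambda_v\ge\mathbf{0}$ for $\mathbf{Y}_v\mathbf{X}_v\mathbf{r}_v\ge\mathbf{1}_v-\xi_v$ and $\mu_v\ge\mathbf{0}$ for $\xi_v\ge\mathbf{0}_v$; stationarity in $\xi_v$ gives $VC_l\mathbf{1}_v-\lambda_v-\mu_v=\mathbf{0}$, so eliminating $\mu_v\ge\mathbf{0}$ yields the box constraint $\mathbf{0}\le\lambda_v\le VC_l\mathbf{1}_v$, and stationarity in $\mathbf{r}_v$ gives $\mathbf{U}_v\mathbf{r}_v=\mathbf{X}_v^T\mathbf{Y}_v\lambda_v-\mathbf{f}_v^{(t)}$, i.e. (\ref{eq:MinADMMRi2}); here $\mathbf{U}_v\succ\mathbf{0}$ since $\eta>0$ and $|\mathcal{B}_v|\ge1$ by connectedness of $\mathcal{G}$, so the inverse in (\ref{eq:MinADMMRi2})--(\ref{eq:MinADMMRi1}) is well defined. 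Back-substituting $\mathbf{r}_v$ into the Lagrangian and dropping the $\lambda_v$-independent constant yields precisely the dual QP (\ref{eq:MinADMMRi1}), and since Slater's condition holds for this convex QP, strong duality makes its maximizer the desired $\{\mathbf{r}_v^{(t+1)},\xi_v^{(t+1)}\}$.

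The step I expect to be the main obstacle is the multiplier-consolidation bookkeeping: one must verify carefully that the invariants $\alpha_{vu,1}^{(t)}=\alpha_{vu,2}^{(t)}$ and $\alpha_{vu,1}^{(t)}=-\alpha_{uv,1}^{(t)}$ are genuinely propagated by the coupled updates (\ref{eq:MinADMMi2})--(\ref{eq:MinADMMi4}), because these are exactly what eliminate the auxiliary variables $\omega_{vu}$ and collapse the two multipliers per edge into a single multiplier $\alpha_v^{(t)}$ per node; once this is established, the remainder is routine convex duality. A minor additional point is confirming invertibility of $\mathbf{U}_v$ despite $\Pi_{p+1}$ being singular, which the $2\eta|\mathcal{B}_v|\mathbf{I}_{p+1}$ term guarantees.
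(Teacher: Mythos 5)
Your proposal is correct and follows essentially the same route as the paper's proof: solve the $\omega_{vu}$-update in closed form, use the zero initialization to establish $\alpha_{vu,1}^{(t)}=\alpha_{vu,2}^{(t)}$ and the edge-antisymmetry so the per-edge multipliers collapse into the single per-node $\alpha_v^{(t)}$, then derive (\ref{eq:MinADMMRi1})--(\ref{eq:MinADMMRi2}) by KKT conditions and duality on the resulting decoupled QP. Your explicit induction for the multiplier invariant and the remarks on $\mathbf{U}_v\succ\mathbf{0}$ and Slater's condition are slightly more careful than the paper's presentation but do not constitute a different argument.
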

\begin{proof} { A similar proof can be found in \cite{forero2010consensus}.
By solving (\ref{eq:MinADMMi2}) directly, we have that $\omega_{vu}^{(t+1)} = \frac{1}{2\eta}(\alpha_{vu,1}^{(t)}-\alpha_{vu,2}^{(t)})+\frac{1}{2}(\mathbf{r}_v^{(t+1)}+\mathbf{r}_u^{(t+1)})$, and thus, (\ref{eq:MinADMMi2}) can be eliminated by directly plugging the solution into (\ref{eq:MinADMMi1}), (\ref{eq:MinADMMi3}), and (\ref{eq:MinADMMi4}). }

{By plugging the solution of (\ref{eq:MinADMMi2}) into (\ref{eq:MinADMMi3}) and (\ref{eq:MinADMMi4}), we can achieve that ${\alpha _{vu,1}^{(t + 1)}} =\frac{1}{2}( {\alpha _{vu,1}^{(t)} +\alpha _{vu,2}^{(t)} }) + \frac{\eta}{2} ({{\bf{r}}_v^{(t+1)}} -{{\bf{r}}_u^{(t+1)}})$, and $ {\alpha _{vu,2}^{(t + 1)}} =\frac{1}{2}( {\alpha _{vu,1}^{(t)} +\alpha _{vu,2}^{(t)} }) + \frac{\eta}{2} ({{\bf{r}}_v^{(t+1)}} -{{\bf{r}}_u^{(t+1)}})$, respectively. Let $\alpha _{vu,1} ^{(0)} = \alpha _{vu,2} ^{(0)} = \mathbf{0}_{(p+1) \times 1}$ be the initial condition, we have that $\alpha _{vu,1} ^{(t)} = \alpha _{vu,2} ^{(t)}$ for $t\geq 0$. Thus, (\ref{eq:MinADMMi2}), (\ref{eq:MinADMMi3}), and (\ref{eq:MinADMMi4}) can be simplified further as
$\omega_{vu}^{(t+1)} =\frac{1}{2}(\mathbf{r}_v^{(t+1)}+\mathbf{r}_u^{(t+1)})$, and $\alpha _{vu,1} ^{(t+1)} = \alpha _{vu,2} ^{(t+1)}  = {\alpha _{vu,1}^{(t)}  + \frac{\eta}{2} ({{\bf{r}}_v^{(t+1)}} -{{\bf{r}}_u^{(t+1)}})= \alpha _{vu,2}^{(t)} } + \frac{\eta}{2} ({{\bf{r}}_v^{(t+1)}} -{{\bf{r}}_u^{(t+1)}})$, respectively. }

{
By plugging the solution of (\ref{eq:MinADMMi2}) into (\ref{eq:MinADMMi1}), the sixth and seventh terms of the objective function in (\ref{eq:MinADMMi1}) can be simplified as $ \eta \sum_{v \in \mathcal{V}} {\sum_{u \in {{\cal B}_v}} {{{\left\| {{{\bf{r}}_v} - \frac{1}{2}(\mathbf{r}_v^{(t)}+\mathbf{r}_u^{(t)})} \right\|_2^2}}} }$. Moreover, notice that the following equality holds for the forth and fifth terms of the objective function in (\ref{eq:MinADMMi1}):
\[\begin{array}{l}
 \sum\limits_{v \in \mathcal{V}} {\sum\limits_{u \in {{\cal B}_v}} {\alpha _{vu,1}^{(t)T}({{\bf{r}}_v} - {\omega _{vu}^{(t)}})} }  + \sum\limits_{v \in \mathcal{V}} {\sum\limits_{u \in {{\cal B}_v}} {\alpha _{vu,2}^{(t)T}({\omega _{vu}^{(t)}} - {{\bf{r}}_u})} } \\
=   \sum\limits_{v \in \mathcal{V}} {\sum\limits_{u \in {{\cal B}_v}} {\alpha _{vu,1}^{(t)T}({{\bf{r}}_v} - {{\bf{r}}_u})} } = \sum\limits_{v \in \mathcal{V}} {\sum\limits_{u \in {{\cal B}_v}} {{{\bf{r}}_v}^T(\alpha _{vu,1}^{(t)} - \alpha _{uv,1}^{(t)})} } \\
=2\sum\limits_{v \in \mathcal{V}} { {{{\bf{r}}_v}^T\sum\limits_{u \in {{\cal B}_v}}\alpha _{vu,1}^{(t)}} } = 2\sum\limits_{v \in \mathcal{V}} { {{{\bf{r}}_v}^T  \alpha_v^{(t)} } },
\end{array}\]
where $\alpha_v^{(t)} = \sum_{u \in {{\cal B}_v}}\alpha _{vu,1}^{(t)}$. Note that the first equality holds as $\alpha _{vu,1} ^{(t)} = \alpha _{vu,2} ^{(t)}$ for $t\geq 0$, the third equality holds as $\alpha _{vu,1}^{(t)} = -\alpha _{uv,1}^{(t)}$, which holds when $\alpha _{vu,1} ^{(0)} = \alpha _{vu,2} ^{(0)} = \mathbf{0}_{(p+1) \times 1}$. Thus, we only need to calculate $\alpha_v^{(t)}$ at each iteration for (\ref{eq:MinADMMi1}). As a result, (\ref{eq:MinADMMi3}) and (\ref{eq:MinADMMi4}) can be written as (\ref{eq:MinADMMRi3}).}

{
Using these results, we can rewrite Problem (\ref{eq:MinADMMi1}) as follows
\[\begin{array}{l}
\{\mathbf{r}_v^{(t+1)},\xi_v^{(t+1)}\}\in   \arg\min\limits_{ \{  \mathbf{r}_v,\xi_v \}  } \frac{1}{2}\sum\limits_{v \in \mathcal{V}} {{\bf{r}}_v^T {\Pi _{p + 1}}{{\bf{r}}_v}}  + V{C_l}\sum\limits_{v \in \mathcal{V}} {{\bf{1}}_v^T{{\bf{\xi }}_v}} 
\\ \ \ \ \ \ \ + {V_a}{C_l}\sum\limits_{v \in \mathcal{V}_a} {{\bf{r}}_v^T{\widehat{\bf{I}}_{p\times(p + 1)}^T}\delta _v^*} + 2\sum\limits_{v \in \mathcal{V}} { {{{\bf{r}}_v}^T  \alpha_v^{(t)} } }\\ \ \ \ \ \ \ + \eta \sum\limits_{v \in \mathcal{V}} {\sum\limits_{u \in {{\cal B}_v}} {{{\left\| {{{\bf{r}}_v} - \frac{1}{2}(\mathbf{r}_v^{(t)}+\mathbf{r}_u^{(t)})} \right\|_2^2}}} } \\
\begin{array}{ccc}
{\text{s.t.}}&{\begin{array}{c}
{\mathbf{Y}}_v{\mathbf{X}}_v\mathbf{r}_v \geq {\mathbf{1}}_v - \xi_v,\\ \xi_v \geq {\mathbf{0}}_v,
\end{array}}&{\begin{array}{c}
v\in \mathcal{V};\\v\in \mathcal{V}.
\end{array} }
\end{array}
\end{array}\]}

{
Let $\lambda_{v}$ and $\beta_{v}$ denote the Lagrange multipliers associated with the constraints ${\mathbf{Y}}_v{\mathbf{X}}_v\mathbf{r}_v \geq {\mathbf{1}}_v - \xi_v$ and $\xi_v \geq {\mathbf{0}}_v$, respectively. As a result, we have the Lagrange function for (\ref{eq:MinADMMi1}) as
\[\begin{array}{l}
{L}_\eta' = \frac{1}{2}\sum\limits_{v \in \mathcal{V}} {{\bf{r}}_v^T{\Pi _{p + 1}}{{\bf{r}}_v}} - \sum\limits_{v\in\mathcal{V}}\lambda_{v}^T ({\mathbf{Y}}_v{\mathbf{X}}_v\mathbf{r}_v - {\mathbf{1}}_v + \xi_v) - \sum\limits_{v\in\mathcal{V}} \beta_{v}^T\xi_v \\ \ \ \ \ \ \ + V{C_l}\sum\limits_{v \in \mathcal{V}} {{\bf{1}}_v^T{{\bf{\xi }}_v}} 
 + {V_a}{C_l}\sum\limits_{v \in \mathcal{V}_a} {{\bf{r}}_v^T{\widehat{\bf{I}}_{p\times(p + 1)}^T}\delta _v^*}\\
\ \ \ \ \ \ + 2\sum\limits_{v \in \mathcal{V}} { {{{\bf{r}}_v}^T  \alpha_v } } + \eta \sum\limits_{v \in \mathcal{V}} {\sum\limits_{u \in {{\cal B}_v}} {{{\left\| {{{\bf{r}}_v} - \frac{1}{2}(\mathbf{r}_v^{(t)}+\mathbf{r}_u^{(t)})} \right\|_2^2}}} }  .
\end{array}\]}

{
By KKT conditions, we have 
\[\begin{array}{l}
(\Pi_{p+1}+2\eta\vert \mathcal{B}_v\vert\mathbf{I}_{p+1})  \mathbf{r}_v =\mathbf{X}_v^T\mathbf{Y}_v \lambda_v^{(t+1)}  - V_aC_l{\widehat{\bf{I}}_{p\times (p + 1)}^T}\delta _v^* \\ \ \ \ \ \ \ \ -2\alpha_v^{(t)} + \eta \sum\limits_{u\in\mathcal{B}_v} (\mathbf{r}_v^{(t)} + \mathbf{r}_u^{(t)});\\
\mathbf{0} = VC_l\mathbf{1}_v - \lambda_v -\beta_v.  
\end{array}   \]
Note that $\lambda_v \geq \mathbf{0}$ and $\beta_v \geq \mathbf{0}$, thus, the second equality yields $\mathbf{0} \leq \lambda_v \leq VC_l\mathbf{1}_v$. Let $\mathbf{U}_v =\Pi_{p+1}+2\eta\vert \mathcal{B}_v\vert\mathbf{I}_{p+1} $ and $\mathbf{f}_v^{(t)} =  V_aC_l{\widehat{\bf{I}}_{p\times(p + 1)}^T} \delta _v^* +2\alpha_v^{(t)} - \eta \sum\limits_{u\in\mathcal{B}_v} (\mathbf{r}_v^{(t)} + \mathbf{r}_u^{(t)}) $, the first equality yields (\ref{eq:MinADMMRi2}). $\lambda_v$ can be achieved by solving the dual problem of Problem (\ref{eq:MinADMMi1}), which yields (\ref{eq:MinADMMRi1}).}
\end{proof}

Note that (\ref{eq:MinADMMRi1}) is a quadratic programming problem with linear inequality constraints. (\ref{eq:MinADMMRi2}) and (\ref{eq:MinADMMRi3}) are direct computations. $\mathbf{U}_v$ is a diagonal matrix. Thus, $\mathbf{U}_v^{-1}$ always exists and is easy to compute. (\ref{eq:MinADMMRi1})-(\ref{eq:MinADMMRi3}) are fully distributed iterations as each node uses their own sample data $\mathbf{X}_v $ and $\mathbf{Y}_v$. But the computations of $\mathbf{f}_v$ and $\alpha_v$ at node $v$ require the value of $\mathbf{r}_u$ form neighboring nodes. This can be achieved by allowing communications between nodes. The centralized Min-Problem (\ref{eq:Min}) can be solved in a fully distributed fashion now.
\subsection{Distributed algorithm for minimax problem}
By combining the above Proposition 3 with Problem (\ref{eq:MaxSol}), we have the method of solving Problem (\ref{eq:MinMaxMatrix}) in a distributed way as follows: The first step is that each node randomly pick an initial $ \mathbf{r}_v^{(0)}, \delta_{v}^{(0)} $ and $ \alpha_v = \mathbf{0}_{(p+1)\times 1} $, then solve Max-Problem (\ref{eq:MaxSol}) with $\{ \mathbf{r}_v^{(0)} \}$, and obtain$\{\delta_{v}^{(1)}\}$, the next step is to solve Min-Problem (\ref{eq:Min}) with $\{\delta_{v}^{(1)}\}$ using Proposition 3, and obtain $\{ \mathbf{r}_v^{(1)} \}$, then we repeat solving max-problem with $\{ \mathbf{r}_v \}$ from the previous step and min-problem with $\{\delta_{v} \}$ from the previous step until the pair $\{ \mathbf{r}_v, \delta_{v} \}$ achieves convergence. The iterations of solving Problem (\ref{eq:MinMaxMatrix}) can be summarized as follows:

\begin{proposition}
\label{Proposition 4}
With arbitrary initialization $\mathbf{\delta}_v^{(0)},\mathbf{r}_v^{(0)},\lambda_v^{(0)}$ and $\alpha_v^{(0)}=\mathbf{0}_{(p+1)\times 1}$, the iterations per node are given by:
\begin{equation}
\label{eq:MinMaxSoli1}
\begin{array}{l}
{\delta _v^{(t+1)}} \in \arg \mathop {\max }\limits_{\left\{ {{\delta _v},{s_v}} \right\}} {V_a}{C_l}{\bf{r}}_v^{(t)T} {\widehat{\bf{I}}_{p\times(p + 1)}^T}{\delta _v}\\
\begin{array}{*{20}{c}}
{\begin{array}{*{20}{c}}
{\begin{array}{*{20}{c}}
{}&{}
\end{array}}&{}
\end{array}}&{}
\end{array} \ \ \ \ \ \ \ \ \ \  - {{\bf{1}}^T}{s_v}\\
{\rm{s}}.{\rm{t}}.{\rm{   }}\begin{array}{*{20}{c}}
{\begin{array}{*{20}{l}}
{{C_a}{\delta _v} \le {s_v},}\\
{{C_a}{\delta _v} \ge  - {s_v},}\\
{{\delta _v} \in {{\cal U}_{v0}},}
\end{array}}&{\begin{array}{*{20}{l}}
{\forall v \in {\cal V}_a;}\\
{\forall v \in {\cal V}_a;}\\
{\forall v \in {\cal V}_a.}
\end{array}}
\end{array}
\end{array}
\end{equation}
\begin{equation}
\label{eq:MinMaxSoli2}
\begin{array}{*{20}{l}}
{{\lambda _v^{(t+1)}}}
{ \in \arg \mathop {\max }\limits_{{\bf{0}} \le {{\bf{\lambda }}_v} \le VC_l{{\bf{1}}_v}}  - \frac{1}{2}\lambda _v^T{{\bf{Y}}_v}{{\bf{X}}_v}{\bf{U}}_v^{ - 1}{\bf{X}}_v^T{{\bf{Y}}_v}{\lambda _v}}\\
{\begin{array}{*{20}{c}}
{}&{}
\end{array} \ \ \ \ \ \ \ \ \  \ \ \ \ \ + {{({{\bf{1}}_v} + {{\bf{Y}}_v}{{\bf{X}}_v}{\bf{U}}_v^{ - 1}{{\bf{f}}_v^{(t)}})}^T}{\lambda _v}},
\end{array}
\end{equation}
\begin{equation}
\label{eq:MinMaxSoli3}
{{{\bf{r}}_v^{(t+1)}} = {\bf{U}}_v^{ - 1}\left( {{\bf{X}}_v^T{{\bf{Y}}_v}{\lambda _v^{(t+1)}} - {{\bf{f}}_v^{(t)}}} \right)},
\end{equation}
\begin{equation}
\label{eq:MinMaxSoli4}
{{\alpha _v^{(t+1)}} = {\alpha _v^{(t)}} + \frac{\eta }{2}\sum\limits_{u \in {\mathcal{B}_v}} {\left[ {{{\bf{r}}_v^{(t+1)}} - {{\bf{r}}_u^{(t+1)}}} \right]} },
\end{equation}
where $\mathbf{U}_v={\Pi_{p+1}}+2\eta\vert \mathcal{B}_v\vert\widehat{\mathbf{I}}_{p+1},\mathbf{f}_v^{(t)}=V_a C_l{\widehat{\bf{I}}_{p\times(p + 1)}^T}\delta_v^{(t)}+2\alpha_v^{(t)}-\eta\sum_{u\in \mathcal{U}_v}(\mathbf{r}_v^{(t)}+\mathbf{r}_u^{(t)})$. 
\end{proposition}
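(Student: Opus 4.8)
The plan is to assemble the per-node iteration from the two building blocks already in hand — the distributed solution of the max-problem in (\ref{eq:MaxSol}) and the distributed solution of the min-problem in Proposition~\ref{proposition3} — and to glue them together with the best-response argument licensed by Proposition~\ref{proposition2}. Concretely, I would first invoke Proposition~\ref{proposition2} to assert that the minimax and max-min versions of Problem~(\ref{eq:MinMaxMatrix}) share the same saddle-point equilibrium, so that it is legitimate to seek the equilibrium by alternating best responses: at iteration $t$, hold $\{\mathbf{r}_v^{(t)},\xi_v^{(t)}\}$ fixed and let the attacker best-respond to obtain $\{\delta_v^{(t+1)}\}$, then hold $\{\delta_v^{(t+1)}\}$ fixed and let the learner best-respond to obtain $\{\mathbf{r}_v^{(t+1)},\xi_v^{(t+1)}\}$.

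For the attacker's best response with $\{\mathbf{r}_v^{(t)}\}$ held fixed, I would drop the first two terms of $K'$ and constraints (\ref{eq:MinMaxMatrix}a)--(\ref{eq:MinMaxMatrix}c), since none of them involves $\{\delta_v\}$, reducing the problem to (\ref{eq:Max}). Because both the objective and the constraint $\delta_v\in\mathcal{U}_{v0}$ are separable across $v\in\mathcal{V}_a$, this splits into $V_a$ independent sub-problems, one per compromised node, and rewriting the relaxed $l_1$ cost in the standard epigraph form with the auxiliary slack $s_v$ (as in (\ref{eq:MaxSol})) yields exactly (\ref{eq:MinMaxSoli1}).

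For the learner's best response with $\{\delta_v^{(t+1)}\}$ held fixed, the problem is precisely the convex Problem~(\ref{eq:Min}) with $\delta_v^\ast=\delta_v^{(t+1)}$, the term $-C_a\|\delta_v^\ast\|_1$ being a constant that can be discarded. Applying the ADMoM update rules (\ref{eq:MinADMMi1})--(\ref{eq:MinADMMi4}) together with the simplifications carried out in the proof of Proposition~\ref{proposition3} — eliminating $\omega_{vu}$ via its closed form, exploiting $\alpha_{vu,1}^{(t)}=\alpha_{vu,2}^{(t)}=-\alpha_{uv,1}^{(t)}$ that follows from the zero initialization, aggregating $\alpha_v^{(t)}=\sum_{u\in\mathcal{B}_v}\alpha_{vu,1}^{(t)}$, and solving the resulting quadratic program through its dual via the KKT conditions — produces (\ref{eq:MinMaxSoli2})--(\ref{eq:MinMaxSoli4}) with $\mathbf{U}_v$ and $\mathbf{f}_v^{(t)}$ as stated. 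Along the way I would record that $\mathbf{U}_v$ is diagonal and positive definite, so $\mathbf{U}_v^{-1}$ exists; that each subproblem is convex and feasible, so every $\arg\max$/$\arg\min$ set is nonempty; and that the only quantities exchanged are the $\mathbf{r}_u^{(t)}$ from neighbors $u\in\mathcal{B}_v$, so the scheme is fully distributed.

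The main obstacle is bookkeeping rather than a deep argument: one must be careful about which iterate index ($t$ versus $t+1$) each fixed quantity carries as the two best-response updates are interleaved, and must check that the terms omitted from each subproblem (the attacker's cost for the min-step, the learner-independent terms for the max-step) truly do not affect the argmin/argmax, which holds because they do not depend on that subproblem's optimization variable. Convergence of the resulting alternating iteration is not claimed in this proposition and is established separately in Section~\ref{sec:Convergence}.
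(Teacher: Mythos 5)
Your proposal is correct and follows essentially the same route as the paper, which presents Proposition 4 without a separate formal proof precisely because it is obtained by interleaving the distributed max-step (\ref{eq:MaxSol}) with one ADMoM min-step from Proposition \ref{proposition3}, the alternation being justified by the minimax/max-min equivalence of Proposition \ref{proposition2}. Your added care about the iterate indices and the discardable terms matches the paper's own (informal) assembly of the algorithm.
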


Iterations (\ref{eq:MinMaxSoli1})-(\ref{eq:MinMaxSoli4}) are summarized into Algorithm 1. Note that at any given iteration $t$ of the algorithm, each node $v\in \mathcal{V}$ computes its own local discriminant function $g_v^{(t)} (\mathbf{x}) $ for any vector $\mathbf{x}$ as 
\begin{equation}
\label{eq:Discriminant}
{g_v^{(t)}({\bf{x}}) = [{{\bf{x}}^T},1]{{\bf{r}}_v^{(t)}}}.
\end{equation}
Algorithm 1 solves the minimax problem using ADMoM technique. It is a fully decentralized network operation, and it does not require exchanging training data or the value of decision functions, which meets the reduced communication overhead and privacy preservation requirements at the same time. The nature of the iterative algorithms also provides resiliency to the distributed machine learning algorithms. It  provides mechanisms for each node to respond to its neighbors and the adversarial behaviors in real time. When unanticipated events occur, the algorithm will be able to automatically respond and self-configure in an optimal way. Properties of Algorithm $1$ can be summarized as followings.
\begin{table}[]
\renewcommand{\arraystretch}{1.3}
\label{tablealgorithm1}
\centering
\begin{tabular}{l}
\hline
\bfseries Algorithm 1 \\
\hline

Randomly initialize $\mathbf{\delta}_v^{(0)},\mathbf{r}_v^{(0)},\lambda_v^{(0)}$ 
and $\alpha_v^{(0)}=\mathbf{0}_{(p+1)\times 1}$ \\
for every $v\in \mathcal{V}$. \\
1:\ \ \bf{for} $t=0,1,2,...$ do\\
2:\ \ \ \ \ \ \ \ \bf{for all} $v\in \mathcal{V}$ do \\
3:\ \ \ \ \ \ \ \ \ \ \ \ \ Compute $\delta_v^{(t+1)}$ via (\ref{eq:MinMaxSoli1}).\\
4:\ \ \ \ \ \ \ \ \bf{end for} \\
5:\ \ \ \ \ \ \ \ \bf{for all} $v\in \mathcal{V}$ do \\
6:\ \ \ \ \ \ \ \ \ \ \ \ \ Compute $\lambda_v^{(t+1)}$ via (\ref{eq:MinMaxSoli2}).\\
7:\ \ \ \ \ \ \ \ \ \ \ \ \ Compute $\mathbf{r}_v^{(t+1)}$ via (\ref{eq:MinMaxSoli3}).\\
8:\ \ \ \ \ \ \ \ \bf{end for} \\
9:\ \ \ \ \ \ \ \ \bf{for all} $v\in \mathcal{V}$ do \\
10:\ \ \ \ \ \ \ \ \ \ \ \ Broadcast $\mathbf{r}_v^{(t+1)}$ to all neighbors $u\in \mathcal{B}_v$.\\
11:\ \ \ \ \ \ \ \bf{end for}\\
12:\ \ \ \ \ \ \ \bf{for all} $v\in \mathcal{V}$ do \\
13:\ \ \ \ \ \ \ \ \ \ \ \ \ Compute $\alpha_v^{(t+1)}$ via (\ref{eq:MinMaxSoli4}).\\
14:\ \ \ \ \ \ \ \bf{end for}\\
15:\ \bf{end for}\\
\hline
\end{tabular}
\end{table}
\subsection{Game of Games}
The zero-sum minimax Problem (\ref{eq:MinMax}) is a global game between the two players, i.e., a learner and an attacker. The game captures the interactions on a network of $V$ nodes. However, based on the properties 
of network, the two-person zero-sum game can be treated as $V$ small games between a local learner and a local attacker. If we treat each node as a player, then the global game can be decomposed into $V$ smaller games in which each node constitutes a local game between the local learner at the node and the local attacker who attacks the node. We call this unique structure ``Game of Games''. 

To state more formally, we let the two-player zero sum game be represented by
${G_Z} = \left\langle {\left\{ {L,A} \right\},\left\{ {{\mathcal{S}_L},{\mathcal{S}_A}} \right\},K} \right\rangle , $ 
which is equivalent to a game of games defined by
${G_M}\equiv \{ G_1,G_2,...,G_{|\mathcal{V}|} \},$
where \[G_v = \left\langle {\left\{ {L_v,A_v} \right\},\left\{ {{\mathcal{S}_{L_v}},{\mathcal{S}_{A_v}}} \right\},\{\{(\ref{eq:MinMaxSoli2}),(\ref{eq:MinMaxSoli3}),(\ref{eq:MinMaxSoli4}) \}, \{(\ref{eq:MinMaxSoli1}) \}  \}} \right\rangle .\]
Notice that $G_v$ is a local game between the learner and the attacker at node $v$. $\{L_v,A_v\}$ represents two players, i.e., the DSVM learner and attacker at node $v$. The learner $L_v$ at node $v$ solves (\ref{eq:MinMaxSoli2}), (\ref{eq:MinMaxSoli3}) and (\ref{eq:MinMaxSoli4}), while the attacker $A_v$ at the node solves (\ref{eq:MinMaxSoli1}). $\{\mathcal{S}_{L_v},\mathcal{S}_{A_v}\}$ here represents the action sets for the learner and the attacker at node $v$, and $\mathcal{S}_{A_v} = \mathcal{U}_{v0} $. 
\subsection{Complexity}
Each iteration of Algorithm 1 requires the computation of 4 variables, $\delta_v$, $\lambda_v$, $\mathbf{r}_v$ and $\alpha_v$. The computation of $\delta_v$ is a convex optimization problem with a linear objective function. Calculating $\lambda_v$ requires solving a quadratic programming problem and contains an inverse of $\mathbf{U}_v$. It can be shown that the inverse of $\mathbf{U}_v$ always exists. Variables $\mathbf{r}_v$ and $\alpha_v$ are calculated directly. The complexity of the algorithm is dominated by the quadratic programming at each iteration. Since the complexity of quadratic programming is $\mathcal{O}(n^3)$, we can conclude that the complexity at each iteration is $\mathcal{O}(n^3)$. Note that the complexity of solving Problem (\ref{eq:MinMaxMatrix}) with Algorithm 1 is dominated by ADMoM, which is affected by the network topologies.
\subsection{Scalability and Real Time Property}
{
Algorithm 1 has made no assumption on the form of the datasets or the networks, and thus it is applicable to different situations. In addition, it can be implemented as a real-time algorithm as the decision variables are updated at each step. The attacker and the learner can adapt their strategies online without restarting the whole algorithm. For example, the attacker can choose to attack at any time or compromise different nodes with different capabilities; the learner can add or delete nodes, change the network connections, and add or delete the training data. The real-time property provides a convenient way for the learner to design secure network topologies and algorithms by comparing the converged saddle-point equilibrium performances under different strategies. }
\subsection{Security and Resiliency}
{
Algorithm 1 studies the situation when there is an attacker who can change the value of training data. The algorithm provides inherent security to the DSVM as it captures the attacker's goal of maximizing the classification error of the learner. The resiliency of individual nodes in the network comes from the distributed and iterative nature of the algorithm. In this algorithm, nodes in a network are cooperative. The performance of one node affects other nodes. Compromised nodes can reduce the impacts of the attacker's manipulation of training data through the information from uncompromised nodes. If a node has a sufficient number of healthy neighbors, it can learn from their classifiers to achieve an acceptable performance. However, when a large number of nodes are compromised, it will be difficult for the compromised nodes to recover from such failure. }
\subsection{Efficiency and Privacy}
{
Algorithm 1 is a fully distributed algorithm which does not require a fusion center to store or operate large datasets. In this algorithm, each node operates on their own data and computes their own discriminant functions. Thus, we can implement it efficiently compared to its centralized counterpart. Besides, Algorithm 1 only requires the communications of decision variables $\mathbf{r}_v$ rather than the training data or other parameters between the neighboring nodes, which reduces the communication overhead and keeps privacy at the same time. The notion of differential privacy can also be applied to safeguard our distributed learning framework against stronger privacy breaches. Interested readers can refer to \cite{zhang2016dual,zhang2017dynamic}.}

In next section, we will fully analyze the convergence of Algorithm $1$.
\section{Convergence of Algorithm $1$}
\label{sec:Convergence}
Convergence is important for iterative algorithms. In this section, we give a detailed proof of the convergence of our algorithm. We first prove that iterations (\ref{eq:MinADMMRi1})-(\ref{eq:MinADMMRi3}) converge to the solution of the Min-Problem (\ref{eq:Min}) for given $\{\delta_v^*\}$, then we prove iterations (\ref{eq:MinMaxSoli1})-(\ref{eq:MinMaxSoli4}) converges to the equilibrium of the minimax Problem (\ref{eq:MinMaxMatrix}). 

Since iterations (\ref{eq:MinADMMRi1})-(\ref{eq:MinADMMRi3}) come directly from (\ref{eq:MinADMMi1})-(\ref{eq:MinADMMi4}), to prove the convergence of (\ref{eq:MinADMMRi1})-(\ref{eq:MinADMMRi3}), we only need to show that iterations (\ref{eq:MinADMMi1})-(\ref{eq:MinADMMi4}) converge to the solutions of Min-Problem (\ref{eq:Min}) for given $\{\delta_v^*\}$. We will follow a similar proof in \cite{boyd2011distributed}.

Note that the Min-Problem (\ref{eq:Min}) can be reformulated with the hinge loss function as follows: 
\begin{equation}
\label{eq:MinADMMConvergenceReformulation}
\begin{array}{*{20}{l}}
\begin{array}{l}
\mathop {\min }\limits_{\{ {{\bf{r}}_v},{\omega _{vu}}\} } \frac{1}{2}\sum\limits_{v \in \mathcal{V}} {{\bf{r}}_v^T{\Pi _{p + 1}}{{\bf{r}}_v}}  + {V_a}{C_l}\sum\limits_{v \in \mathcal{V}_a} {{\bf{r}}_v^T{\widehat{\bf{I}}_{p\times(p + 1)}^T}\delta _v^*} \\
\begin{array}{*{20}{c}}
{}&{}
\end{array} + V{C_l}\sum\limits_{v \in \mathcal{V}} {\sum\limits_{n = 1}^{{N_v}} {{{\left[ {1 - {y_{vn}}[{\bf{x}}_{vn}^T,1]{{\bf{r}}_v}} \right]}_ + }} } 
\end{array}\\
{\begin{array}{*{20}{c}}
{{\rm{s}}.{\rm{t}}.}&{{{\bf{r}}_v} = {\omega _{vu}},{\omega _{vu}} = {{\bf{r}}_u},}&{\forall v \in {\cal V},\forall u \in {{\cal B}_v}.}
\end{array}}
\end{array}
\end{equation}

The objective function is convex and the constraints are all linear, and thus the min-problem is solvable, i.e., there exists a solution for the problem. The optimal value is denoted by
\begin{equation}
\label{eq:MinOptimal}
{p^*} = \inf \left\{ \begin{array}{l}
\frac{1}{2}\sum\limits_{v \in \mathcal{V}} {{\bf{r}}_v^T{\Pi _{p + 1}}{{\bf{r}}_v}}  + {V_a}{C_l}\sum\limits_{v \in \mathcal{V}_a} {{\bf{r}}_v^T{\widehat{\bf{I}}_{p\times(p + 1)}^T}\delta _v^*} \\
 + V{C_l}\sum\limits_{v \in \mathcal{V}} {\sum\limits_{n = 1}^{{N_v}} {{{\left[ {1 - {y_{vn}}[{\bf{x}}_{vn}^T,1]{{\bf{r}}_v}} \right]}_ + }} } \\
\left| \begin{array}{l}
{{\bf{r}}_v} = {\omega _{vu}},{\omega _{vu}} = {{\bf{r}}_u},
\forall v \in {\cal V},\forall u \in {{\cal B}_v}
\end{array} \right.
\end{array} \right\}.
\end{equation}

Define the unaugmented Lagrangian $L_0$ as
\begin{equation}
\label{eq:MinLagUnaugmented}
\begin{array}{l}
{L_0 }(\{ {{\bf{r}}_v}\} ,\{ {\xi _v}\} ,\{ {\omega _{vu}}\} ,\{ {\alpha _{vu,k}}\} )\\
 = \frac{1}{2}\sum\limits_{v \in \mathcal{V}} {{\bf{r}}_v^T{\Pi _{p + 1}}{{\bf{r}}_v}}  
 + {V_a}{C_l}\sum\limits_{v \in \mathcal{V}_a} {{\bf{r}}_v^T{\widehat{\bf{I}}_{p\times(p + 1)}^T}\delta _v^*} \\
   + V{C_l}\sum\limits_{v \in \mathcal{V}} {\sum\limits_{n = 1}^{{N_v}} {{{\left[ {1 - {y_{vn}}[{\bf{x}}_{vn}^T,1]{{\bf{r}}_v}} \right]}_ + }} } 
 + \sum\limits_{v \in \mathcal{V}} {\sum\limits_{u \in {{\cal B}_v}} {\alpha _{vu,1}^T({{\bf{r}}_v} - {\omega _{vu}})} } \\ + \sum\limits_{v \in \mathcal{V}} {\sum\limits_{u \in {{\cal B}_v}} {\alpha _{vu,2}^T({\omega _{vu}} - {{\bf{r}}_u})} }  .
\end{array}
\end{equation}

To show the convergence, we state the following assumption:

\begin{assumption}
\label{assumption 1}
The unaugmented Lagrangian $L_0$ has a saddle point. Explicitly, there exist $(\{ {{\bf{r}}_v}\}^* ,\{ {\xi _v}\}^* ,\{ {\omega _{vu}}\}^* ,\{ {\alpha _{vu,k}}\}^* )$ not necessarily unique, for which
\[\begin{array}{l}
{L_0}({\{ {{\bf{r}}_v}\} ^ * },{\{ {\xi _v}\} ^ * },{\{ {\omega _{vu}}\} ^ * },\{ {\alpha _{vu,k}}\} )\\
 \le {L_0}({\{ {{\bf{r}}_v}\} ^ * },{\{ {\xi _v}\} ^ * },{\{ {\omega _{vu}}\} ^ * },{\{ {\alpha _{vu,k}}\} ^ * })\\
 \le {L_0}(\{ {{\bf{r}}_v}\} ,\{ {\xi _v}\} ,\{ {\omega _{vu}}\} ,{\{ {\alpha _{vu,k}}\} ^ * }) .
\end{array}\]
\end{assumption}

From Assumption $1$, $L_0 (\{ {{\bf{r}}_v}\}^* ,\{ {\xi _v}\}^* ,\{ {\omega _{vu}}\}^* ,\{ {\alpha _{vu,k}}\}^* )$ is finite for any saddle point $(\{ {{\bf{r}}_v}\}^* ,\{ {\xi _v}\}^* ,\{ {\omega _{vu}}\}^* ,\{ {\alpha _{vu,k}}\}^* )$ . This indicates that $(\{ {{\bf{r}}_v}\}^* ,\{ {\xi _v}\}^* ,\{ {\omega _{vu}}\}^*  )$ is a solution of (\ref{eq:MinADMMConvergenceReformulation}).  Also it shows that $\{ {\alpha _{vu,k}}\}^* $ is dual optimal, and strong duality holds, i.e., the optimal values of the primal and dual problems are equal. Notice that there is no assumption on $\mathbf{X}_v$, $\mathbf{Y}_v$.

Define primal residuals $r_{vu,1}=\mathbf{r}_v-\omega_{vu}$ and $r_{vu,2}=\omega_{vu}-\mathbf{r}_u$, dual residuals $s_{vu}^{(t)} = \eta (\omega_{vu} ^{(t)} - \omega_{vu} ^{(t-1)})$, and Lyapunov function $J^{(t)}$ for the algorithm,
\begin{equation}
\label{eq:ConSumNormResiduals}
\begin{array}{l}
{J^{(t)}} = \eta \sum\limits_{v \in \mathcal{V}} {\sum\limits_{u \in {{\cal B}_v}} {\left\| {\omega _{vu}^{(t)} - \omega _{vu}^*} \right\|_2^2} }  + \frac{1}{\eta }\sum\limits_{v \in \mathcal{V}} {\sum\limits_{u \in {{\cal B}_v}} {\left\| {\alpha _{vu,1}^{(t)} - \alpha _{vu,1}^ * } \right\|_2^2} } \\
\begin{array}{*{20}{c}}
{}&{}
\end{array} + \frac{1}{\eta }\sum\limits_{v \in \mathcal{V}} {\sum\limits_{u \in {{\cal B}_v}} {\left\| {\alpha _{vu,2}^{(t)} - \alpha _{vu,2}^ * } \right\|_2^2} } .
\end{array}
\end{equation}
$J^{(t)}$ is nonnegative and it decreases in each iteration.

\begin{lemma}
\label{lemma 1} 
Under Assumption 1, the following inequalities hold:
\begin{equation}
\label{eq:ConInequ1}
\begin{array}{l}
{J^{(t+1)}} \le {J^{(t)}} - \eta \sum\limits_{v \in \mathcal{V}} {\sum\limits_{u \in {{\cal B}_v}} {\left\| {r_{vu,1}^{(t+1)}} \right\|_2^2} } - \eta \sum\limits_{v \in \mathcal{V}} {\sum\limits_{u \in {{\cal B}_v}} {\left\| {r_{vu,2}^{(t+1)}} \right\|_2^2} }\\
 - 2 \eta \sum\limits_{v \in \mathcal{V}}{\sum\limits_{u \in {{\cal B}_v}} {\left\| {\omega _{vu}^{(t+1)} - \omega _{vu}^{(t)}} \right\|_2^2} }.
\end{array}
\end{equation}
\begin{equation}
\label{eq:ConInequ2}
\begin{array}{l}
{p^{(t+1)}} - {p^ * } \le 
 - \sum\limits_{v \in \mathcal{V}} {\sum\limits_{u \in {{\cal B}_v}} {\left( {\alpha _{vu,1}^{(t+1)T}r_{vu,1}^{(t+1)} + \alpha _{vu,2}^{(t+1)T}r_{vu,2}^{(t+1)}} \right)} } \\
 + \sum\limits_{v \in \mathcal{V}} {\sum\limits_{u \in {{\cal B}_v}} {\left( {\eta (\omega _{vu}^{(t+1)} - \omega _{vu}^{(t)})(\omega _{vu}^ *  - \omega _{vu}^{(t+1)} - r_{vu,1}^{(t+1)})} \right)} } \\
 + \sum\limits_{v \in \mathcal{V}} {\sum\limits_{u \in {{\cal B}_v}} {\left( {\eta (\omega _{vu}^{(t+1)} - \omega _{vu}^{(t)})(\omega _{vu}^ *  - \omega _{vu}^{(t+1)} + r_{vu,2}^{(t+1)})} \right)} } .
\end{array}
\end{equation}
\begin{equation}
\label{eq:ConInequ3}
{p^*} - {p^{(t+1)}} \le \sum\limits_{v \in \mathcal{V}} {\sum\limits_{u \in {{\cal B}_v}} {\alpha _{vu,1}^{ * T}r_{vu,1}^{(t+1)}} }  + \sum\limits_{v \in \mathcal{V}} {\sum\limits_{u \in {{\cal B}_v}} {\alpha _{vu,2}^{ * T}r_{vu,2}^{(t+1)}} } .
\end{equation}
\end{lemma}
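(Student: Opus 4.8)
The plan is to adapt the classical ADMM convergence argument of Boyd et al.\ \cite{boyd2011distributed} to the present consensus setting, which differs from the textbook case in two ways: there are two families of dual variables $\alpha_{vu,1}$ and $\alpha_{vu,2}$ enforcing $\mathbf{r}_v=\omega_{vu}$ and $\omega_{vu}=\mathbf{r}_u$ separately, and the objective contains the nonsmooth hinge loss. I would prove the three inequalities in the order (\ref{eq:ConInequ3}), then (\ref{eq:ConInequ2}), then (\ref{eq:ConInequ1}), since the last one follows by adding the first two and rearranging.

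First I would establish (\ref{eq:ConInequ3}) directly from Assumption \ref{assumption 1}. Because $(\{\mathbf{r}_v\}^*,\{\xi_v\}^*,\{\omega_{vu}\}^*)$ minimizes $L_0(\cdot,\{\alpha_{vu,k}\}^*)$, I would evaluate $L_0$ at the current iterate with the optimal multipliers: its objective part equals $p^{(t+1)}$, its coupling terms are the linear forms $\alpha_{vu,1}^{*T}r_{vu,1}^{(t+1)}+\alpha_{vu,2}^{*T}r_{vu,2}^{(t+1)}$, and at the optimum these residuals vanish so $L_0(\text{primal}^*,\alpha^*)=p^*$. The minimization property then gives $p^{(t+1)}+\sum_{v}\sum_{u\in\mathcal{B}_v}(\alpha_{vu,1}^{*T}r_{vu,1}^{(t+1)}+\alpha_{vu,2}^{*T}r_{vu,2}^{(t+1)})\ge p^*$, which is (\ref{eq:ConInequ3}) after transposing.

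Next I would derive (\ref{eq:ConInequ2}) from the optimality of the two ADMM subproblems. The $\omega$-update (\ref{eq:MinADMMi2}) is smooth and strongly convex, so its stationarity condition together with the dual recursions (\ref{eq:MinADMMi3})--(\ref{eq:MinADMMi4}) yields $\alpha_{vu,1}^{(t+1)}=\alpha_{vu,2}^{(t+1)}$, the identity already used in the proof of Proposition \ref{proposition3}. For the $\mathbf{r}$-update (\ref{eq:MinADMMi1}) I would write the subgradient optimality condition (using the subdifferential of the hinge loss), add and subtract $\omega_{vu}^{(t+1)}$ inside the quadratic penalty terms, and substitute the dual updates; this shows $(\{\mathbf{r}_v^{(t+1)}\},\{\xi_v^{(t+1)}\})$ minimizes $L_0$ over the primal variables when the multipliers are shifted by $\eta(\omega_{vu}^{(t+1)}-\omega_{vu}^{(t)})$. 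Comparing this functional at the iterate against its value at the feasible optimizer $(\{\mathbf{r}_v\}^*,\{\xi_v\}^*)$, where $\mathbf{r}_v^*=\omega_{vu}^*=\mathbf{r}_u^*$ makes the coupling terms collapse, and regrouping the terms proportional to $\eta(\omega_{vu}^{(t+1)}-\omega_{vu}^{(t)})$ produces (\ref{eq:ConInequ2}).

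Finally, adding (\ref{eq:ConInequ2}) and (\ref{eq:ConInequ3}) cancels $p^{(t+1)}-p^*$ and leaves $0$ bounded above by a quadratic expression in the residuals $r_{vu,k}^{(t+1)}$, the increments $\omega_{vu}^{(t+1)}-\omega_{vu}^{(t)}$, and the multiplier gaps $\alpha_{vu,k}^{(t)}-\alpha_{vu,k}^{*}$. Substituting $\alpha_{vu,k}^{(t+1)}=\alpha_{vu,k}^{(t)}+\eta r_{vu,k}^{(t+1)}$ and completing the square so that the cross terms reassemble into $\|\alpha_{vu,k}^{(t+1)}-\alpha_{vu,k}^*\|_2^2$ and $\|\omega_{vu}^{(t+1)}-\omega_{vu}^*\|_2^2$ recovers $J^{(t+1)}$ on one side and $J^{(t)}$ minus the three nonnegative terms on the other, giving (\ref{eq:ConInequ1}); this is precisely the passage from the raw ADMM bound to the monotone Lyapunov bound in \cite{boyd2011distributed}. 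I expect this last completing-the-square step to be the main obstacle, in particular tracking the signs and the factor $2$ in front of $\|\omega_{vu}^{(t+1)}-\omega_{vu}^{(t)}\|_2^2$ through the double-multiplier coupling; by contrast, the nonsmooth hinge loss causes no real difficulty once subgradients replace gradients in the stationarity conditions, and no assumptions on $\mathbf{X}_v$ or $\mathbf{Y}_v$ are needed anywhere.
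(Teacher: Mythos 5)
Your plan reproduces the paper's own argument (Appendix C) essentially step for step: inequality (\ref{eq:ConInequ3}) from the saddle-point property of $L_0$ in Assumption \ref{assumption 1} with the residuals vanishing at the optimum, inequality (\ref{eq:ConInequ2}) from the subgradient optimality conditions of the $\mathbf{r}$- and $\omega$-subproblems after shifting the multipliers by the dual updates, and inequality (\ref{eq:ConInequ1}) by adding the two and completing squares into $J^{(t+1)}-J^{(t)}$. The only detail you leave open --- the sign of the cross term $2\eta(r_{vu,1}^{(t+1)}-r_{vu,2}^{(t+1)})^T(\omega_{vu}^{(t+1)}-\omega_{vu}^{(t)})$ that survives the square-unfolding --- is handled in the paper by comparing the $\omega$-minimizations at iterations $t$ and $t+1$ against each other's minimizers, which gives $\eta\sum_{v}\sum_{u\in\mathcal{B}_v}(r_{vu,1}^{(t+1)}-r_{vu,2}^{(t+1)})^T(\omega_{vu}^{(t+1)}-\omega_{vu}^{(t)})\ge 0$ and lets the term be dropped.
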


\begin{proof} See Appendix C. \end{proof}

Inequality (\ref{eq:ConInequ1}) indicates that $J^{(t)}$ decreases at each step, since $J^{(t)}$ is nonnegative, thus $J^{(t)}$ converges to $0$, which also indicates that $r_{vu,k}^{(t)}$ and ${\omega _{vu}^{(t+1)} - \omega _{vu}^{(t)}}$ converge to $0$. As a result, right hand sides of (\ref{eq:ConInequ2}) and (\ref{eq:ConInequ3}) converge to $0$. Since $p^{(t+1)} - p^*$ is both upper and lower bounded by values converging to $0$, $p^{(t+1)} - p^*$ converges to $0$. From these inequalities, we arrive at the following proposition.

\begin{proposition}
\label{proposition 5}
Under on Assumption 1, (\ref{eq:MinADMMi1})-(\ref{eq:MinADMMi4}) satisfy that
\begin{itemize}
\item Primal residuals $r_{vu,k}^{(t)} \rightarrow 0$ as $t \rightarrow \infty$.
\item Dual residuals
$s_{vu}^{(t)} \rightarrow 0$ as $t \rightarrow \infty$.
\item Objective $p^{(t)} \rightarrow p^*$ as $t\rightarrow \infty$.
\item Dual variables $\alpha_{vu,k}^{(t)}\rightarrow \alpha_{vu,k}^*$ as $t\rightarrow \infty$.
\item Decision variables $\mathbf{r}_v \rightarrow \mathbf{r}_v^*$.
\end{itemize}
The convergence of iterations (\ref{eq:MinADMMi1})-(\ref{eq:MinADMMi4}) to solutions of Min-Problem (\ref{eq:Min}) for given $\{\delta_v^*\}$ is guaranteed. 
\end{proposition}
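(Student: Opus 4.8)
The plan is to derive Proposition~\ref{proposition 5} directly from the three inequalities of Lemma~\ref{lemma 1}, following the classical ADMM convergence argument. First I would exploit (\ref{eq:ConInequ1}): it shows the Lyapunov function $J^{(t)}$ is monotonically nonincreasing, and since $J^{(t)}\ge 0$ by construction (a sum of squared norms), it decreases to some limit $J^{\infty}\ge 0$. Telescoping (\ref{eq:ConInequ1}) over $t$ then gives
\[
\eta\sum_{t\ge 0}\sum_{v\in\mathcal V}\sum_{u\in\mathcal B_v}\Big(\|r_{vu,1}^{(t+1)}\|_2^2+\|r_{vu,2}^{(t+1)}\|_2^2+2\|\omega_{vu}^{(t+1)}-\omega_{vu}^{(t)}\|_2^2\Big)\le J^{(0)}<\infty,
\]
so each summand is the general term of a convergent series and must tend to $0$. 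This yields $r_{vu,k}^{(t)}\to 0$ (the primal-residual claim) and $\omega_{vu}^{(t+1)}-\omega_{vu}^{(t)}\to 0$; the latter is exactly $s_{vu}^{(t+1)}/\eta$, so $s_{vu}^{(t)}\to 0$ (the dual-residual claim).

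Next I would establish objective convergence by a squeeze. Boundedness of $J^{(t)}$ keeps $\{\omega_{vu}^{(t)}\}$ and $\{\alpha_{vu,k}^{(t)}\}$ in a bounded neighborhood of their optimal values, so every factor on the right-hand sides of (\ref{eq:ConInequ2}) and (\ref{eq:ConInequ3}) other than $r_{vu,k}^{(t+1)}$ and $\omega_{vu}^{(t+1)}-\omega_{vu}^{(t)}$ is bounded, while those two factors go to $0$. Hence the right-hand side of (\ref{eq:ConInequ3}) tends to $0$ and the right-hand side of (\ref{eq:ConInequ2}) tends to $0$, so $p^{(t+1)}-p^{*}$ is trapped between two null sequences and $p^{(t)}\to p^{*}$.

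Then I would address the dual and primal variables. Since $J^{(t)}$ decreases to $J^\infty$ while $r_{vu,k}^{(t)}\to 0$ and $\omega_{vu}^{(t+1)}-\omega_{vu}^{(t)}\to 0$, the $\omega$- and $\alpha$-blocks of $J^{(t)}$ must individually settle; combining this with the update rules (\ref{eq:MinADMMi3})--(\ref{eq:MinADMMi4}) and the fact that $\{\alpha_{vu,k}\}^{*}$ is dual optimal under Assumption~\ref{assumption 1} identifies the limit as $\alpha_{vu,k}^{*}$. For the decision variables, vanishing residuals force the consensus relations $\mathbf r_v=\omega_{vu}=\mathbf r_u$ to hold asymptotically, so any accumulation point of $\{\mathbf r_v^{(t)}\}$ is feasible, and with $p^{(t)}\to p^{*}$ it is optimal for (\ref{eq:MinADMMConvergenceReformulation}). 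Finally, using the closed-form update (\ref{eq:MinADMMRi2}), which expresses $\mathbf r_v^{(t+1)}$ continuously in terms of $\lambda_v^{(t+1)}$, $\alpha_v^{(t)}$ and the neighbors' iterates, together with the fact that $\lambda_v$ lives in the compact box $[\mathbf 0,VC_l\mathbf 1_v]$, one pins the whole trajectory down to a single limit $\mathbf r_v^{*}$.

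The hard part will be this last bullet, namely convergence of the primal iterates $\mathbf r_v^{(t)}$ rather than merely of the objective and the residuals. Because $\Pi_{p+1}$ is only positive semidefinite (its $(p+1,p+1)$ entry is $0$), the Min-Problem is not strongly convex in $\mathbf r_v$, so no quadratic-growth shortcut is available; instead one must lean on the explicit ADMM updates of Proposition~\ref{proposition3} and on connectedness of $\mathcal G$ (which leaves the consensus operator rank-deficient only along the all-agree direction) to exclude multiple limit points. Dealing carefully with the possible non-uniqueness of the saddle point of $L_0$ in Assumption~\ref{assumption 1} is the delicate point: the argument should show that each trajectory converges, but only to \emph{some} saddle point determined by the initialization, which is exactly what the statement asserts.
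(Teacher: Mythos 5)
Your treatment of the first three bullets is exactly the paper's argument: telescope (\ref{eq:ConInequ1}) to get summability of $\|r_{vu,k}^{(t+1)}\|_2^2$ and $\|\omega_{vu}^{(t+1)}-\omega_{vu}^{(t)}\|_2^2$, conclude that the primal and dual residuals vanish, and then squeeze $p^{(t+1)}-p^*$ between the right-hand sides of (\ref{eq:ConInequ2}) and (\ref{eq:ConInequ3}), using $J^{(t)}\le J^{(0)}$ to keep $\omega_{vu}^{(t)}$ and $\alpha_{vu,k}^{(t)}$ bounded. Where you differ is on the last two bullets: the paper's own proof stops at $p^{(t)}\to p^*$ and simply declares the proposition established, giving no argument for $\alpha_{vu,k}^{(t)}\to\alpha_{vu,k}^*$ or $\mathbf{r}_v^{(t)}\to\mathbf{r}_v^*$, whereas you at least attempt them and correctly identify iterate convergence as the genuinely delicate step (the objective is not strongly convex in $\mathbf{r}_v$ because $\Pi_{p+1}$ is singular, and the saddle point of $L_0$ need not be unique). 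Your sketch there is not yet a proof, however: convergence of the scalar $J^{(t)}$ does not force the $\omega$- and $\alpha$-blocks to \emph{individually} settle, since $J^{(t)}$ measures squared distance to one fixed saddle point and a positive limit is compatible with the iterates circulating on a sphere around it. The standard repair is to use boundedness ($J^{(t)}\le J^{(0)}$) to extract a convergent subsequence, verify via the vanishing residuals and the optimality conditions (\ref{eq:ConInequ2Condition}) that its limit is itself a saddle point of $L_0$, and then rerun the monotone-Lyapunov argument with $J$ redefined relative to that limit to promote subsequential convergence to convergence of the whole sequence; only then does $\mathbf{r}_v^{(t)}\to\mathbf{r}_v^*$ follow through (\ref{eq:MinADMMRi2}). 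In short, you match the paper on everything the paper actually proves, and your added material—while still a sketch—flags a real gap in the paper's own argument for the final two bullets.
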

\begin{proof} Under Assumption $1$, inequalities (\ref{eq:ConInequ1}), (\ref{eq:ConInequ2}) and (\ref{eq:ConInequ3}) hold based on Appendix C. 

Inequality (\ref{eq:ConInequ1}) indicates that $J^{(t)}$ decreases based on the sum of the norm of primal residuals and the change of $\omega_{vu}$ over one iteration. Since $J^{(t)} \le J^0$, $\alpha_{vu,1}^{(t)},\alpha_{vu,2}^{(t)}$ and $\omega_{vu}^{(t)}$ are bounded. By iterating (\ref{eq:ConInequ1}), we have
\[\begin{array}{l}
\eta \sum\limits_{t = 0}^\infty  {\sum\limits_{v \in \mathcal{V}} {\sum\limits_{u \in {{\cal B}_v}} {\left( {\left\| {r_{vu,1}^{(t+1)}} \right\|_2^2 + \left\| {r_{vu,2}^{(t+1)}} \right\|_2^2} \right)} } } \\
 + 2\eta \sum\limits_{t = 0}^\infty  {\sum\limits_{v \in \mathcal{V}} {\sum\limits_{u \in {{\cal B}_v}} {\left\| {\omega _{vu}^{(t+1)} - \omega _{vu}^{(t)}} \right\|_2^2} } }  \le {J^0}.
\end{array}\]
This implies that $r_{vu,1}^{(t)} , r_{vu,2}^{(t)}$  and $(\omega_{vu}^{(t+1)}-\omega_{vu}^{(t)}) $ converge to $0$ as $t \rightarrow \infty$. Thus, the dual residuals $s_{vu}^{(t)} = \eta (\omega_{vu}^{(t)}-\omega_{vu}^{(t-1)})$ converge to $0$. 

The right hand side of inequality (\ref{eq:ConInequ2}) goes to $0$, since $(\omega_{vu}^*-\omega_{vu}^{(t+1)})$ is bounded, and $r_{vu,1}^{(t)} , r_{vu,2}^{(t)}$  and $(\omega_{vu}^{(t+1)}-\omega_{vu}^{(t)}) $ converge to $0$. The right hand side of inequality (\ref{eq:ConInequ3}) goes to $0$, since $r_{vu,1}^{(t)}$ and $ r_{vu,2}^{(t)}$ converge to $0$. As a result, we have $\mathop {\lim }\limits_{t \to \infty } {p^{(t)}} = {p^ * }$, and arrive at Proposition 5. \end{proof}

Based on Proposition 5, the convergence of (\ref{eq:MinADMMi1})-(\ref{eq:MinADMMi4}) is guaranteed. Since under Assumption 1, strong duality holds, (\ref{eq:MinADMMi1})-(\ref{eq:MinADMMi4}) will converge to the solution of the Min-Problem (\ref{eq:Min}) with given $\{\delta_v^*\}$. As a result, (\ref{eq:MinADMMRi1})-(\ref{eq:MinADMMRi3}) will also convege to Problem (\ref{eq:Min}). Next we prove the convergence of Algorithm 1 to the solution of minimax Problem (\ref{eq:MinMaxMatrix}). 

Assume that the minimax Problem (\ref{eq:MinMaxMatrix}) will reach an equilibrium $\{p^*,q^*\}$, where $p^*$ is the optimal objective of the Min-Problem (\ref{eq:Min}) at the equilibrium, and $q^*$ is the optimal objective of the Max-Problem (\ref{eq:Max}) at the equilibrium. We arrive at the following result.

\begin{proposition}
\label{proposition 6}
Under on Assumption 1, (\ref{eq:MinMaxSoli1})-(\ref{eq:MinMaxSoli4}) satisfy that
\begin{itemize}
\item Pair objectives $\{p^{(t)},q^{(t)}\} \rightarrow \{p^*,q^*\}$ as $t\rightarrow \infty$.
\item Pair decision variables $\{ \mathbf{r}_v^{(t)},\delta_v^{(t)}\} \rightarrow \{ \mathbf{r}_v^*,\delta_v^*\}$ as $t\rightarrow \infty$.
\end{itemize}
In other words, the pair of objectives and decision variables will converge to the saddle-point equilibrium.
\end{proposition}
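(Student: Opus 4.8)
The plan is to read iterations (\ref{eq:MinMaxSoli1})--(\ref{eq:MinMaxSoli4}) as an outer alternating best-response scheme whose inner min-step is exactly the ADMoM recursion of Proposition 3. I would first assemble the two facts already in hand. By Proposition 2, the minimax and max-min values of Problem (\ref{eq:MinMaxMatrix}) coincide, so a saddle-point equilibrium $(\{\mathbf{r}_v^{*},\xi_v^{*}\},\{\delta_v^{*}\})$ exists and the equilibrium objective pair $\{p^{*},q^{*}\}$ is the value of the game. By Proposition 5, the sub-iterations (\ref{eq:MinMaxSoli2})--(\ref{eq:MinMaxSoli4}) --- which are precisely (\ref{eq:MinADMMRi1})--(\ref{eq:MinADMMRi3}) --- converge to the exact minimizer of the Min-Problem (\ref{eq:Min}) whenever $\{\delta_v\}$ is held fixed, while (\ref{eq:MinMaxSoli1}) returns the exact maximizer of the $l_1$-relaxed Max-Problem (\ref{eq:MaxSol}) for the current $\{\mathbf{r}_v\}$. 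Hence it suffices to show that alternating the exact min-response with the exact max-response converges to a saddle point.

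Second, I would establish compactness. Each $\delta_v^{(t)}$ lies in the atomic action set $\mathcal{U}_{v0}$, which by property (P2) is closed and bounded, hence compact, so $\{\delta_v^{(t)}\}_t$ is bounded; and for $\{\delta_v\}$ ranging over this compact set, the minimizers of the convex Min-Problem (\ref{eq:Min}) stay in a bounded set (the objective is strongly convex in each $\mathbf{w}_v$ and is coercive in the remaining feasible directions), so $\{\mathbf{r}_v^{(t)}\}_t$ is bounded as well. Thus $\{(\mathbf{r}_v^{(t)},\delta_v^{(t)})\}_t$ has convergent subsequences.

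Third, I would show every limit point is a saddle point. Along a subsequence with $\mathbf{r}_v^{(t)}\to\bar{\mathbf{r}}_v$ and $\delta_v^{(t)}\to\bar\delta_v$, I pass to the limit in the optimality conditions: continuity of the Min-Problem solution map together with Proposition 5 gives that $\{\bar{\mathbf{r}}_v\}$ solves the Min-Problem at $\{\bar\delta_v\}$, and upper hemicontinuity (Berge) of the argmax correspondence of the continuous, $\delta$-linear Max-Problem over the compact convex set $\mathcal{U}_{v0}$ gives that $\{\bar\delta_v\}$ solves the Max-Problem at $\{\bar{\mathbf{r}}_v\}$. Then $(\{\bar{\mathbf{r}}_v\},\{\bar\delta_v\})$ satisfies the saddle-point inequalities of Definition 1 and hence is an equilibrium of Problem (\ref{eq:MinMaxMatrix}), with objective pair equal to the game value $\{p^{*},q^{*}\}$. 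Since every convergent subsequence yields the same limiting objectives, $\{p^{(t)},q^{(t)}\}\to\{p^{*},q^{*}\}$, and since all limit points are equilibria, $\{\mathbf{r}_v^{(t)},\delta_v^{(t)}\}$ converges to a saddle-point equilibrium.

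The hard part is to upgrade this to convergence of the whole sequence and, in particular, to exclude limit cycles: because the coupling $V_aC_l\,\mathbf{r}_v^{T}\widehat{\mathbf{I}}_{p\times(p+1)}^{T}\delta_v=V_aC_l\,\mathbf{w}_v^{T}\delta_v$ is bilinear, alternating best response could in principle oscillate. The route I would take is to exploit the strong convexity of the objective in $\{\mathbf{w}_v\}$: it makes $\{\delta_v\}\mapsto\{\mathbf{w}_v(\delta)\}$ single-valued and Lipschitz, and the max-response over the Euclidean ball $\mathcal{U}_{v0}$ (with the $l_1$ dead zone) is continuous, so the composed best-response map is non-expansive and the lower value $\phi(\{\delta_v\}):=\min_{\{\mathbf{r}_v,\xi_v\}}K'(\{\mathbf{r}_v,\xi_v\},\{\delta_v\})$ is a concave merit function that improves monotonically along the iterates; monotonicity of the associated value gap then pins the iterates to the equilibrium set, on which the objective value is unique. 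A milder point is that the displayed per-iteration form runs a single ADMoM update between consecutive max-steps rather than solving the Min-Problem exactly; this is covered by the two-timescale (nested-loop) reading used in the text preceding Algorithm 1, or by viewing (\ref{eq:MinMaxSoli1})--(\ref{eq:MinMaxSoli4}) as an inexact alternating method whose inner error vanishes by Proposition 5.
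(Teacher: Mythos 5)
Your overall skeleton --- existence of the saddle point from Proposition 2, exactness of the inner min-step from Proposition 5, and exactness of the max-step because (\ref{eq:MaxSol}) is a tractable convex program --- is precisely the paper's own argument, which consists of essentially those three sentences; your additions (compactness of the iterates via $\mathcal{U}_{v0}$ and coercivity, and the Berge/continuity argument for limit points) make the argument more careful than the printed proof. Two caveats on your limit-point step, though: to conclude that a subsequential limit $(\bar{\mathbf{r}}_v,\bar\delta_v)$ is a mutual best response you need $\delta_v^{(t_k)}$ and $\delta_v^{(t_k+1)}$ to converge to the same point (your subsequence controls the former, while the latter is the best response to $\mathbf{r}_v^{(t_k)}$), i.e., you need successive differences to vanish, which is not established; and the objective is strongly convex only in $\mathbf{w}_v$, not in $b_v$ (the $(p+1,p+1)$ entry of $\Pi_{p+1}$ is zero), so the min-response in $\mathbf{r}_v$ is not single-valued without further argument.

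The genuine gap is in your resolution of what you correctly identify as the hard part. The claim that the lower value $\phi(\{\delta_v\})=\min_{\{\mathbf{r}_v,\xi_v\}}K'(\{\mathbf{r}_v,\xi_v\},\{\delta_v\})$ improves monotonically along the iterates is false for alternating best response: the max-step maximizes $K'(\{\mathbf{r}_v^{(t)},\xi_v^{(t)}\},\cdot)$, which is an upper bound on $\phi$, so you only obtain $K'(\{\mathbf{r}_v^{(t)},\xi_v^{(t)}\},\{\delta_v^{(t+1)}\})\ge\phi(\{\delta_v^{(t)}\})$, and the subsequent minimization over $\{\mathbf{r}_v,\xi_v\}$ can push the value back below $\phi(\{\delta_v^{(t)}\})$. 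Likewise, Lipschitz continuity does not yield non-expansiveness, and even non-expansive maps (e.g., rotations) have non-convergent iterates; alternating best response in bilinearly coupled zero-sum games is exactly the classical setting where cycling occurs. So your argument does not actually exclude limit cycles of the outer loop. To be fair, the paper's proof is no better on this point --- it simply asserts convergence from Propositions 2 and 5 plus the tractability of the max-problem --- so the honest conclusion is that a complete proof needs an additional ingredient (a damped or averaged update, a proximal regularization of the max-step making the attacker's response a contraction, or a merit function that is genuinely monotone along the iterates), which neither you nor the paper supplies.
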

\begin{proof} In Proposition $2$, we have shown that with min-problem for the learner and the max-problem for the attacker, the constructed minimax problem is equivalent to the max-min problem. Thus, there exists an equilibrium $\{p^*,q^*\}$ with corresponding $\mathbf{r}_v^*$ and $\delta_v^*$. Since Proposition $5$ indicates that the min-problem always converges to the best response of the max-problem, with the max-problem being a linear programming problem. Therefore, we can conclude that $\{p^{(t)},q^{(t)}\} \rightarrow \{p^*,q^*\}$, $\delta_v^{(t)} \rightarrow \delta_v^*$ and $\mathbf{r}_v^{(t)} \rightarrow \mathbf{r}_v^*$. Hence, Proposition $6$ holds.  \end{proof}

Proposition $6$ shows that the separated min-problem and max-problem converge to the equilibrium, thus the convergence of Algorithm $1$ is guaranteed. With Assumption $1$, Algorithm $1$ will converge to the saddle-point solution of minimax Problem (\ref{eq:MinMaxMatrix}). Note that here we have made no assumption on $\mathbf{X}_v$ and $\mathbf{Y}_v$. Therefore, Algorithm $1$ is applicable to various different datasets. 
\section{Numerical Experiments}
\label{sec:Num}
In this section, we present numerical experiments of DSVM under adversarial environments. We use empirical risk to measure the performance of DSVM. The empirical risk at node $v$ at step $t$ is defined as follows:
\begin{equation}
\label{eq:ErrorInNodev}
{{\bf{R}}_v^{(t)}}: = \frac{1}{{{\widetilde N_{v}}}}\sum\limits_{n = 1}^{{\widetilde N_{v}}} {\frac{1}{2}\left| {{{\widetilde y}_{vn}} - {{\widehat y}_{vn}^{(t)}}} \right|} ,
\end{equation} 
where ${{{\widetilde y}_{vn}}}$ is the true label, ${{{\widehat y}_{vn}^{(t)}}}$ is the predicted label and $\widetilde N_v$ represents the number of testing samples in node $v$. The empirical risk (\ref{eq:ErrorInNodev}) sums over the number of misclassified samples in node $v$, and then divides it by the number of all testing samples in node $v$. Notice that testing samples can vary for different nodes. In order to measure the global performance, we use the global empirical risk defined as follows:   
\begin{equation}
\label{eq:Error}
{{\bf{R}}_{G}^{(t)}}: = \frac{1}{{\widetilde N}}\sum\limits_{v \in \mathcal{V}} {\sum\limits_{n = 1}^{{{\widetilde N}_v}} {\frac{1}{2}\left| {{{\widetilde y}_{vn}} - {{\widehat y}_{vn}^{(t)}}} \right|} } ,
\end{equation} 
where ${\widetilde N} = \sum\limits_{v \in \mathcal{V}} {\widetilde N_v}$, representing the total number of testing samples. Clearly, a higher global empirical risk shows that there are more testing samples being misclassified, i.e., a worse performance of DSVM. We use the first experiment to illustrate the significant impact of the attacker.

Consider a fully connected network with $3$ nodes. Each node contains $80$ training samples and $1000$ testing samples from the same global {``Rand"} dataset which are shown as points and stars in Fig. \ref{fig:NumericalExample}(a). Yellow stars and magenta points are labelled as $-1$ and $+1$, respectively. {They are generated from two-dimensional Gaussian distributions with mean vectors $(1,1)$ and $(3,3)$, and they have the same covariance matrix $\left(\begin{array}{cc}
{1}&{0} \\ 
{0}&{1}
\end{array} \right)$.} The learner has the ability $C_l = 1$ and $\eta = 1$. {The attacker attacks all three nodes with $\sum_{v\in\mathcal{V}} C_{v,\delta}= 7500 $ and $C_a = 1$.} The attack starts from the beginning of the training process. The discriminant functions found by the learner under different situations are represented by lines in Fig. \ref{fig:NumericalExample}(a). Numerical results are shown in Fig. \ref{fig:NumericalExample}(b). We can see that when there is an attacker, both the DSVM and centralized SVMs fail to separate two datasets in Fig. \ref{fig:NumericalExample}(a). {In addition, the DSVM under the control of the same attacker show worse performances as the risk is higher in Fig. \ref{fig:NumericalExample}(b). Thus, the DSVM is more vulnerable when the attacker compromises the whole system. }
\begin{figure}[]
\centering
\subfigure[]{
\includegraphics[width=0.3\textwidth]{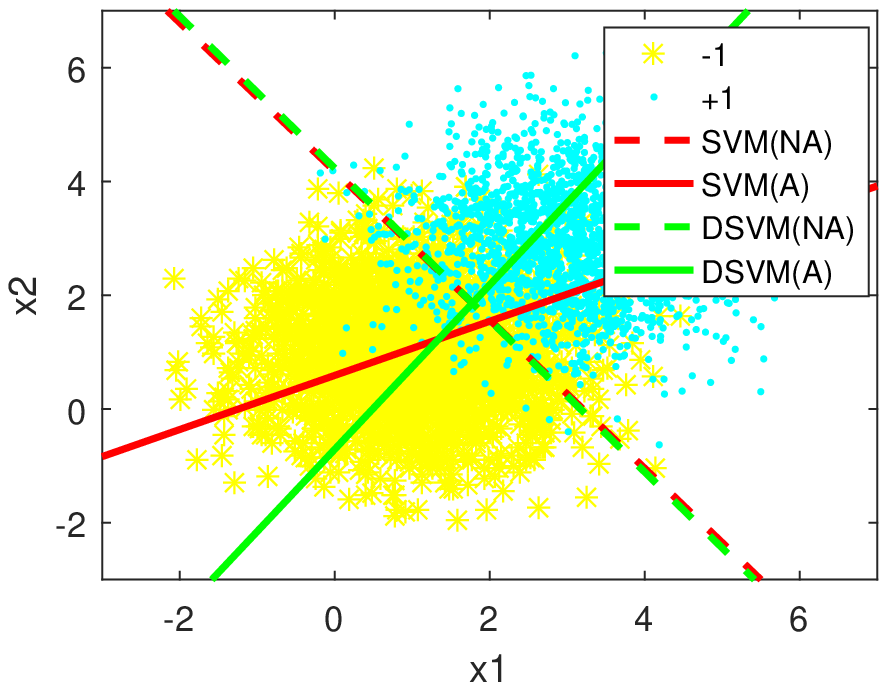}}
\subfigure[]{
\includegraphics[width=0.3\textwidth]{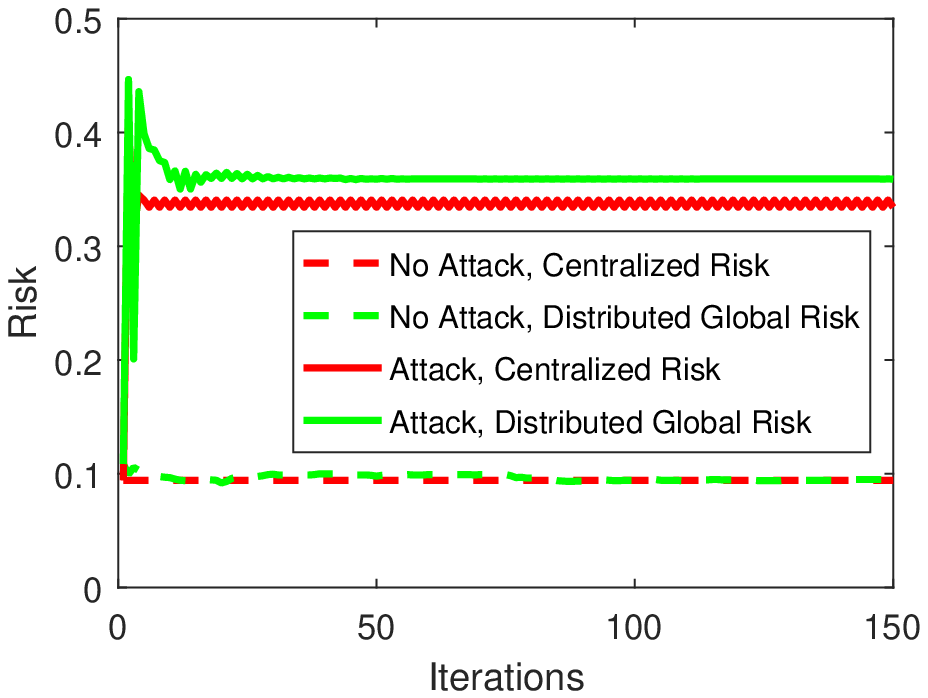}}
\vspace{-3mm}
\caption{{Evolution of the empirical risks of DSVM with an attacker at a fully connected network of $3$ nodes. Training data and testing data are generated from two Gaussian classes. The attacker attacks all three nodes from the beginning of the training process. Dotted lines and solid lines show the results for the case without an attacker and the one with an attacker. Red and green lines show the results of centralized SVMs and DSVM, respectively. } }
\label{fig:NumericalExample}
\end{figure} 

It is obvious that the attacker can cause disastrous results for the learner. In the following experiments, we will illustrate in detail how the attacker affects the training process with different abilities. We will study how the network topologies and the number of samples at each node affect the attacker's objective. {We will use the convergent global equilibrium risks to capture the impacts of the attacker on the learner. The convergence here is defined by that the moving average of the global risk with a window length of $40$ steps changes less than $0.00001$}. Without loss of generality, we will use $C_l = 1$ and $\eta = 1$ in all the experiments. {Besides the ``Rand" dataset, we will also use ``Spam"\cite{Spambase} and ``MNIST"\cite{lecun1998gradient} datasets to illustrate the impacts of the attacker. ``Spam" and ``MNIST" datasets have been widely used, for example, \cite{moreno2012study,davtalab2014multi}, and \cite{shao2014learning,tang2016extreme}, respectively. For the ``MNIST" dataset, we consider only the binary classification problem of digits ``2" and ``9".}
\subsection{Attacker's Ability}
Attacker's ability plays an important role in the game between the learner and the attacker, as a more capable attacker can modify more training data and control more nodes. There are four measures to represent the attacker' ability, the time $t$ for the attacker to take an action, the atomic action set parameter $C_{v,\delta}$, the cost parameter $C_a$, and the number of compromised nodes $| \mathcal{V}_a |$. The time $t$ for attacker to take an action will affect the results since attacking at the start of the training process is different from attacking after the convergence of the training process. $C_{v,\delta}$ comes directly from the attacker's atomic action set $\mathcal{U}_{v0}$ defined in Section III. A larger $C_{v,\delta}$ indicates that the attacker can modify the training data with a larger number. Without loss of generality, in the following experiments, we assume that the attacker has the same $C_{v,\delta}$ at all the compromised nodes, and thus we use $C_\delta = C_{v,\delta},\forall v \in \mathcal{V}_a$. $C_a$ denotes the parameter of the attacker's cost function. A larger $C_a$ will restrict the attacker's actions to change data. The number of compromised nodes $| \mathcal{V}_a |$  will affect the results as attacking more nodes gives the attacker access to modify more training samples.
\begin{figure}[]
\centering
\subfigure{
\includegraphics[width=0.3\textwidth]{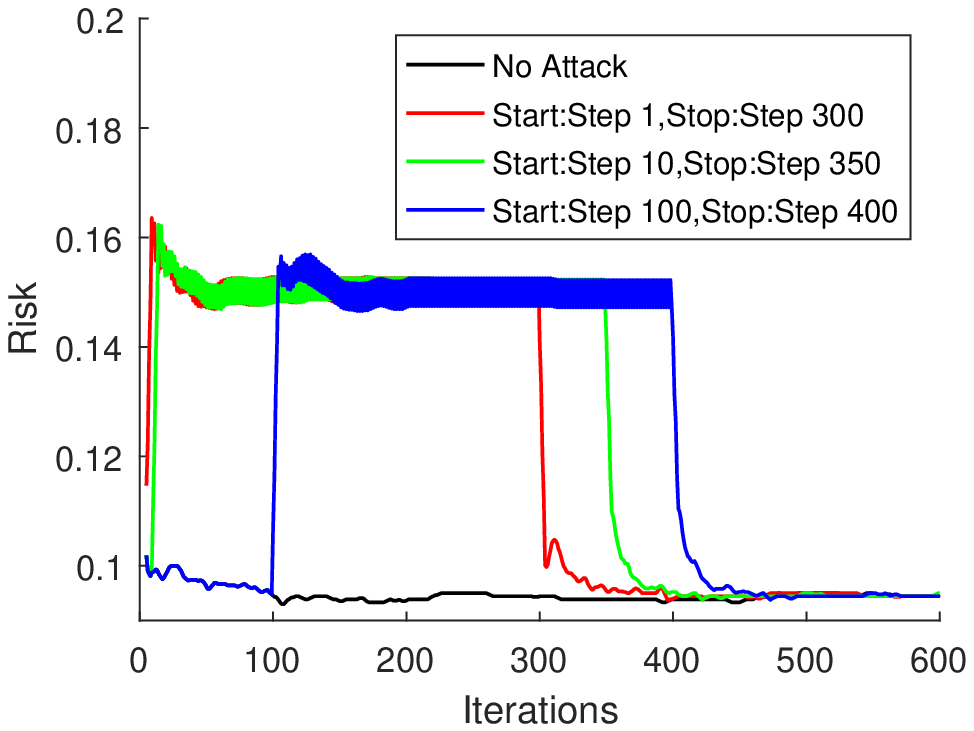}}
\subfigure{
\includegraphics[width=0.3\textwidth]{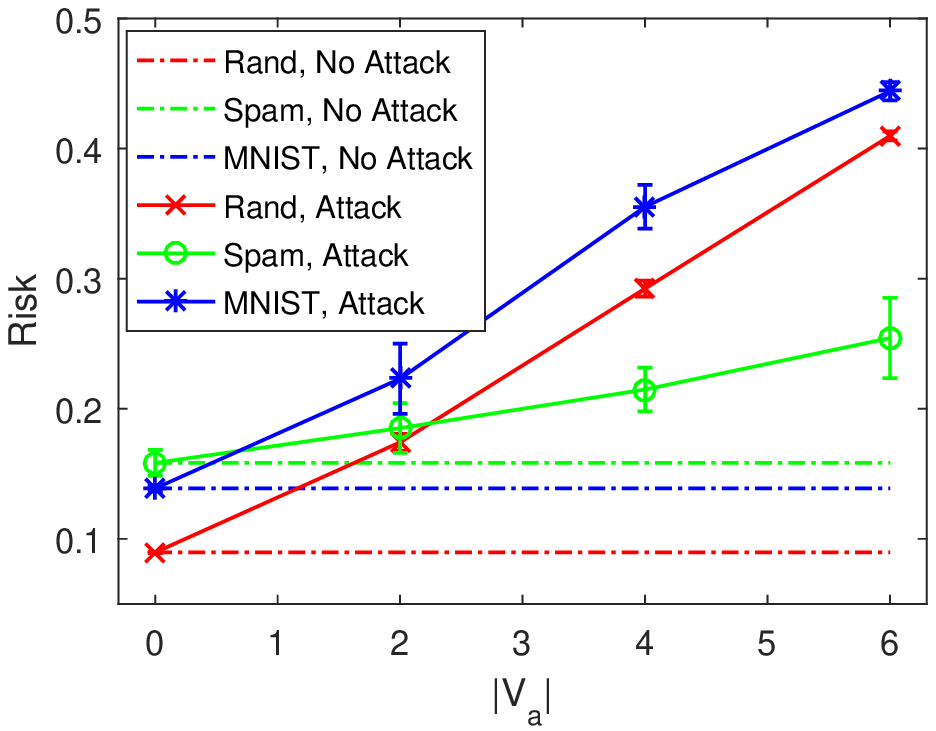}}
\vspace{-3mm}
\caption{{Global risks of DSVM at a fully connected network with $6$ nodes. Each node contains $40$ training samples and $300$ testing samples. The left figure shows the evolution of the risks on ``Rand" dataset when the attacker only attacks $1$ node, but with different starting and stopping times. The attacker has parameters $C_\delta = 10^8$ and $C_a = 0.01$. The right figure shows the average global equilibrium risks when the attacker attacks different numbers of nodes at the beginning of the training process with the ability $C_a = 1$ and $ C_\delta= 10^4$, $10^{9}$, and $10^4$ for ``Rand", ``Spam", and ``MNIST" datasets, respectively. .}}
\label{fig:TimeNode}
\end{figure} 
\begin{figure}[]
\centering
\subfigure{
\includegraphics[width=0.3\textwidth]{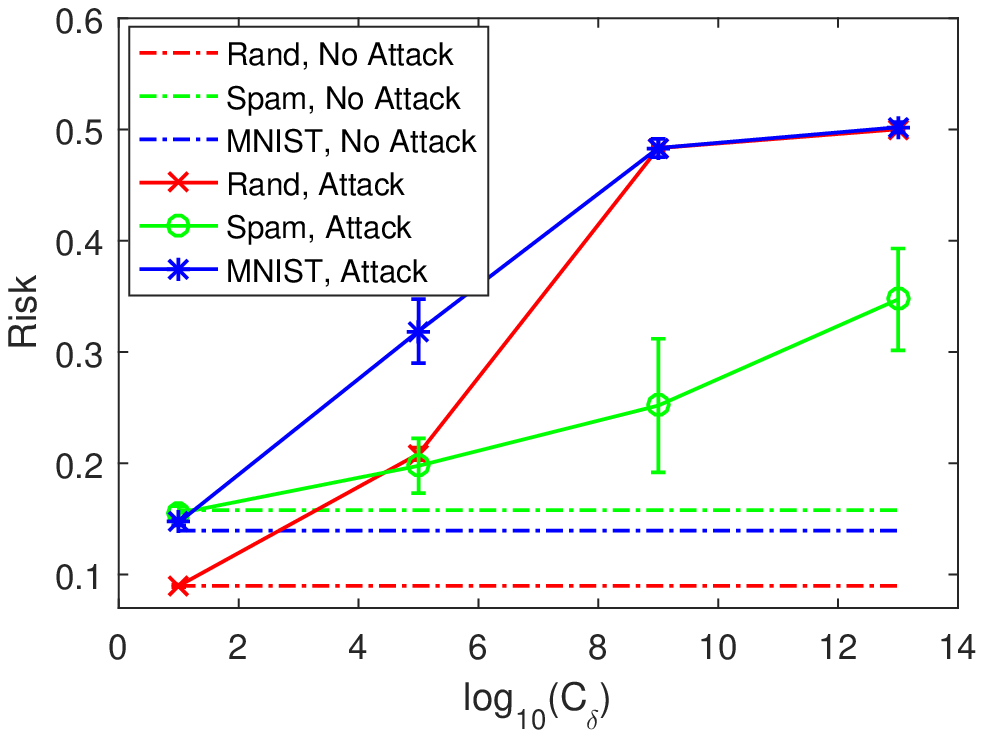}}
\subfigure{
\includegraphics[width=0.3\textwidth]{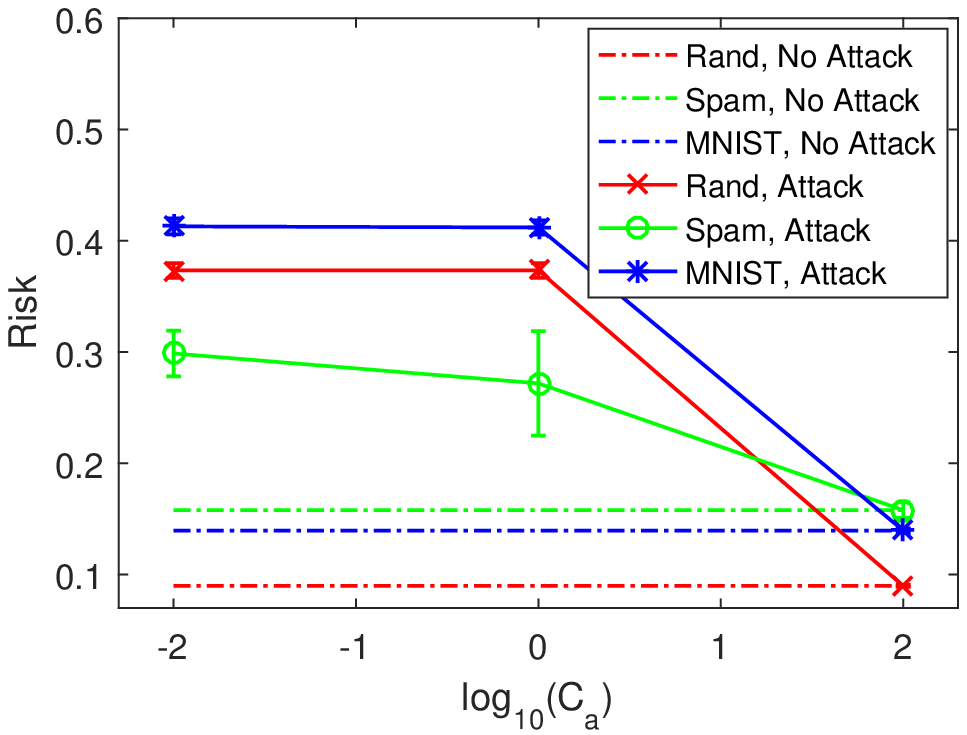}}
\vspace{-3mm}
\caption{{Average global equilibrium risks of DSVM at a fully connected network with $3$ nodes. Each node contains $80$ training samples and $600$ testing samples. The left figure shows the results with respect to $\log_{10}(C_\delta)$ when the attacker attacks $2$ nodes with $C_a = 1$. The right figure shows the results with respect to $\log_{10}(C_a)$ when the attacker attacks $1$ node with $C_\delta =10^7$.}}
\label{fig:C_delta_C_a}
\end{figure} 

{
The left figure of Fig. \ref{fig:TimeNode} shows the results of global risks when the attacker starts and stops attacking at different times. It is clear that after the attacker starts attacking, the risks quickly increase, but after the attacker stops attacking, the risks slowly come back to the level when there is no attacker. Thus, DSVM has the ability to react in real time, and it is a resilient algorithm. Moreover, though the acting times of the attacker are different, the equilibrium risks are close. As a result, we can conclude that the timing of the attacks does not significantly affect the equilibrium risks. The right figure of Fig. \ref{fig:TimeNode} shows the results of the average global equilibrium risks when the attacker attacks different numbers of nodes. It can be seen that the risks are higher when the attacker attacks more nodes, which indicates that the attacker has more influence on the learner. }

{
Fig. \ref{fig:C_delta_C_a} shows the average global equilibrium risks with respect to {$\log_{10}(C_\delta)$ and $\log_{10}(C_a)$}. We can see from the left figure that as $C_\delta$ increases, the risks become higher, which indicates that the attacker has a more significant impact on the learner. Notice that when $C_\delta$ is small, the risks are close to the risks of the case when there is no attacker, showing that the attacker has no influence on the learner as he is only capable of making small changes. From the right figure, we can see that as $C_a$ increases, the risks become lower, which indicates that the attacker is more restricted to take actions when $C_a$ is high. }

\subsection{Network Topology}
In this subsection, we will study the effect of the network topology on the game between the learner and the attacker. Consider a network with $\mathcal{V}:=\{1,2,...,V\}$ representing the set of nodes. Neighboring nodes of node $v$ are represented by set $\mathcal{B}_v \subseteq \mathcal{V}$.  We define the normalized degree of node $v$ as $\mid \mathcal{B}_v \mid / (\mid \mathcal{V} \mid -1)$, i.e., the actual number of neighbors of this node divided by the most achievable number of neighbors. The normalized degree of a node is always larger than $0$ and less or equal to $1$. A higher normalized degree indicates that the node has more neighbors. Notice that the normalized degree of a node cannot be $0$ as there is no isolated nodes in this network. In addition, we define the degree of a network as the average of normalized degrees of all the nodes. In the following experiments, all the nodes in a specified network contain the same number of training and testing samples, and the attacker has the same $C_{v,\delta}=C_{\delta}$ in all compromised nodes. {Note that we repeat each experiments $20$ times using different sets of samples at each node to find the average global equilibrium risks.}  

{\begin{table}[]
\centering
\caption{{Average global equilibrium risks $(\%)$ of DSVM at balanced networks, i.e., each node has the same numbers of neighbors, training samples, and testing samples. Each network contains the same $180$ training samples and $1800$ testing samples. The attacker compromises all the nodes with the ability $C_a = 0.001$ and $\sum_{v\in\mathcal{V}} C_\delta= 10^5$, $10^{12}$, and $10^5$ for ``Rand", ``Spam", and ``MNIST" datasets, respectively. By fixing $\sum_{v\in\mathcal{V}} C_\delta$, the attacker has the same ability in different networks. Note that ``C" indicates ``Centralized", ``D" indicates ``Degree of the network", ``NA" indicates ``No attack", and ``A" indicates ``Attack".}}
\label{Tab:NetworkGeneral}
\tabcolsep=0.11cm
{
\begin{tabular}{|l|l|l|l|l|l|}
\hline
\multicolumn{2}{|l|}{Network} & 1Node C  & 3Nodes D1  & 6Nodes D0.4 & 6Nodes D1  \\ \hline
\multirow{2}{*}{Rand}   & NA  & 8.44$\pm$0.00 & 8.46$\pm$0.00  & 8.48$\pm$0.01  & 8.44$\pm$0.00 \\ \cline{2-6} 
                        & A   & 41.74$\pm$0.00 & 42.15$\pm$0.24 & 44.20$\pm$0.33  & 43.59$\pm$0.39 \\ \hline
\multirow{2}{*}{Spam}   & NA  & 16.44$\pm$0.00 & 16.87$\pm$2.00 & 17.86$\pm$2.72  & 17.28$\pm$2.50 \\ \cline{2-6} 
                        & A   & 37.09$\pm$0.00 & 43.60$\pm$2.03 & 46.55$\pm$1.63  & 45.71$\pm$1.57 \\ \hline
\multirow{2}{*}{MNIST}  & NA  & 14.94$\pm$0.00 & 15.03$\pm$0.30 & 15.16$\pm$0.52  & 14.99$\pm$0.31 \\ \cline{2-6} 
                        & A   & 44.32$\pm$0.00 & 45.26$\pm$0.62 & 46.85$\pm$0.36  & 46.34$\pm$0.57 \\ \hline
\end{tabular}}
\end{table}}

{\begin{table}[]
\centering
\caption{{Average global equilibrium risks $(\%)$ of DSVM at networks with $6$ nodes. In the unbalanced network, Node $1$ has $5$ neighbors, while Nodes $2-6$ have $1$ neighbor. In the balanced network, Nodes $1-4$ have $2$ neighbors, while Nodes $5-6$ have $1$ neighbor. Note that both networks have the same degree $0.33$. Each node in both networks contains $40$ training samples and $300$ testing samples. The attacker attacks either the higher degree Node 1 or the lower degree Node 6 with the ability $C_a = 0.01$ and $C_\delta = 10^8$, $10^{14}$, and $10^{8}$ for ``Rand", ``Spam", and ``MNIST" datasets, respectively. }}
\tabcolsep=0.11cm
\label{Tab:NetworkIndividual}
{
\begin{tabular}{|l|l|l|l|l|l|}
\hline
\multicolumn{2}{|l|}{Network}         & \multicolumn{2}{l|}{Unbalanced} & \multicolumn{2}{l|}{Balanced}   \\ \hline
\multicolumn{2}{|l|}{Attack}          & Node 1         & Node 6         & Node 1          & Node 6        \\ \hline
\multirow{2}{*}{Rand} & NA & \multicolumn{2}{l|}{8.51$\pm$0.15}  & \multicolumn{2}{l|}{8.51$\pm$0.24}  \\ \cline{2-6} 
                          & A    & 43.43$\pm$0.22     & 35.45$\pm$0.37 & 36.00$\pm$0.46      & 28.12$\pm$0.35    \\ \hline
\multirow{2}{*}{Spam} & NA & \multicolumn{2}{l|}{16.56$\pm$0.94} & \multicolumn{2}{l|}{16.41$\pm$1.06} \\ \cline{2-6} 
                          & A    & 43.44$\pm$2.77      & 39.87$\pm$2.38    & 41.65$\pm$2.68 & 36.62$\pm$1.11      \\ \hline
\multirow{2}{*}{MNIST}& NA & \multicolumn{2}{l|}{13.84$\pm$0.21} & \multicolumn{2}{l|}{13.81$\pm$0.29} \\ \cline{2-6} 
                          & A    & 46.72$\pm$0.57     & 41.91$\pm$1.84 & 43.41$\pm$0.93      & 33.48$\pm$0.98    \\ \hline
\end{tabular}}
\end{table}

{\begin{table}[]
\centering
\caption{{Average global equilibrium risks $(\%)$ of DSVM in a fully connected network with $3$ nodes. In either case, Node $2$ and Node $3$ are fixed as either of them contains $50$ training samples, but Node $1$ contains $50$ or $100$ training samples. The attacker attacks Node $1$ with the ability $C_a = 1$, and $C_\delta = 10^6,10^{12}$, and $10^5$ for ``Rand", ``Spam", and ``MNIST" datasets, respectively. }}
\tabcolsep=0.11cm
\label{Tab:Network3Weight}
{
\begin{tabular}{|l|l|l|l|}
\hline
\multicolumn{2}{|l|}{\begin{tabular}[c]{@{}l@{}}Number of  Training \\ Samples in Each Node\end{tabular}} & 50, 50, 50     & 100, 50, 50    \\ \hline
\multirow{2}{*}{Rand}                                         & NA                                        & 9.00$\pm$0.00     & 8.51$\pm$0.00     \\ \cline{2-4} 
                                                              & A                                         & 33.70$\pm$0.58     & 28.17$\pm$0.68     \\ \hline
\multirow{2}{*}{Spam}                                         & NA                                        & 16.06$\pm$0.88 & 15.98$\pm$0.70   \\ \cline{2-4} 
                                                              & A                                         & 37.78$\pm$4.00     & 33.68$\pm$2.59     \\ \hline
\multirow{2}{*}{MNIST}                                        & NA                                        & 18.50$\pm$0.00     & 15.67$\pm$0.00    \\ \cline{2-4} 
                                                              & A                                         & 36.06$\pm$2.14     & 31.50$\pm$1.59     \\ \hline
\end{tabular}}
\end{table}}

{Table \ref{Tab:NetworkGeneral} shows the results of average equilibrium risks when the attacker attacks balanced networks, i.e., all the nodes in these networks have the same number of neighbors, with different numbers of nodes and degrees. Comparing the risks of networks with $6$ nodes but with different degrees, we can see that the attacker has more impact on networks with lower degrees as the risks are higher. Comparing the risks of the networks with $3$ nodes and $6$ nodes, we can see that the risks are higher when the attacker attacks networks with more nodes. Thus, a network with fewer nodes and a higher degree is more resilient.  In addition, the centralized SVMs under attacks have lower risks than DSVM, which indicates that DSVM is more vulnerable than centralized SVMs when the attacker compromises the whole training systems. }

{Table \ref{Tab:NetworkIndividual} shows the results of average equilibrium risks when the attacker attacks networks with 6 nodes and degree $0.33$. Note that one of the network is unbalanced, while the other network is balanced. In both networks, Node 1 is with the highest degree, while Node 6 is with the lowest degree. Comparing the results of attacking Node 1 and Node 6 in unbalanced network (or balanced network), we can see that the risks are higher when Node 1 is compromised. Thus, we can conclude that nodes with more neighbors tend to be more vulnerable.  Comparing the results of attacking Node 1 (or Node 6) in unbalanced network and balanced network, we can see that the risks are higher when the network is unbalanced no matter the attacker attacks higher degree nodes or lower degree nodes. Thus. we can conclude that balanced network tends to be more resilient to adversaries. 
\subsection{Weight of Node}
In the previous experiments, nodes in a network are considered to have the same number of training samples and testing samples. In this subsection, we study how the number of training samples affects the game between the learner and the attacker. We define the weight of a node as the number of training samples it contains. A higher weight means that the node contains more training samples.}

{From Table \ref{Tab:Network3Weight}, we can see that when Node 1 has more training samples, the risks become lower, which shows that the attacker has a smaller influence on the learner and the system is more secure. Though training more samples makes the system less vulnerable, it will require more time and more space for storage, which indicates that there is a trade-off between security and efficiency.  }
\section{Conclusion}
\label{Sec:Con}
Machine learning algorithms are ubiquitous but inherently vulnerable to adversaries. This paper has investigated the security issues of distributed support vector machines in an adversarial environment. We have established a game-theoretic framework to capture the strategic interactions between an attacker and a learner with a network of distributed nodes. We have shown that the nonzero-sum game is strategically equivalent to a zero-sum game, and hence its equilibrium can be characterized by a saddle-point equilibrium solution to a minimax problem. By using the technique of ADMoM, we have developed secure and resilient algorithms that can respond to the adversarial environment. We have shown that the convergence of the minimax problem to the equilibrium is guaranteed without the assumption of network topologies and the form of training data.

Experimental results have shown that an attacker can have a significant impact on DSVM if his capability and resources are sufficiently large. We have shown that the system itself can recover from attacks with the iterative and distributed nature of the algorithms. In addition, a network with a large number of nodes and a low degree is less secure. Hence, the network topology has a strong relation to the security of the DSVM algorithm. For a specified network, we have also shown that nodes with lower degrees are more secure. We have shown that a balanced network will be more secure, i.e., nodes in this network have similar degrees. We have also proved that adding more training samples will make the training process more secure.

{
One direction of future works is to develop a network design theory to form machine-learning networks that can achieve a desirable level of resiliency. In addition, we would also extend the current framework to investigate other machine learning algorithms, including nonlinear DSVM \cite{forero2010consensus}, large-scale SVMs \cite{chang2016semantic}, active learning \cite{yan2016image}, transfer learning\cite{zhang2018consensus}, and domain adaptions \cite{duan2012domain1}. We also intend to investigate other attack models, such as the cases when the attacker aims to increase the risk of a class of samples, he has limited knowledge, or he can modify training labels \cite{zhang2017game}, and so on. }

\section*{Appendix A: Proof of Proposition 1}
{
$\mathcal{U}_v$ is a sublinear aggregated action set of $\mathcal{U}_{v0}$ \cite{xu2009robustness}, and it satisfies ${\mathcal{U}^-} \subseteq \mathcal{U} \subseteq {\mathcal{U}^ + }$, where
\[\begin{array}{*{20}{c}}
{\begin{array}{*{20}{c}}
{{{\cal U}^ - } \buildrel \Delta \over = \mathop  \cup \limits_{t = 1}^n {\cal U}_t^ - ,{\rm{  }}{\cal U}_t^ -  \buildrel \Delta \over = \left\{ {\left( {{\delta _1},...,{\delta _n}} \right)\left|  \begin{array}{l}
{\delta _t} \in {{\cal U}_0};\\
{\delta _{i}} = {\bf{0}}, i \ne t.
\end{array} \right.} \right\}};\\ \medskip
{{{\cal U}^ + } \buildrel \Delta \over = \left\{ {\left( {{\alpha _1}{\delta _1},...,{\alpha _n}{\delta _n}} \right)\left| \begin{array}{l}
\sum\limits_{i = 1}^n {{\alpha _i} = 1} ;{\alpha _i} \ge 0,\\
{\delta _i} \in {{\cal U}_0},i = 1,...,n
\end{array} \right.} \right\}}.
\end{array}}
\end{array}\]
This property is used to prove Proposition $1$.  After reformulating Problem (\ref{eq:MinMax}) with hinge loss function, we can see that Problem (\ref{eq:alminmax}) and Problem (\ref{eq:MinMax}) are minimax problems with the same variables. Thus, we only need to prove that we minimize the same maximization problem. As a result, we only need to show that the following problem  
\begin{equation}
\label{eq:SubMaxA}
\begin{array}{l}
\mathop {\max }\limits_{\{\delta_{vn}\}\in \mathcal{U}_v} S( \{ \delta_{vn} \} ) \buildrel \Delta \over =  {V_a}{C_l}\sum\limits_{n = 1}^{{N_v}} {{{\left[ {1 - {{\rm{y}}_{vn}}({\bf{w}}_v^T({{\bf{x}}_{vn}} - {\delta _{vn}}) + {b_v})} \right]}_ + }} \\
\begin{array}{*{20}{c}}
{}&{}
\end{array} \ \ \ \ \ \ \ \ \ \ \ \ \ - {C_a}\sum\limits_{n = 1}^{{N_v}} {\left\| {{\delta _{vn}}} \right\|_{{0}} }
\end{array}
\end{equation}
is equivalent to the following problem 
\begin{equation}
\label{eq:SubMaxB}
\begin{array}{l}
\mathop {\max }\limits_{{\delta _v} \in {\mathcal{U}_{v0}}} {V_a}{C_l}\sum\limits_{n = 1}^{{N_v}} {{{\left[ {1 - {{\rm{y}}_{vn}}({\bf{w}}_v^T{{\bf{x}}_{vn}} + {b_v})} \right]}_ + }} + {V_a}{C_l}{\bf{w}}_v^T{\delta _v} \\
\ \ \ \ \ \ \ \ \ \ \ \ \ \ \ \ \  - {C_a}\left\| {{\delta _v}} \right\|_{{0}}.
\end{array}
\end{equation}
The first term of the objective function in (\ref{eq:alminmax}) and (\ref{eq:MinMax}) is ignored as they are not related to the maximization problem. Since $\{\delta_{vn} \}$ is independent in the maximization part of (\ref{eq:alminmax}), and $\delta_v$ is independent in the maximization part of (\ref{eq:MinMax}), we can separate the maximization problem into $V_a$ sub-maximization problems, and solving the sub-problems is equivalent to solving the global maximization problem. Therefore, we only need to show the equivalence between the sub-problem. }

{We adopt the similar proof in \cite{xu2009robustness}, recall the properties of sublinear aggregated action set, $\mathcal{U}_v^-\subseteq\mathcal{U}_v\subseteq\mathcal{U}_v^+ $. Hence, fixing any $({\bf{w}}_v,b_v)\in \mathbb{R}^{p+1}$, we have the following inequalities:
\begin{equation}
\label{eq:MainInequal}
\mathop {\max }\limits_{\{ \delta_{vn}\} \in {\cal U}_v^ - } S(\{\delta_{vn}\})  \le \mathop {\max }\limits_{\{ \delta_{vn}\} \in {\cal U}_v } S(\{\delta_{vn}\})  \le \mathop {\max }\limits_{\{ \delta_{vn}\} \in {\cal U}_v^ +  } S(\{\delta_{vn}\}) 
\end{equation}}

{
To prove the equivalence, we show that (\ref{eq:SubMaxB}) is no larger than the leftmost term and no smaller than the rightmost term of (\ref{eq:MainInequal}).
We first show that 
\begin{equation}
\label{eq:ProofAIn1}
\begin{array}{l}
\mathop {\max }\limits_{{\delta _v} \in {\mathcal{U}_{v0}}} {V_a}{C_l}\sum\limits_{n = 1}^{{N_v}} {{{\left[ {1 - {{\rm{y}}_{vn}}({\bf{w}}_v^T{{\bf{x}}_{vn}} + {b_v})} \right]}_ + }}  + {V_a}{C_l}{\bf{w}}_v^T{\delta _v} \\ \ \ \ \ \ \ \ \ \ \ \ \ \ \ - {C_a}\left\| {{\delta _v}} \right\|_0\\
 \le 
\mathop {\max }\limits_{({\delta _{v1}},...,{\delta _{v{N_v}}}) \in \mathcal{U}_v^ - } {V_a}{C_l}\sum\limits_{n = 1}^{{N_v}} {{{\left[ {1 - {{\rm{y}}_{vn}}({\bf{w}}_v^T({{\bf{x}}_{vn}} - {\delta _{vn}}) + {b_v})} \right]}_ + }}\\ \ \ \ \ \ \ \ \ \ \ \ \ \  - {C_a}\sum\limits_{n = 1}^{{N_v}} {\left\| {{\delta _{vn}}} \right\|_0.} 
\end{array}
\end{equation}
As the samples $\{\mathbf{x}_{vn},y_{vn}\}_{v=1}^{N_v}$ are not separable, there exists $t_v\in [1:N_v]$ which satisfies that \begin{equation}
\label{eq:ProofAIn1N}
{{\rm{y}}_{{t_v}}}({{\bf{w}}_v}^T{{\bf{x}}_{{t_v}}} + {b_v}) < 0.
\end{equation}
Hence, recall the definition of sublinear aggregated action set, we have:
\[\begin{array}{l}
\mathop {\max }\limits_{({\delta _{v1}},...,{\delta _{v{N_v}}}) \in \mathcal{U}_{vt_v}^ - } {V_a}{C_l}\sum\limits_{n = 1}^{{N_v}} {{{\left[ {1 - {{\rm{y}}_{vn}}({\bf{w}}_v^T({{\bf{x}}_{vn}} - {\delta _{vn}}) + {b_v})} \right]}_ + }} \\
\begin{array}{*{20}{c}}
{}&{}
\end{array} - {C_a}\sum\limits_{n = 1}^{{N_v}} {\left\| {{\delta _{vn}}} \right\|_0}\\
= \mathop {\max }\limits_{{\delta _{v{t_v}}} \in {\mathcal{U}_{v0}}} {V_a}{C_l}\sum\limits_{n \ne {t_v}} {{{\left[ {1 - {{\rm{y}}_{vn}}({\bf{w}}_v^T{{\bf{x}}_{vn}} + {b_v})} \right]}_ + }} \\
\begin{array}{*{20}{c}}
{}&{}
\end{array} + {V_a}{C_l}{\left[ {1 - {{\rm{y}}_{v{t_v}}}({\bf{w}}_v^T({{\bf{x}}_{v{t_v}}} - {\delta _{v{t_v}}}) + {b_v})} \right]_ + }\\
\begin{array}{*{20}{c}}
{}&{}
\end{array} - {C_a}\left\| {{\delta _{v{t_v}}}} \right\|_0\\
 = \mathop {\max }\limits_{{\delta _{v{t_v}}} \in {\mathcal{U}_{v0}}} {V_a}{C_l}\sum\limits_{n \ne {t_v}} {{{\left[ {1 - {{\rm{y}}_{vn}}({\bf{w}}_v^T{{\bf{x}}_{vn}} + {b_v})} \right]}_ + }} \\
\begin{array}{*{20}{c}}
{}&{}
\end{array} + {V_a}{C_l}{\left[ {1 - {{\rm{y}}_{v{t_v}}}({\bf{w}}_v^T{{\bf{x}}_{v{t_v}}} + {b_v})} \right]_ + }\\
\begin{array}{*{20}{c}}
{}&{}
\end{array} + {V_a}{C_l}({{\rm{y}}_{v{t_v}}}{\bf{w}}_v^T{\delta _{v{t_v}}}) - {C_a}\left\| {{\delta _{v{t_v}}}} \right\|_0\\
 = \mathop {\max }\limits_{{\delta _v} \in {\mathcal{U}_{v0}}} {V_a}{C_l}\sum\limits_{n = 1}^{{N_v}} {{{\left[ {1 - {{\rm{y}}_{vn}}({\bf{w}}_v^T{{\bf{x}}_{vn}} + {b_v})} \right]}_ + }} \\
\begin{array}{*{20}{c}}
{}&{}
\end{array} + {V_a}{C_l}{\bf{w}}_v^T{\delta _v} - {C_a}\left\| {{\delta _v}} \right\|_0
\end{array}\]
The second and third equalities hold because of Inequality (\ref{eq:ProofAIn1N}) and ${\mathop {\max }\limits_{{\delta _{{t_v}}} \in {{\cal U}_{v0}}} {\rm{(}}{{\rm{y}}_{{vt_v}}}{{\bf{w}}_v}^T{\delta _{{t_v}}})}$ being non-negative (recall $\mathbf{0}\in \mathcal{U}_{v0}$). Besides, we use $\delta_v$ to replace $\delta_{vt_v}$. Since $\mathcal{U}_{vt_v}^- \subseteq \mathcal{U}_v^- $, Inequality (\ref{eq:ProofAIn1}) holds.}

{
In the following step, we prove that
\begin{equation}
\label{eq:ProofAIn2}
\begin{array}{l}
\mathop {\max }\limits_{({\delta _{v1}},...,{\delta _{v{N_v}}}) \in \mathcal{U}_v^ + } {V_a}{C_l}\sum\limits_{n = 1}^{{N_v}} {{{\left[ {1 - {{\rm{y}}_{vn}}({\bf{w}}_v^T({{\bf{x}}_{vn}} - {\delta _{vn}}) + {b_v})} \right]}_ + }} \\
\begin{array}{*{20}{c}}
{}&{}
\end{array} - {C_a}\sum\limits_{n = 1}^{{N_v}} {\left\| {{\delta _{vn}}} \right\|_0} \\
 \le \\
\mathop {\max }\limits_{{\delta _v} \in {\mathcal{U}_{v0}}} {V_a}{C_l}\sum\limits_{n = 1}^{{N_v}} {{{\left[ {1 - {{\rm{y}}_{vn}}({\bf{w}}_v^T{{\bf{x}}_{vn}} + {b_v})} \right]}_ + }} \\
\begin{array}{*{20}{c}}
{}&{}
\end{array} + {V_a}{C_l}{\bf{w}}_v^T{\delta _v} - {C_a}\left\| {{\delta _v}} \right\|_0.
\end{array}
\end{equation}
Recall the definition of $\mathcal{U}^+$, we have:
\[\begin{array}{l} 
\mathop {\max }\limits_{({\delta _{v1}},...,{\delta _{v{N_v}}}) \in \mathcal{U}_v^ + } {V_a}{C_l}\sum\limits_{n = 1}^{{N_v}} {{{\left[ {1 - {{\rm{y}}_{vn}}({\bf{w}}_v^T({{\bf{x}}_{vn}} - {\delta _{vn}}) + {b_v})} \right]}_ + }} \\
\begin{array}{*{20}{c}}
{}&{}
\end{array} - {C_a}\sum\limits_{n = 1}^{{N_v}} {\left\| {{\delta _{vn}}} \right\|_0}  \end{array} \]
\[\begin{array}{l}
= \mathop {\max }\limits_{\scriptstyle\sum\nolimits_{n = 1}^{{N_v}} {{\alpha _{vn}} = 1} ;\hfill\atop
\scriptstyle{\alpha _{vn}} \ge 0;{\widehat \delta _{vn}} \in {\mathcal{U}_{v0}}\hfill} {V_a}{C_l}\sum\limits_{n = 1}^{{N_v}} {{{\left[ {1 - {{\rm{y}}_{vn}}({\bf{w}}_v^T({{\bf{x}}_{vn}} - {\alpha _{vn}}{{\widehat \delta }_{vn}}) + {b_v})} \right]}_ + }} \\
\begin{array}{*{20}{c}}
{}&{}
\end{array} - {C_a}\sum\limits_{n = 1}^{{N_v}} {\left\| {{\alpha _{vn}}{\delta _{vn}}} \right\|_0} \\
\le \mathop {\max }\limits_{\scriptstyle\sum\nolimits_{n = 1}^{{N_v}} {{\alpha _{vn}} = 1} ;\hfill\atop
\scriptstyle{\alpha _{vn}} \ge 0;{\widehat \delta _{vn}} \in {\mathcal{U}_{v0}}\hfill} {V_a}{C_l}\sum\limits_{n = 1}^{{N_v}} {{{\left[ {1 - {{\rm{y}}_{vn}}({\bf{w}}_v^T{{\bf{x}}_{vn}} + {b_v})} \right]}_ + }} \\
\begin{array}{*{20}{c}}
{}&{}
\end{array} + \sum\limits_{n = 1}^{{N_v}} {{V_a}{C_l}{\alpha _{vn}}{\bf{w}}_v^T{{\widehat \delta }_{vn}}}  - {C_a}\sum\limits_{n = 1}^{{N_v}} {\left\| {{\alpha _{vn}}{\delta _{vn}}} \right\|_0}\\
= \mathop {\max }\limits_{\scriptstyle\sum\nolimits_{n = 1}^{{N_v}} {{\alpha _{vn}} = 1} ;\hfill\atop
\scriptstyle{\alpha _{vn}} \ge 0.\hfill} \mathop {\max }\limits_{{{\widehat \delta }_{vn}} \in {\mathcal{U}_{v0}}} {V_a}{C_l}\sum\limits_{n = 1}^{{N_v}} {{{\left[ {1 - {{\rm{y}}_{vn}}({\bf{w}}_v^T{{\bf{x}}_{vn}} + {b_v})} \right]}_ + }} \\
\begin{array}{*{20}{c}}
{}&{}
\end{array} + {\alpha _{vn}}\sum\limits_{n = 1}^{{N_v}} {\left( {{V_a}{C_l}{\bf{w}}_v^T{{\widehat \delta }_{vn}} - {C_a}\left\| {{\delta _{vn}}} \right\|_0} \right)}\\
 = \mathop {\max }\limits_{{\delta _v} \in {\mathcal{U}_{v0}}} {V_a}{C_l}\sum\limits_{n = 1}^{{N_v}} {{{\left[ {1 - {{\rm{y}}_{vn}}({\bf{w}}_v^T{{\bf{x}}_{vn}} + {b_v})} \right]}_ + }} \\
\begin{array}{*{20}{c}}
{}&{}
\end{array} + {V_a}{C_l}{\bf{w}}_v^T{\delta _v} - {C_a}\left\| {{\delta _v}} \right\|_0.
\end{array}\]
Thus, Inequality (\ref{eq:ProofAIn2}) holds. By combining the two steps, we can show the equivalence between (\ref{eq:SubMaxA}) and (\ref{eq:SubMaxB}). Hence, Proposition \ref{proposition1} holds.}

\section*{Appendix B: Proof of Proposition 2}
To prove the equivalence, we use Neumann's Minimax Theorem\cite{nlkaido1954neumann}. Notice that solutions to the first and second constraints are convex for $\{\mathbf{r}_v,\xi_v\}$. The third constraints are linear equality functions. The forth constraint describes the set $\mathcal{U}_{v0}$ for $\delta_v$, which is convex set based on its definition. Thus we only need to prove that $K'$ is quasi-concave on $\{ \delta_{v}\}$ and quasi-convex on $\{ \mathbf{r}_{v}, \xi_v\}$.

On the one hand, the first two parts of $K'$ are constants for $\{ \delta_{v} \}$; the third part of $K'$ is a linear function of $\{\delta_{v}\}$; {the forth part of $K'$ is deleting the $l_1$ norm of $\{ \delta_{v} \}$, which is concave.} So $K'$ is a concave function for $\{ \delta_{v} \}$. Thus $K'$ is quasi-concave for $\{ \delta_{v} \}$. On the other hand, the first part of $K'$ is convex for $\{{{\bf{r}}_v}\}$ and linear for $\{\xi_v\}$; the second part of $K'$ is a linear function for $\{\xi_v\}$ and it is constant for $\{{{\bf{r}}_v}\}$; the third part of $K'$ is linear for $\{{{\bf{r}}_v}\}$ and it is constant for $\{\xi_v\}$; the forth part is constant for both $\{{{\bf{r}}_v}\}$ and $\{\xi_v\}$, so $K'$ is a convex function on $\{{{\bf{r}}_v}\}$. Thus $K'$ is quasi-convex on $\{{{\bf{r}}_v},\xi_v\}$. As a result, the equivalence holds. 

{
Since $K'$ is concave for $\{ \delta_{v} \}$ and convex for $\{{{\bf{r}}_v}\}$, there exists an equilibrium of the minimax or max-min problem \cite{basar1999dynamic}. Note that $l_1$ norm is not strictly convex, and thus $K'$ is not strictly concave for $\{ \delta_{v} \}$, so the equilibrium is not necessarily unique. }
\section*{Appendix C: Proof of Lemma 1}
We start with proving Inequality (\ref{eq:ConInequ3}) and Inequality (\ref{eq:ConInequ2}), and then we prove Inequality (\ref{eq:ConInequ1}).
\subsection*{Proof of Inequality (\ref{eq:ConInequ3})}
From Assumption 1, we have:
\begin{equation}
\label{eq:ConInequ3L0}
\begin{array}{l}
{L_0}({\{ {{\bf{r}}_v}\} ^ * },{\{ {\xi _v}\} ^ * },{\{ {\omega _{vu}}\} ^ * },{\{ {\alpha _{vu,k}}\} ^ * })\\
 \le {L_0}(\{ {{\bf{r}}_v}\} ,\{ {\xi _v}\} ,\{ {\omega _{vu}}\} ,{\{ {\alpha _{vu,k}}\} ^ * }).
\end{array}
\end{equation}
Since $\mathbf{r}_v^*-\omega_{vu}^*=0$ and $\omega_{vu}^*-\mathbf{r}_u^*=0$, the left side of (\ref{eq:ConInequ3L0}) becomes $p^*$. Thus, Inequality (\ref{eq:ConInequ3}) holds after introducing
\begin{equation}
\label{eq:ConInequ3pt1}
\begin{array}{l}
{p^{(t+1)}} = \frac{1}{2}\sum\limits_{v \in \mathcal{V}} {{\bf{r}}_v^{(t + 1)T}{\Pi _{p + 1}}{\bf{r}}_v^{(t+1)}} 
 + {V_a}{C_l}\sum\limits_{v \in \mathcal{V}_a} {{\bf{r}}_v^{(t + 1)T}{\widehat{\bf{I}}_{p\times(p + 1)}^T}\delta _v^*} \\ \ \ \ \ \ \ \ \ 
 + V{C_l}\sum\limits_{v \in \mathcal{V}} {\sum\limits_{n = 1}^{{N_v}} {{{\left[ {1 - {y_{vn}}[{\bf{x}}_{vn}^T,1]{\bf{r}}_v^{(t+1)}} \right]}_ + }} } .
\end{array}
\end{equation}
\subsection*{Proof of Inequality (\ref{eq:ConInequ2})}
From (\ref{eq:MinADMMi1}), $\mathbf{r}_v^{(t+1)}$ minimizes ${L_\eta }$. The necessary and sufficient optimality condition is:
\begin{equation}
\label{eq:ConInequ2Condition}
\begin{array}{l}
0 \in {\partial _{{\bf{r}}_v^{(t+1)}}}{L_\rho }({\{ {{\bf{r}}_v}\} ^{(t+1)}},{\{ {\omega _{vu}}\} ^{(t)}},{\{ {\alpha _{vu,k}}\} ^{(t)}})\\
 = {\Pi _{p + 1}}{\bf{r}}_v^{(t+1)} + {V_a}{C_l}{\widehat{\bf{I}}_{p\times(p + 1)}^T}\delta _v^*\\
 + V{C_l}\sum\limits_{n = 1}^{{N_v}} {{\partial _{{\bf{r}}_v^{(t+1)}}}{{\left[ {1 - {y_{vn}}[{\bf{x}}_{vn}^T,1]{{\bf{r}}_v^{(t+1)}}} \right]}_ + }}  \\
 + \sum\limits_{u \in {{\cal B}_v}} {(\alpha _{vu,1}^{(t)} - \alpha _{uv2}^{(t)})} \\
 + \sum\limits_{u \in {{\cal B}_v}} {\eta ({\bf{r}}_v^{(t+1)} - \omega _{vu}^{(t)})}  - \sum\limits_{u \in {{\cal B}_v}} {\eta (\omega _{uv}^{(t)} - {\bf{r}}_v^{(t+1)})} .
\end{array}
\end{equation}
Since $\alpha _{vu,1}^{(t+1)} = \alpha _{vu,1}^{(t)} + \eta r_{vu,1}^{(t+1)}$ and $
\alpha _{vu,2}^{(t+1)} = \alpha _{uv2}^{(t)} + \eta r_{uv2}^{(t+1)}$, we can plug $\alpha _{vu,1}^{(t)} = \alpha _{vu,1}^{(t+1)} - \eta r_{vu,1}^{(t+1)}$ and $\alpha _{uv2}^{(t)} = \alpha _{uv2}^{(t+1)} - \eta r_{uv2}^{(t+1)}$ in (\ref{eq:ConInequ2Condition}), and after rearranging it, we can obtain that $\mathbf{r}_v$ minimizes 
\begin{equation}
\label{eq:ConInequ2rmin}
\begin{array}{l}
\frac{1}{2}\sum\limits_{v \in \mathcal{V}} {{\bf{r}}_v^T{\Pi _{p + 1}}{{\bf{r}}_v}}  + {V_a}{C_l}\sum\limits_{v \in \mathcal{V}_a} {{\bf{r}}_v^T{\widehat{\bf{I}}_{p\times(p + 1)}^T}\delta _v^*} \\
 + V{C_l}\sum\limits_{v \in \mathcal{V}} {\sum\limits_{n = 1}^{{N_v}} {{{\left[ {1 - {y_{vn}}[{\bf{x}}_{vn}^T,1]{{\bf{r}}_v}} \right]}_ + }} }  + \sum\limits_{v \in \mathcal{V}} {\sum\limits_{u \in {{\cal B}_v}} {\alpha _{vu,1}^T{{\bf{r}}_v}} } \\
 - \sum\limits_{v \in \mathcal{V}} {\sum\limits_{u \in {{\cal B}_v}} {\alpha _{vu,2}^T{{\bf{r}}_u}} }  + \eta \sum\limits_{v \in \mathcal{V}} {\sum\limits_{u \in {{\cal B}_v}} {{\bf{r}}_v^T(\omega _{vu}^{(t+1)} - \omega _{vu}^{(t)})} } \\
 + \eta \sum\limits_{v \in \mathcal{V}} {\sum\limits_{u \in {{\cal B}_v}} {{\bf{r}}_v^T(\omega _{uv}^{(t+1)} - \omega _{uv}^{(t)})} } .
\end{array}
\end{equation}
Using the similar method, we can obtain that $\omega_{vu}^{(t+1)}$ minimizes 
\begin{equation}
\label{eq:ConInequ2omegamin}
\sum\limits_{v \in \mathcal{V}} {\sum\limits_{u \in {{\cal B}_v}} {\omega _{vu}^T( - \alpha _{uv1}^{(t+1)} + \alpha _{uv2}^{(t+1)})} } 
\end{equation}
Thus, we can obtain:
\begin{equation}
\label{eq:ConInequ2rminInequ}
\begin{array}{l}
\frac{1}{2}\sum\limits_{v \in \mathcal{V}} {{\bf{r}}_v^{(t + 1)T}{\Pi _{p + 1}}{\bf{r}}_v^{(t+1)}} 
 + {V_a}{C_l}\sum\limits_{v \in \mathcal{V}_a} {{\bf{r}}_v^{(t + 1)T}{\widehat{\bf{I}}_{p\times(p + 1)}^T}\delta _v^*} \\
 + V{C_l}\sum\limits_{v \in \mathcal{V}} {\sum\limits_{n = 1}^{{N_v}} {{{\left[ {1 - {y_{vn}}[{\bf{x}}_{vn}^T,1]{\bf{r}}_v^{(t+1)}} \right]}_ + }} }  + \sum\limits_{v \in \mathcal{V}} {\sum\limits_{u \in {{\cal B}_v}} {\alpha _{vu,1}^T{\bf{r}}_v^{(t+1)}} } \\
 - \sum\limits_{v \in \mathcal{V}} {\sum\limits_{u \in {{\cal B}_v}} {\alpha _{vu,2}^T{\bf{r}}_u^{(t+1)}} }  + \eta \sum\limits_{v \in \mathcal{V}} {\sum\limits_{u \in {{\cal B}_v}} {{\bf{r}}_v^{(t + 1)T}(\omega _{vu}^{(t+1)} - \omega _{vu}^{(t)})} } \\
 + \eta \sum\limits_{v \in \mathcal{V}} {\sum\limits_{u \in {{\cal B}_v}} {{\bf{r}}_v^{(t + 1)T}(\omega _{uv}^{(t+1)} - \omega _{uv}^{(t)})} } \\
 \le \frac{1}{2}\sum\limits_{v \in \mathcal{V}} {{\bf{r}}_v^{ * T}{\Pi _{p + 1}}{\bf{r}}_v^ * } 
 + {V_a}{C_l}\sum\limits_{v \in \mathcal{V}_a} {{\bf{r}}_v^{ * T}{\widehat{\bf{I}}_{p\times(p + 1)}^T}\delta _v^*} \\
 + V{C_l}\sum\limits_{v \in \mathcal{V}} {\sum\limits_{n = 1}^{{N_v}} {{{\left[ {1 - {y_{vn}}[{\bf{x}}_{vn}^T,1]{\bf{r}}_v^ * } \right]}_ + }} }  + \sum\limits_{v \in \mathcal{V}} {\sum\limits_{u \in {{\cal B}_v}} {\alpha _{vu,1}^T{\bf{r}}_v^ * } } \\
 - \sum\limits_{v \in \mathcal{V}} {\sum\limits_{u \in {{\cal B}_v}} {\alpha _{vu,2}^T{\bf{r}}_u^ * } }  + \eta \sum\limits_{v \in \mathcal{V}} {\sum\limits_{u \in {{\cal B}_v}} {{\bf{r}}_v^{ * T}(\omega _{vu}^{(t+1)} - \omega _{vu}^{(t)})} } \\
 + \eta \sum\limits_{v \in \mathcal{V}} {\sum\limits_{u \in {{\cal B}_v}} {{\bf{r}}_v^{ * T}(\omega _{uv}^{(t+1)} - \omega _{uv}^{(t)})} } .
\end{array}
\end{equation}
and
\begin{equation}
\label{eq:ConInequ2omegaminInequ}
\begin{array}{l}
\sum\limits_{v \in \mathcal{V}} {\sum\limits_{u \in {{\cal B}_v}} {\omega _{vu}^{(t + 1)T}( - \alpha _{uv1}^{(t+1)} + \alpha _{uv2}^{(t+1)})} } \\
 \le \sum\limits_{v \in \mathcal{V}} {\sum\limits_{u \in {{\cal B}_v}} {\omega _{vu}^{ * T}( - \alpha _{uv1}^{(t+1)} + \alpha _{uv2}^{(t+1)})} } .
\end{array}
\end{equation}

With $\mathbf{r}_v^*-\omega_{vu}^*=0$ and $\omega_{vu}^*-\mathbf{r}_u^*=0$, by adding the two inequalities above, we obtain Inequality (\ref{eq:ConInequ2}).
\subsection*{Proof of Inequality (\ref{eq:ConInequ1})}
Adding Inequality (\ref{eq:ConInequ3}) and Inequality (\ref{eq:ConInequ2}), by rearranging, we have:
\begin{equation}
\label{eq:ConInequ1Add}
\begin{array}{l}
2\sum\limits_{v \in \mathcal{V}} {\sum\limits_{u \in {{\cal B}_v}} {\left( {{{(\alpha _{vu,1}^{(t+1)} - \alpha _{vu,1}^ * )}^T}r_{vu,1}^{(t+1)} + {{(\alpha _{vu,2}^{(t+1)} - \alpha _{vu,2}^ * )}^T}r_{vu,2}^{(t+1)}} \right)} } \\
 + 2\sum\limits_{v \in \mathcal{V}} {\sum\limits_{u \in {{\cal B}_v}} {\left( {\eta {{(\omega _{vu}^{(t+1)} - \omega _{vu}^{(t)})}^T}r_{vu,1}^{(t+1)}} \right)} } \\
 - 2\sum\limits_{v \in \mathcal{V}} {\sum\limits_{u \in {{\cal B}_v}} {\left( {\eta (\omega _{vu}^{(t+1)} - \omega _{vu}^{(t)})r_{vu,2}^{(t+1)}} \right)} } \\
 + 4\sum\limits_{v \in \mathcal{V}} {\sum\limits_{u \in {{\cal B}_v}} {\left( {\eta (\omega _{vu}^{(t+1)} - \omega _{vu}^{(t)})(\omega _{vu}^{(t+1)} - \omega _{vu}^ * )} \right)} }  \le 0 .
\end{array}
\end{equation}

Using $\alpha_{vu,1}^{(t+1)}=\alpha_{vu,1}^{(t)}+\eta r_{vu,1}^{(t+1)},\alpha_{vu,2}^{(t+1)}=\alpha_{vu,2}^{(t)}+\eta r_{vu,2}^{(t+1)}$, $\alpha_{vu,1}^{(t+1)}-\alpha_{vu,1}^{(t)}=(\alpha_{vu,1}^{(t+1)}-\alpha_{vu,1}^*)-(\alpha_{vu,1}^{(t)}-\alpha_{vu,1}^*)$, $\omega_{vu}^{(t+1)}-\omega_{vu}^*=(\omega_{vu}^{(t+1)}-\omega_{vu}^{(t)})+(\omega_{vu}^{(t)}-\omega_{vu}^*)$, $\omega _{vu}^{(t+1)} - \omega _{vu}^{(t)} = (\omega _{vu}^{(t+1)} - \omega _{vu}^ * ) - (\omega _{vu}^{(t)} - \omega _{vu}^ * )$, (\ref{eq:ConInequ1Add}) is equivalent to
\begin{equation}
\label{eq:ConInequ1AddTran}
\begin{array}{l}
\frac{1}{\eta }\sum\limits_{v \in \mathcal{V}} {\sum\limits_{u \in {{\cal B}_v}} {\left( {\left\| {\alpha _{vu,1}^{(t+1)} - \alpha _{vu,1}^ * } \right\|_2^2 - \left\| {\alpha _{vu,1}^{(t)} - \alpha _{vu,1}^ * } \right\|_2^2} \right)} } \\
 + \frac{1}{\eta }\sum\limits_{v \in \mathcal{V}} {\sum\limits_{u \in {{\cal B}_v}} {\left( {\left\| {\alpha _{vu,2}^{(t+1)} - \alpha _{vu,2}^ * } \right\|_2^2 - \left\| {\alpha _{vu,2}^{(t)} - \alpha _{vu,2}^ * } \right\|_2^2} \right)} } \\
 + \eta \sum\limits_{v \in \mathcal{V}} {\sum\limits_{u \in {{\cal B}_v}} {\left\| {r_{vu,1}^{(t+1)} + (\omega _{vu}^{(t+1)} - \omega _{vu}^{(t)})} \right\|_2^2} } \\
 + \eta \sum\limits_{v \in \mathcal{V}} {\sum\limits_{u \in {{\cal B}_v}} {\left\| {r_{vu,2}^{(t+1)} - (\omega _{vu}^{(t+1)} - \omega _{vu}^{(t)})} \right\|_2^2} } \\
 + 2\eta \sum\limits_{v \in \mathcal{V}} {\sum\limits_{u \in {{\cal B}_v}} {\left\| {\omega _{vu}^{(t+1)} - \omega _{vu}^ * } \right\|_2^2} } \\
 - 2\eta \sum\limits_{v \in \mathcal{V}} {\sum\limits_{u \in {{\cal B}_v}} {\left\| {\omega _{vu}^{(t)} - \omega _{vu}^ * } \right\|_2^2} }  \le 0.
\end{array}
\end{equation}

Recall the definition of $J^{(t)}$, (\ref{eq:ConInequ1AddTran}) is equivalent to
\begin{equation}
\label{eq:ConInequ1AddTranJ}
\begin{array}{l}
{J^{(t+1)}} - {J^{(t)}} + \eta \sum\limits_{v \in \mathcal{V}} {\sum\limits_{u \in {{\cal B}_v}} {\left\| {r_{vu,1}^{(t+1)} + (\omega _{vu}^{(t+1)} - \omega _{vu}^{(t)})} \right\|_2^2} } \\
 + \eta \sum\limits_{v \in \mathcal{V}} {\sum\limits_{u \in {{\cal B}_v}} {\left\| {r_{vu,2}^{(t+1)} - (\omega _{vu}^{(t+1)} - \omega _{vu}^{(t)})} \right\|_2^2} }  \le 0.
\end{array}
\end{equation}

Since $\omega_{vu}^{(t+1)}$ minimizes $\sum\limits_{v\in\mathcal{V}} {\sum\limits_{u \in {{\cal B}_v}} {( - \alpha _{vu,1}^{(t+1)} + \alpha _{vu,2}^{(t+1)})\omega _{vu}^{}} } $, and  $\omega_{vu}^{(t)}$ minimizes $\sum\limits_{v\in\mathcal{V}} {\sum\limits_{u \in {{\cal B}_v}} {( - \alpha _{vu,1}^{(t)} + \alpha _{vu,2}^{(t)})\omega _{vu}^{}} } $, we have:
\begin{equation}
\label{eq:ConInequ1InequOmegat1}
\begin{array}{l}
\sum\limits_{v \in \mathcal{V}} {\sum\limits_{u \in {{\cal B}_v}} {{{( - \alpha _{vu,1}^{(t+1)} + \alpha _{vu,2}^{(t+1)})}^T}\omega _{vu}^{(t+1)}} } \\
 \le \sum\limits_{v \in \mathcal{V}} {\sum\limits_{u \in {{\cal B}_v}} {{{( - \alpha _{vu,1}^{(t+1)} + \alpha _{vu,2}^{(t+1)})}^T}\omega _{vu}^{(t)}} } .
\end{array}
\end{equation}
and
\begin{equation}
\label{eq:ConInequ1InequOmegat}
\begin{array}{l}
\sum\limits_{v \in \mathcal{V}} {\sum\limits_{u \in {{\cal B}_v}} {{{( - \alpha _{vu,1}^{(t)} + \alpha _{vu,2}^{(t)})}^T}\omega _{vu}^{(t)}} } \\
 \le \sum\limits_{v \in \mathcal{V}} {\sum\limits_{u \in {{\cal B}_v}} {{{( - \alpha _{vu,1}^{(t)} + \alpha _{vu,2}^{(t)})}^T}\omega _{vu}^{(t+1)}} } .
\end{array}
\end{equation}
Adding (\ref{eq:ConInequ1InequOmegat1}) and (\ref{eq:ConInequ1InequOmegat}) together, we arrive at 
\begin{equation}
\label{eq:ConInequ1InequOmegaADD}
\eta \sum\limits_{v \in \mathcal{V}} {\sum\limits_{u \in {{\cal B}_v}} {{{(r_{vu,1}^{(t+1)} - r_{vu,2}^{(t+1)})}^T}(\omega _{vu}^{(t+1)} - \omega _{vu}^{(t)})} }  \ge 0 .
\end{equation}
By unfolding the squares in (\ref{eq:ConInequ1AddTranJ}), we obtain Inequality (\ref{eq:ConInequ1}).

{
\bibliographystyle{ieeetr}
\bibliography{SecureDSVM.bib}}

\end{document}